\let\cl@chapter\relax \makeatother
\DeclareMathOperator*{\argmax}{arg\,max}
\newcommand\independent{\protect\mathpalette{\protect\independenT}{\perp}} \def\independenT#1#2{\mathrel{\rlap{$#1#2$}\mkern2mu{#1#2}}}
\newcommand{\T}{\textnormal{T}}
\newcommand{\dr}{\textnormal{d}}
 \journalname{Annals of Mathematics and Artificial Intelligence}
\begin{document}

\title{A symbolic algebra for the computation of expected utilities in multiplicative influence diagrams\thanks{ Manuele Leonelli  was funded by Capes, whilst Jim Q. Smith was partly funded by the EPRSC grant EP/K039628/1.}
}

\titlerunning{Symbolic computation of expected utilities in influence diagrams}        

\author{Manuele Leonelli         \and
        Eva Riccomagno \and Jim Q. Smith 
}


\institute{M. Leonelli \at
              Instituto de Matem\'{a}tica, Universidade Federal do Rio de Janeiro, RJ, Brazil \\
              Tel.: +5521981648148\\
              \email{manuele@dme.ufrj.br}           
           \and
           E. Riccomagno \at
              Dipartimento di Matematica, Universita' degli Studi di Genova, Via Dodecaneso 35, 16146 Genova, Italia
           \and 
           J. Q. Smith \at
           Department of Statistics, The University of Warwick,
CV47AL Coventry, UK    
}

\date{Received: date / Accepted: date}

\maketitle

\begin{abstract}
Influence diagrams provide a compact graphical representation of decision problems. Several algorithms for the quick computation of their associated expected utilities  are available in the literature. However, often they rely on a full quantification of both probabilistic uncertainties and utility values. For problems where all random variables and decision spaces are finite and discrete, here we develop a symbolic way to calculate the expected utilities of influence diagrams that does not require a full numerical representation. Within this approach expected utilities correspond to families of polynomials. After characterizing their polynomial structure, we develop an efficient symbolic algorithm for the propagation of expected utilities through the diagram and provide an implementation of this algorithm using a computer algebra system. We then characterize many of the standard manipulations of influence diagrams as transformations of polynomials. We also generalize the decision analytic framework of these diagrams by defining asymmetries as operations over the expected utility polynomials.
\keywords{ Asymmetric Decision Problems\and Computer Algebra\and Influence Diagrams \and Symbolic Inference.
}
 \subclass{68T37  }
\end{abstract}
\section{Introduction}
\label{introduction}
Decision makers (DMs) are often required  to choose in critical situations between a wide range of different alternatives. They need to consider the mutual influence of  quantifications of different types of uncertainties, the relative values of competing objectives together with the consequences of the decisions they will make. They can thus benefit from an intuitive framework which draws together these uncertainties and values so as to better understand and evaluate the full consequences of the assumptions they are making. To this end, a variety of graphical models have been developed. The most important of these are \textit{Bayesian networks} (BNs)~\cite{Pearl1988,Smith2010} and \textit{influence diagrams} (IDs)~\cite{Bielza2011,Howard1983,Jensenbook}, both of which provide an intuitive qualitative representation of the elements of the DM's problem together with  relatively fast computational tools for the calculation of, respectively, probabilities and expected utilities (EUs)~\cite{Jensen1994,Shachter1986,Smith2010}. Although only the second class of models can be used to automatically select an optimal course of action, i.e. an EU maximizing decision, both BNs and IDs are invaluable decision support tools, enabling DMs to easily investigate the effect of their inputs to an output of interest. 

Most of the algorithms for the computation of probabilities and EUs rely on a full specification of the model's parameters. Furthermore, commonly available software almost exclusively work numerically with complete elicitations. However, often in practice DMs  might not be confident about the precision of their specifications, nor have available all such values. This may lead  to non-robust decision making where the efficacy of decisions can change under small perturbations of the model's inputs. Symbolic approaches, not requiring full elicitations of the parameters, have proven useful in performing these types of input-output investigations, usually called \textit{sensitivity analyses}, both in fully inferential and decision making contexts ~\cite{Bhattacharjya2008,Bhattacharjya2010,Chan2004,Nielsen2003}. A variety of symbolic methods for both inference and sensitivity analysis are now in place for BNs~\cite{Castillo1997a,Castillo2003}. However, the development of symbolic techniques for EU computations in IDs has been largely neglected. An exception is a recent paper~\cite{Borgonovo2014} where \textit{decision network polynomials} are defined in the context of Bayesian decision problems. These are piece-wise functions made of so-called \textit{pieces}: multilinear polynomials having as indeterminates both probability and utility parameters. A new symbolic sensitivity technique is then developed in~\cite{Borgonovo2014} based on differentiation and difference operators. 

In this paper, we focus on a large class of IDs called \textit{multiplicative influence diagrams} (MIDs), which include as a special case standard IDs equipped with additive utility factorizations, and fully characterize the polynomial structure of the EU pieces (Section~\ref{influence}). We then introduce a symbolic algorithm for their computation, based on simple matrix operations (Section~\ref{algebra}), and its implementation in the computer algebra system Maple\textsuperscript{\textit{\tiny{TH}}}\footnote{Maple is a trademark of Waterloo Maple Inc.} (Appendix~\ref{maple}). Because of the simplicity of the required operations, our algorithm is shown to have computational times comparable to those of standard numerical evaluation software for graphical models (Section~\ref{sect:simulation}).  In contrast to standard software, which assumes an additive factorization between utility nodes, we also explicitly analyze cases when the more general class of multiplicative utility functions might be necessary~\cite{Keeney1974,keeney93,Smith2010}. We concentrate our study on the class of multiplicative factorizations because this provides some computational advantages over, for example, the more general class of multilinear utilities~\cite{keeney93}, whilst allowing for enough flexibility to model the DM's preferences in many real cases~\cite{French1989,Keeney1974}. This factorization turns out to be particularly efficient since it leads to a distributed  propagation of EUs as shown in Proposition~\ref{i-th}.

 The symbolic definition of the ID's probabilities and utilities in Section~\ref{influence} provides an elegant and efficient embellishment of the associated graphical representation of the decision problem, around which symbolic computations can then be carried out. In Sections~\ref{topology} and~\ref{asymmetry} standard manipulations of IDs and asymmetries are characterised on this new polynomial representation.  Importantly we demonstrate that, whilst graphical representations of asymmetries are rather more obscure than standard ID models, in our symbolic approach the imposition of asymmetries greatly simplifies the polynomial representation of the problem. The example in Section~\ref{sec:example} then outlines the insights our approach can give to DMs through the comparison of different parameters' specifications. Our symbolic approach has the great advantage in such sensitivity studies that, by exploiting the known polynomial expression of the problem, one can simply plug-in different numerical specifications and instantaneously get the EU values. In standard numerical approaches on the other hand, the evaluation algorithm needs to be run for each combination of parameters considered. This can become quickly unfeasible even for rather small problems.

\section{A review of symbolic approaches to decision making and support}
Symbolic inference and decision support techniques have already been used for the analysis of BN models. A symbolic definition of probabilities in BNs in terms of multilinear polynomials first appeared in~\cite{Castillo1995}. Since then various inferential techniques have been developed~\cite{Castillobook,Darwiche2003,Gorgen2015}. Their most demonstrably useful application is in the process of validating models through sensitivity analyses. Two main approaches are adopted in practice. The first one is based on differentiation of the probability polynomials and is useful for  the analysis of global changes of  probability distributions~\cite{Chan2001,Chan2004}. The second one concerns local changes studied via sensitivity functions~\cite{Coupe2002,Van2007}, which, because of the  assumed multilinearity, are simple linear functions of the parameters of interest. Recently, symbolic methods have been extended to asymmetric models~\cite{Gorgen2015,Leonelli2015r} where the associated polynomials might not exhibit regular multilinear structures as for BNs.

Although it is known that EUs in IDs also have a multilinear structure~\cite{Felli2004}, symbolic methodologies for such models have not been studied consistently. Only recently the robustness of decision models has been analysed from a symbolic viewpoint in~\cite{Borgonovo2014}. For the i-th available strategy, \cite{Borgonovo2014} defines the functions $u_i:\mathcal{X}\rightarrow \mathbb{R}$, where $\mathcal{X}$ is the parameter space, representing the EU of the associated strategy and called EU piece. The decision network polynomial is then defined as $\max_{i=1,\dots,M}u_i(\bm{x})$, for $M$ available strategies, and represents the expected utility of the optimal strategy for the combination of parameters $\bm{x}$.

However, the typical assumptions of an applied decision analysis about the form of the utility function, often encoded via an ID representation,  are not utilized in~\cite{Borgonovo2014} and no details on how to compute the functions $u_i$ are given there. In this paper we extend the symbolic framework of~\cite{Borgonovo2014} by  developing a distributed symbolic procedure for the computation of the EU pieces for utilities chosen in the large class of multiplicative  IDs~\cite{keeney93,Smith2010}.  We further fully characterize symbolically the functions $u_i$ of the decision problem. This enables the application of the proposed methodology to robustness studies where certain parameters are treated as unknown. In Section~\ref{sec:example} via an example we show how to exploit our definition for informing a DM about the optimization process. A full  development of such symbolic optimization techniques is  beyond the scope of this paper.

Of course the solution and investigation of both generic decision problems and influence diagrams can be performed outside of the full Bayesian symbolic paradigm and using uncertainty calculi that relax the assumption of an exact and complete probability specification. One of such proposals~\cite{De2008}, is based on imprecise probabilities and consists of mapping the evaluation of an ID into an inferential problem in credal networks~\cite{Cozman2000}, solved using multilinear programming~\cite{De2004}. The objective function of such an optimization problem can be shown to be multilinear and to share many features with our polynomial representation of EUs, although within a different domain. Because of the use of imprecise probabilities the parameters of the decision problem can be specified only partially.  

Symbolic evaluation methods have also been  introduced for discrete and finite time decision Markov processes that do not require full parameters' elicitations (e.g.~\cite{Kikuti2011}).  As an ID can always be cast as a Markov decision process, the evaluation methods originally designed for general Markov processes can be straightforwardly applied to IDs.  A different approach is taken by the so called symbolic dynamic programming: for such a technique the sample space does not need to be fully specified~\cite{Sanner2010,Zamani2012}. Again these methods have the capability of  helping the DM to discover the most critical features of the decision problem where accurate specification of inputs is most necessary. 

The methods reviewed above propose to automate decision making in a variety of frameworks and reasoning paradigms where DMs do not need to provide complete and/or exact parameters' specifications. These have proven to be successful and computationally efficient, but EU maximization is still most commonly applied within a standard probabilistic domain. Therefore, here we assume that the DM plans to behave as an EU maximizer and we will henceforth work entirely within this most standard framework.

\section{Symbolic representation of influence diagrams}
\label{influence}
In this paper, with the exception of Section~\ref{asymmetry}, we consider those Bayesian decision problems that can be represented by an ID and are usually called \textit{uniform} (or \textit{symmetric})~\cite{Koller2009,Smith2010}.   Let $n$ be a positive integer ($n\in \mathbb Z_{\geq 1}$) and $\mathbb{D}$ and $\mathbb{V}$ be a partition of  $[n]=\{1,\dots,n\}$. Let $\{Y_i:i\in\mathbb{D}\}$ be a set of controlled (or decision)\footnote{With controlled variable we mean a variable set by the DM to take a particular value.} variables and $\{Y_i:i\in\mathbb{V}\}$ a set of non-controlled (or random) variables. As in standard ID representations, the set of decision variables  is assumed to be totally ordered and the union of $\{Y_i:i\in\mathbb{V}\}$ and $\{Y_i:i\in\mathbb{D}\}$ to be totally ordered compatibly with a partial order on the random variables. Let $\preceq$ be the chosen ordering relationship.  The ordering on the $Y_i$'s is reflected by their indices, that is if $Y_i\preceq Y_j$ then $i<j$. 

For $i\in [n]$ and $r_i\in\mathbb{Z}_{\geq 1}$, let $[r_i]_0=\{0,\ldots,r_i-1\}$ and $Y_i$ take values in $\mathcal{Y}_i=[r_i]_0$. For $A\subseteq [n]$,  let the vector $\bm{Y}_{A}=(Y_i)_{i\in A}$ take values in $\bm{\mathcal{Y}}_{A}=\times_{i\in A}\mathcal{Y}_i$  and denote with $\bm{y}_A$ a generic instantiations of $\bm{Y}_A$. Examples of this notation are: the vector $\bm{Y}_{[n]}$ includes all the variables, whilst $\bm{Y}_{\mathbb{D}}$ and $\bm{Y}_{\mathbb{V}}$ are the vectors of controlled and random variables respectively.

\begin{figure}
\vspace{0.4cm}
\centerline{
\xymatrix{
*+[F]{Y_1}\ar@/^2pc/[rr]\ar[r]\ar[rd]&*+[Fo]{Y_3}\ar[dl]\ar@/^2pc/[rr]\ar[r]&*+[F]{Y_4}\ar[d]\ar[r]\ar[rd]&*+[Fo]{Y_5}\ar[r]\ar[d]&U_2\\
U_1&*+[Fo]{Y_2}\ar[u]\ar[ru]&U_3&*+[Fo]{Y_6}\ar[l]&
}
}
\caption{An MID consisting of two decision nodes, $Y_1$ and $Y_4$, four random nodes, $Y_2$, $Y_3$, $Y_5$ and $Y_6$, and three utility nodes, $U_1$, $U_2$ and $U_3$.}
\label{fig-ex}
\end{figure}
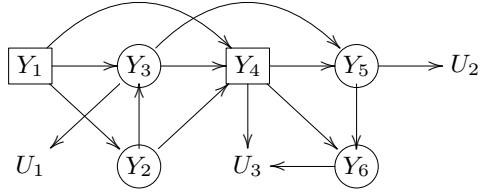

\subsection{Multiplicative influence diagrams}

We consider the class of \emph{multiplicative} IDs entertaining a multiplicative factorization over the utility nodes $\bm{U}=(U_1,\dots,U_m)^\T$ (see e.g.~\cite{keeney93,Smith2010}). For $i\in [m]$, $U_i$ is a function onto $[0,1]$ defined on a subspace $\bm{\mathcal{Y}}_{P_i}$ of $\bm{\mathcal{Y}}_{[n]}$ where $P_i\subseteq [n]$ is assumed non empty.

\begin{definition}
\label{mid}
A  \textbf{multiplicative influence diagram} (MID) $G$ consists of three components:  a directed acyclic graph (DAG) with vertex (or node) set
$V(G)= \bm{Y}_{[n]} \cup \bm{U}$,  
a transition probability function related to the random variables $\bm{Y}_{\mathbb{V}}$ and  a multiplicative factorization function related to the $\bm{U}$ nodes. 
\end{definition}

\begin{example}
 Fig.~\ref{fig-ex} presents an MID with $n=6$, $m=3$, $\mathbb{D}=\{1,4\}$, $\mathbb{V}=\{2,3,5,6\}$ and vertex set $V(G)=\{Y_1,\dots, Y_6, U_1,\dots, U_3\}$. 
There are two controlled variables, $Y_1$ and $Y_4$,  four random variables, $Y_2$, $Y_3$, $Y_5$ and $Y_6$, and three utility nodes, $U_1$, $U_2$ and $U_3$. 
We adopt the convention by which decision variables and random variables are respectively framed with squares and circles. 
All variables are binary and take values in the spaces $\mathcal{Y}_i=\{0,1\}$, $i\in [6]$. 
\end{example}

Next we describe the three components of an  MID starting from its  edge  (or arc) set $E(G)$.  For $i\in [n]$,  the parent set of $Y_i$ is the sub-vector of $\bm{Y}_{[n]}$ indexed by  $\Pi_i \subset[i-1]$.  For  $i\in [m]$,  the parent set of $U_i$ is the sub-vector  $\bm{Y}_{P_i}$ of $\bm{Y}_{[n]}$ where $P_i\subseteq [n]$ is the non empty set mentioned above and  thus each utility node has at least one parent. Furthermore any two $P_i$'s are assumed disjoint so that each component of $\bm{Y}_{[n]}$ is parent of at most one utility node. There are three types of edges in an MID:
\begin{enumerate}
	\item \label{aa} those into $\bm{U}$ vertices: 
	for $i\in [m]$, $U_i$ has no children and its parent set  $\bm{Y}_ {P_i}$ is described above; 
	\item those into $\mathbb{D}$ vertices: for $i\in\mathbb{D}$,  the parent set of $Y_i$ consists of the variables, controlled and non-controlled that are known when $Y_i$ is controlled; 
	\item those into $\mathbb{V}$ vertices: for $i\in \mathbb{V}$, 
	 the parent set of $Y_i$ is such that $Y_i$ is conditionally independent (with respect to the probability law in Definition~\ref{mid}) of the random variables preceding it given its parents  and for all instantiations of decisions preceding $Y_i$. 
	 \end{enumerate}
Recalling that $\Pi_i\subset [i-1]$,  Item (3) above can be formulated as  
 $Y_i \independent \bm{Y}_{[i-1] } \;|\; \bm{Y}_{\Pi_i}$,
where $\independent$ denotes the \textit{extended} conditional independence operator~\cite{Dawid2014}.
 This means that standard conditional independence,
namely $Y_i \independent \bm{Y}_{ [i-1] \cap \mathbb{V}} \;|\; \bm{Y}_{\Pi_i \cap\mathbb{V}}$,  
holds for all instantiations of the decision variables $\bm{Y}_{ [i-1] \cap \mathbb{D}} $
preceding $Y_i$. The transition probability function for the random vector $\bm{Y}_{\mathbb{V}}$ in Definition~\ref{mid} is given in terms of probability density as the product of  
$P_i(y_i\;|\;\bm{y}_{\Pi_i})=P(Y_i=y_i\;|\;\bm{Y}_{\Pi_i}=\bm{y}_{\Pi_i})$ 
for $i\in\mathbb{V}$. Note that $\bm{y}_{\Pi_i}$ includes instantiations of controlled variables as well as random variables. 
  \begin{example}
The edge set of the MID in Fig. \ref{fig-ex} is such that no variable is observed before controlling $Y_1$, whilst $Y_1$, $Y_2$ and $Y_3$ are observed before controlling $Y_4$ since $\Pi_4=\{1,2,3\}$.  
Furthermore its DAG implies that $Y_5\independent Y_1,Y_2\;|\;Y_3,Y_4$ and $Y_6\independent Y_1,Y_2,Y_3\;|\; Y_4,Y_5$.
The parent sets of the utility nodes are $P_1=\{3\}$, $P_2=\{5\}$ and $P_3=\{4,6\}$.
 \end{example}
 The third component of an MID is a utility function $U$ defined over $\bm{\mathcal{Y}}_{[n]}$ as 
\begin{equation}
\label{muia}
U(\bm{y}_{[n]})=\left\{
\begin{array}{ll}
\sum_{i\in[m]}k_iU_i(\bm{y}_{P_i}), & \mbox{if } h=0, \\
\sum_{I\in \mathcal{P}_0([m])}h^{n_I-1}\prod_{i\in I} k_iU_i(\bm{y}_{P_i}),& \mbox{otherwise.} 
\end{array}
\right.
\end{equation}
where 
$k_i\in (0,1)$ is a \textit{criterion weight}~\cite{keeney93}; 
as mentioned above 
$U_i$ is a function of the random and decision variables in $\bm{Y}_{P_i}$.
It gives the contribution to the utility function of  the controlled and random variables in $\bm{Y}_{P_i}$ and it does so linearly if $h=0$, i.e. the first case of equation (\ref{muia}). 
It is worthwhile recalling that the $\bm{Y}_{P_i}$'s are disjoint.
In the second case of equation (\ref{muia}) 
 $h$ is the unique non-zero solution not smaller than minus one to
\begin{equation}
\label{k}
1+h=\prod_{i\in[m]}(1+hk_i).
\end{equation} 
and 
$\mathcal{P}_{0}(\cdot)$ denotes the power set without the empty set, $n_{I}$ is the number of elements in the set $I$. 
For $h=0$, the multiplicative factorisation of an MID, $U(\bm{y}_{[n]})$, is a weighted sum of the terms $U(\bm{y}_{P_i})$: thus coinciding with the class of commonly used additive factorizations~\cite{Keeney1974}. Therefore the methodology we develop here applies to utility factorizations of additive form, or \textit{additive IDs}, as well. 
For $h\neq 0$  the function $U(\bm{y}_{[n]})$ is a linear combination of all square free products of the $U_i$'s (excluding $1$). 
The $h$ balances the weight of the interaction terms: the larger $h$ is, the bigger is the impact of high order terms. 

\begin{example}
The multiplicative utility factorization associated to the MID in Fig. \ref{fig-ex}, for $h\neq 0$ and leaving the functions' arguments implicit, can be written as
\[
U=k_1U_1+k_2U_2+k_3U_3+hk_1k_2U_1U_2+hk_1k_3U_1U_3+hk_2k_3U_2U_3+h^2k_1k_2k_3U_1U_2U_3.
\]
This expression emphasizes the generality of multiplicative utilities, since an additive utility is obtained by setting $h=0$ and is the sum of the first three terms. 
\end{example}

Item~\ref{aa} above, describing the edges into the utility nodes, extends the total order over $\bm{Y}_{[n]}$ to $V(G)$. Indeed for $i,j\in[m]$, $U_i$ succeeds $U_j$ and $i>j$ if there exists a parent of $U_i$ which succeeds all parents of $U_j$ in the order $\preceq$ over $\bm{Y}_{[n]}$: formally, if there is a $k\in P_i$ such that for every $l\in P_j$, $k>l$. For $i\in[m]$, let $j_i$ be the highest index of $P_i$ and $\mathbb{J}=\{j_1,\dots, j_m\}$. The set $\mathbb{J}$ of  the greatest parents of the utility nodes in $\preceq$ is fundamental for the Algorithm~\ref{algo} in Section~\ref{alg} because it allows for the computation of the least number of expected utilities by
processing a $U_i$ in the algorithm only when strictly necessary.  
The Maple\textsuperscript{\textit{\tiny{TH}}} function \texttt{CompJ} in Appendix~\ref{b1} computes the set $\mathbb{J}$ for a given MID.  The totally ordered sequence of $V(G)$ is called \textbf{decision sequence (DS)} of the MID $G$ and is denoted by  $S:=(Y_1,\dots,Y_{j_1},U_1,Y_{j_1+1},\dots,Y_{j_m},U_m)$. 
As in~\cite{Bhattacharjya2012}, we do not introduce utility nodes only at the end of the DS. This enables us to base the choice of optimal decisions, through the  algorithm given below,  only on the values of the relevant attributes.

\begin{example}\label{nuclearexample}
The DS associated to the MID  in Fig. \ref{fig-ex} is $(Y_1,Y_2,Y_3,U_1,Y_4,Y_5,U_2,Y_6,U_3)$ with $j_1=3$, $j_2=5$, $j_3=6$ and thus $\mathbb{J}=\{3,5,6\}$.  
\end{example}

\subsection{Evaluation of  MIDs} \label{subsec:evalMIDS}
 In this section we set the background for an efficient symbolic algorithm for  \textit{evaluating} an MID, namely for computing the expected value of equation~(\ref{muia}) for all possibile decisions $\bm{y}_{\mathbb D}\in {\mathcal\bm{ Y}}_{\mathbb D}$ and identifying a sequence of optimal decisions that maximizes it. 
We do this by exploiting the sequential structure of equation~(\ref{muia}) which by linearity is transferred to its  EU function.  
However, this can be done only for MIDs in \textit{extensive form} ~\cite{smith89}, namely those MIDs whose  topology is such that, for any index $j\in\mathbb{D}$, only variables that are known at the time the DM makes the  decision $Y_j$  have an index lower than $j$.
This is because the evaluation will output optimal decisions as functions of observed quantities only~\cite{Smith2010}. 
Extensive form is thus a property referring to the edges into the decision variables of an MID. 

\begin{definition}
\label{ei}
An MID $G$  is said to be in \textbf{extensive form} if $Y_i$ is a parent of $Y_j$, $j\in\mathbb{D}$, for all $i<j$. 
\end{definition}

\begin{example} \label{aa:example}
The MID in Fig.~\ref{fig-ex} is in extensive form since $\Pi_4=\{1,2,3\}$. If either the edge $(Y_2,Y_4)$ or $(Y_3,Y_4)$ were deleted then the MID would not be in extensive form. 
\end{example}

We first study MIDs in extensive form 
and  only in Section~\ref{topology} we consider manipulations of non extensive MIDs which turn them into extensive form. 
Without loss of generality we assume that any vertex corresponding to a variable in $\bm{Y}_{[n]}$ has at least one child. Indeed, random and controlled vertices with no children could simply be deleted from the graph without changing the outcome of the evaluation~\cite{Koller2009}.  
In Example~\ref{aa:example} the only vertices with no children are utility nodes. 

A typical way to evaluate an MID in extensive form is through a backward inductive algorithm  on the vertices of the DAG. We present a computationally  efficient  version of this algorithm, which at each step only  utilises the strictly necessary utility nodes.
The identification of the optimal policy is based on the computation of the functions $\bar{U}_i(\bm{y}_{B_i})$, $i\in[n]$, which are  formally introduced in Proposition~\ref{i-th} and each of which depends only on the variables in $\bm{Y}_{[n]}$ that are strictly required for an MID evaluation.  For $i\in [n]$, the set
\begin{equation*}
\label{Ai}
B_i=\left\{\bigcup_{\substack{k\geq i\\ k\in\mathbb{V}}}\Pi_k\bigcup\bigcup_{\substack{j\geq i\\j\in\mathbb{J}}}P_j\right\}\setminus\{i,\dots,n\},
\end{equation*}
defines the index sets of the subset of $\bm{Y}_{[n]}$ which appear as arguments of $\bar{U}_i$. The function \texttt{CompBi} in Appendix~\ref{b1}  computes the $B_i$'s given the definition of an MID. 
Specifically a set $B_i$ includes only indices smaller than $i$ that are either in the parent set of a random variable $Y_k$, $k>i$, following $Y_i$ in the DAG or in a set $P_j$ such that $U_j$ succeeds $Y_i$ in the DS of the MID.

\begin{example}
For the MID in Fig.~\ref{fig-ex}  the set 
 $B_5=\{3,4\}$  since $B_5=\{\Pi_6\cup \Pi_5\cup P_3\cup P_2\}\setminus\{5,6\}$, $\Pi_6=\{4,5\}$, $\Pi_5=\{3,4\}$, $P_3=\{4,6\}$ $P_2=\{5\}$, whilst $B_4=\{3\}$ since $B_4=\{\Pi_5\cup\Pi_6\cup P_2\cup P_3\}\setminus\{4,5,6\}=B_5\setminus \{4\}$.
\end{example}

 \begin{proposition}
\label{i-th}
The optimal decision associated to an MID yields EU equal to $\bar{U}_1(\bm{y}_{B_1})$ obtained with a backward recursion as follows. 
For $i\in[n]$ the function 
$\bar{U}_i(\bm{y}_{B_i})$ is defined according to whether $Y_i$ is a decision or a random variable as 
\begin{equation*}
\label{eq:id1}
\bar{U}_i(\bm{y}_{B_i})=\left\{
\begin{array}{lcl}
\bar{U}_{i,\mathbb{D}}(\bm{y}_{B_i}),&&\mbox{if } i\in\mathbb{D},\\
\bar{U}_{i,\mathbb{V}}(\bm{y}_{B_i}),&&\mbox{if } i\in\mathbb{V}
\end{array}
\right.
\end{equation*}
and three cases are distinguished
\begin{enumerate}
\item for $i=n$ either 
\begin{equation}  \label{cond4}
        \begin{aligned} 
	\bar{U}_{n,\mathbb{D}}(\bm{y}_{B_n}) & =\max_{\mathcal{Y}_n}k_mU_m(\bm{y}_{P_m}) 
	 \quad \text{or}  \\
	\bar{U}_{n,\mathbb{V}}(\bm{y}_{B_n}) & =\sum_{y_n\in\mathcal{Y}_n}k_mU_m(\bm{y}_{P_m})P_n(y_n\;|\;\bm{y}_{\Pi_n})
	\end{aligned}
\end{equation}  
\item for $i \in [n-1]$, $i\in \mathbb{J}$ and $i\in P_l$, then either 
\begin{equation}  \label{cond23}
        \begin{aligned} 
	\bar{U}_{i,\mathbb{D}}(\bm{y}_{B_i}) 
		&= \max_{\mathcal{Y}_i}\Big(hk_lU_l(\bm{y}_{P_l})\bar{U}_{i+1}(\bm{y}_{B_{i+1}})	
			+k_lU_l(\bm{y}_{P_l})+\bar{U}_{i+1}(\bm{y}_{B_{i+1}})\Big) \quad \text{or} \\  
	\bar{U}_{i,\mathbb{V}}(\bm{y}_{B_i}) 
		&= \sum_{y_i\in\mathcal{Y}_i}  \Big(hk_lU_l(\bm{y}_{P_l})\bar{U}_{i+1}(\bm{y}_{B_{i+1}}) + k_lU_l(\bm{y}_{P_l})  \\
			& \qquad\qquad + \bar{U}_{i+1}(\bm{y}_{B_{i+1}})\Big)P_i(y_i\;|\;\bm{y}_{\Pi_i}), 
	\end{aligned}
\end{equation} 
\item for $i\in[n-1]$ and $i\not\in\mathbb{J}$ either 
\begin{equation}  \label{cond1}
        \begin{aligned} 
        \bar{U}_{i,\mathbb{D}}(\bm{y}_{B_i}) & =\max_{\mathcal{Y}_i}\bar{U}_{i+1}(\bm{y}_{B_{i+1}}) \quad \text{or}  \\
        \bar{U}_{i,\mathbb{V}}  (\bm{y}_{B_i}) & =\sum_{y_i\in\mathcal{Y}_i}\bar{U}_{i+1}(\bm{y}_{B_{i+1}})P_i(y_i\;|\;\bm{y}_{\Pi_i}).
	\end{aligned}
\end{equation} 
\end{enumerate} 
\end{proposition}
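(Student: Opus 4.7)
The plan is to prove the proposition by backward induction on $i$ from $n+1$ down to $1$, comparing the recursion in the statement with the standard dynamic-programming value function. Define operators $\mathcal{O}_i = \max_{y_i}$ for $i \in \mathbb{D}$ and $\mathcal{O}_i = \sum_{y_i} P_i(y_i\,|\,\bm{y}_{\Pi_i})$ for $i \in \mathbb{V}$, together with $V_i(\bm{y}_{[i-1]}) := \mathcal{O}_i\cdots\mathcal{O}_n U(\bm{y}_{[n]})$ and $V_{n+1} := U$; the EU of the optimal strategy is $V_1$, so it suffices to establish $V_1 = \bar{U}_1$.

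The key is to exploit the factored form $1 + hU(\bm{y}_{[n]}) = \prod_{l \in [m]}(1 + hk_lU_l(\bm{y}_{P_l}))$, which follows from (\ref{muia}) for $h \neq 0$ and separates the utility factors already absorbed by step $i$ from those still to come. I will establish the strengthened invariant
\[
V_i(\bm{y}_{[i-1]}) \;=\; \frac{A_i(\bm{y})\bigl(1 + h\bar{U}_i(\bm{y}_{B_i})\bigr) - 1}{h}, \qquad A_i := \prod_{l\,:\,j_l<i}\bigl(1 + hk_lU_l(\bm{y}_{P_l})\bigr),
\]
with the additive analogue $V_i = \sum_{l : j_l<i}k_lU_l + \bar{U}_i$ when $h = 0$, adopting the convention $\bar{U}_{n+1} \equiv 0$. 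The base case $i = n+1$ is immediate from (\ref{muia}), while at $i = 1$ the index set $\{l : j_l < 1\}$ is empty, so $A_1 = 1$ and the invariant collapses to $V_1 = \bar{U}_1$, which is the conclusion of the proposition.

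The inductive step expands $V_i = \mathcal{O}_i V_{i+1}$ using the hypothesis at $i+1$ and splits into four subcases depending on whether $i \in \mathbb{J}$ (in which case disjointness of the $P_l$ ensures exactly one new utility $U_{l^*}$ with $j_{l^*} = i$ is absorbed) and whether $i \in \mathbb{D}$ or $i \in \mathbb{V}$. The random subcases follow from linearity of $\sum_{y_i}$ together with $\sum_{y_i}P_i = 1$. The $i \in \mathbb{J}$ subcases rest on the algebraic identity
\[
\bigl(1 + hk_{l^*}U_{l^*}\bigr)\bigl(1 + h\bar{U}_{i+1}\bigr) - 1 \;=\; h\bigl(k_{l^*}U_{l^*} + \bar{U}_{i+1} + hk_{l^*}U_{l^*}\bar{U}_{i+1}\bigr),
\]
which is precisely the combination appearing in (\ref{cond23}); equation (\ref{cond4}) is its specialisation to $i = n$ with $\bar{U}_{n+1} = 0$. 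A parallel set-theoretic induction verifies that $\bar{U}_i$ depends on $\bm{y}$ only through the coordinates indexed by $B_i$: the application of $\mathcal{O}_i$ introduces exactly the coordinates in $\Pi_i$ (when $i \in \mathbb{V}$) and $P_{l^*}$ (when $i \in \mathbb{J}$), which match the definition of $B_i$ in terms of $B_{i+1}$.

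The main obstacle I anticipate is the decision subcases when $h < 0$, because $\max_{y_i}$ does not commute with multiplication by the negative scalar $1/h$. This is resolved by first showing that the definition of $h$ via (\ref{k}) forces $h \geq -1$, which together with $k_l \in (0,1)$ and $U_l \in [0,1]$ yields the positivity $1 + hk_lU_l > 0$ for every $l$ and hence $A_i > 0$; moreover the invariant inductively forces $1 + h\bar{U}_{i+1} \in (0,1]$, so all factors inside the max are positive. The sign of $h$ then threads through two matching conversions, $\max_{y_i}(f/h) = (\min_{y_i} f)/h$ and $\min_{y_i}(1 + hg) = 1 + h\max_{y_i} g$ for $h < 0$, and these cancel to leave (\ref{cond23}) as a genuine $\max_{y_i}$ irrespective of the sign of $h$, reflecting the fact that the DM maximises $\bar{U}_i$ rather than a sign-dependent transform of it.
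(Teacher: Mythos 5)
Your proof is correct and follows the same backward-induction skeleton as the paper's (Appendix A.1): peel off $Y_n,\dots,Y_1$ one at a time, maintaining an invariant that separates the utility nodes already absorbed into $\bar{U}_i$ from those still pending, and observe that for $i\in\mathbb{J}$ exactly one new factor is absorbed via the identity $(1+hk_{l}U_{l})(1+h\bar{U}_{i+1})-1=h\bigl(hk_{l}U_{l}\bar{U}_{i+1}+k_{l}U_{l}+\bar{U}_{i+1}\bigr)$, which is exactly the combination in equation~(\ref{cond23}). The difference is in how the invariant is written: the paper carries the expanded power-set form $\hat{U}_j=\sum_{I\in\mathcal{P}_0([k])}h^{n_I-1}\prod_{i\in I}(\cdots)$ and regroups it at each $\mathbb{J}$-step to isolate $U'_{l}$, whereas you carry the equivalent factored form $V_i=\bigl(A_i(1+h\bar{U}_i)-1\bigr)/h$ with $A_i=\prod_{l:j_l<i}(1+hk_lU_l)$. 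Your version buys a cleaner inductive step and, more importantly, makes visible the one point the paper glosses over: commuting $\max_{y_i}$ with the affine map $W\mapsto\bigl(A_i(1+hW)-1\bigr)/h$ requires that this map be increasing in $W$. You resolve this via $h\geq-1$ and positivity of the factors; note that it resolves even more directly than your min/max conversions suggest, since $\bigl(A_i(1+hW)-1\bigr)/h=(A_i-1)/h+A_iW$ has slope $A_i>0$ irrespective of the sign of $h$ (this also makes your auxiliary claim that $1+h\bar{U}_{i+1}\in(0,1]$, which in fact only holds for $h<0$, unnecessary). Your parallel bookkeeping that $\bar{U}_i$ depends on $\bm{y}$ only through $\bm{y}_{B_i}$ matches the paper's explicit verification for $B_n$ and $B_{n-1}$.
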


All maxima and summations in Proposition~\ref{i-th} are over one $\mathcal{Y}_i$ sample space only.  
For example equation~(\ref{cond4}) consists of either a marginalization or a maximization over $\mathcal{Y}_n$ since $Y_n$ is a parent of $U_m$ by construction.
 The proof of Proposition~\ref{i-th} is in Appendix~\ref{proof:proof}.  Since the algorithm in Proposition \ref{i-th} consists of a backward inductive routine, its complexity is as in standard dynamic programming evaluation of influence diagrams \cite{Tatman1990}.

\begin{example}
To illustrate Proposition~\ref{i-th}, we follow the algorithm for the first three steps of the evaluation of the MID in Fig.~\ref{fig-ex}. Since the variable with the highest index, $Y_6$, is random, the backward induction procedure in Proposition~\ref{i-th} starts using the summation case of equation~(\ref{cond4}), specifically
\[
\bar{U}_6(\bm{y}_{B_6})=\bar{U}_{6,\mathbb{V}}(y_4,y_5)=\sum_{y_6\in\mathcal{Y}_6}k_3U_3(y_4,y_6)P(y_6\;|\;y_4,y_5).
\] 
Next the algorithm considers another random variable, $Y_5$. Since $5$ is the highest (and only) index in $P_2$, the backward induction is based on the summation in equation~(\ref{cond23}), which in this case equals
\[
\bar{U}_5(\bm{y}_{B_5})=\sum_{y_5\in\mathcal{Y}_5}\left(hk_2U_2(y_5)\bar{U}_6(\bm{y}_{B_6})+k_2U_2(y_5)+\bar{U}_6(\bm{y}_{B_6})\right)P(y_5\;|\;y_3,y_4).
\]
The backward induction has now reached $Y_4$, the first decision node. Although $Y_4$ is an argument of a utility function, it is not the highest index in $P_3$  and thus the algorithm uses equation~(\ref{cond1}) as
\[
\bar{U}_4(\bm{y}_{B_4})=\bar{U}_{4,\mathbb{D}}(y_3)=\max_{y_4\in\mathcal{Y}_4}\bar{U}_5(\bm{y}_{B_5}).
\]
\end{example}

We now arrange the EUs, that describe the effectiveness of the available decisions, in a vector as follows.
\begin{definition}
We define the \textbf{EU vector} $\bar{\bm{U}}_i$, $i\in[n]$, as
\begin{equation}
\label{veccond}
\bar{\bm{U}}_i=(\bar{U}_i(\bm{y}_{B_i}))_{ \bm{y}_{B_i}\in \bm{\mathcal{Y}}_{B_i}}^\T.
\end{equation}
\end{definition}

\subsection{Polynomial structure of expected utility}
\label{poli}
Generalizing work in \cite{Castillo1995,Darwiche2003}, we introduce a symbolic representation of both the probabilities and the utilities of an  MID. For $i\in \mathbb{V}$, $j\in[m]$, $y\in\mathcal{Y}_i$, $\pi\in\bm{\mathcal{Y}}_{\Pi_i}$ and  $\sigma\in \bm{\mathcal{Y}}_{P_j}$, we define the parameters 
\[
p_{iy\pi}=P(Y_i =y \;|\; \bm{Y}_{\Pi_i}=\pi) \qquad \text{ and } \qquad  \psi_{j\sigma}=U_j(\sigma).
\]
The first index of $p_{iy\pi}$ and $\psi_{j\sigma}$ refers to the random variable and utility vertex to which the parameter is related, respectively. The second index of $p_{iy\pi}$ relates to the state of the random variable, whilst the third one to the parents' instantiation. The second index of $\psi_{j\sigma}$ corresponds to the instantiation of the arguments of the utility function $U_j$. We take the indices within $\pi$ and $\sigma$ to be ordered from left to right in decreasing order, so that e.g. $p_{6101}$ for the diagram of Fig.~\ref{fig-ex} corresponds to $P(Y_6=1\;|\; Y_5=0, Y_4=1)$.  The \textit{probability} and \textit{utility vectors}  are given by  $\bm{p}_i=(p_{iy\pi})_{y\in\mathcal{Y}_i, \pi\in\bm{\mathcal{Y}}_{\Pi_i}}^\T$ and $\bm{\psi}_j=(\psi_{j\pi})_{\pi\in\bm{\mathcal{Y}}_{P_j}}^\T$, respectively. Parameters are listed within $\bm{p}_i$ and $\bm{\psi}_j$ according to a reverse lexicographic order over their indices~\cite{Cox2007a}\footnote{Let $\bm{\alpha},\bm{\beta}\in\mathbb{Z}^n$. We say that $\bm{\alpha}$ precedes $\bm{\beta}$ in reverse lexicographic order if the right-most non zero entry of $\bm{\alpha}-\bm{\beta}$ is positive.}. In contrast to ~\cite{Borgonovo2014}, we use different symbols for utilities and probabilities.  This is not only because these are formally different, but also because sensitivity methods can be tailored for these two types of indeterminates separately~\cite{Nielsen2003}.

\begin{example}
The symbolic parametrization of the MID in Fig.~\ref{fig-ex} is summarized in Table~\ref{para}. This is completed by the definition of the criterion weights $k_i$ and $h$ as in equation~(\ref{muia})-(\ref{k}). In Appendix ~\ref{b5} we report the symbolic definition of this MID using our Maple\textsuperscript{\textit{\tiny{TH}}} code. 
\end{example}

Because probabilities sum to one, for each $i$ and  $\pi$ one of the parameters $p_{iy\pi}$ can be written as one minus the sum of the others. 
Another constraint is induced by equation~(\ref{k}) on the criterion weights. However, unless otherwise indicated, we take  all the parameters to be  unconstrained.  Any unmodelled constraint can be added subsequently when investigating the geometric features of the \textit{admissible domains}~\cite{Nielsen2003},  i.e. regions of the parameters' space over which the preferred strategy does not change.

In the above parametrization, $\bar{\bm{U}}_i$ consists of a vector of polynomials  expressed in the unknown quantities $p_{ij\pi}$, $\psi_{j\sigma}$, $k_i$ and $h$, whose characteristics are specified  in  Theorem~\ref{polyexp}.
\begin{theorem}
\label{polyexp}
For an MID $G$ and $i\in [n]$, let $
c_i=\prod_{j\in B_i}r_j
$,  $U_l$ be the first utility node following $Y_i$ in the DS of $G$ and, for $l\leq j\leq m$,  $w_{ij}$ be the number of random nodes between $Y_i$ and $U_j$ (including $Y_i$)  in the DS of $G$. Then $\bar{\bm{U}}_i$ is a vector of dimension $c_i$ whose entries are  polynomials including, for $a=l,\dots,m$ and $b=l,\dots, a$, $r_{iba}$  monomials $m_{iba}$ of degree $d_{iba}$, where
\begin{equation}
\label{struct}
r_{iba}= \binom{a-l}{b-l}\prod_{j=i}^{j_a}r_j, \hspace{0.5cm}
d_{iba}= (b-l)+2(b-l+1)+w_{ia}, \hspace{0.5cm} m_{iba}=h^{b-l}m_{iba}',
\end{equation}
with $m_{iba}'$ a square-free monomial of degree $2(b-l+1)+w_{ia}$.
\end{theorem}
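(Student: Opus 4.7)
The plan is a backward induction on $i$ from $n$ down to $1$, driven by the three cases of the recursion in Proposition~\ref{i-th}. For the base case $i=n$, only $U_m$ remains (so $l=m$), and equation~(\ref{cond4}) yields either $r_n$ monomials $k_m\psi_{m\sigma}p_{n,y_n,\pi}$ (if $Y_n\in\mathbb{V}$) or $r_n$ candidate monomials $k_m\psi_{m\sigma}$ (if $Y_n\in\mathbb{D}$), each matching $r_{n,m,m}=\prod_{j=n}^{j_m}r_j=r_n$ and $d_{n,m,m}=2+w_{n,m}$ with $w_{n,m}\in\{0,1\}$ according to the type of $Y_n$.

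The inductive step splits along whether $i\in\mathbb{J}$. When $i\notin\mathbb{J}$, equation~(\ref{cond1}) applies and the base utility index $l$ is the same at steps $i$ and $i+1$. Summing or maximizing over $y_i\in\mathcal{Y}_i$ multiplies the monomial count at each $(b,a)$ by $r_i$ and raises the degree by $w_{ia}-w_{i+1,a}\in\{0,1\}$, transforming $r_{i+1,b,a}$ into $r_{iba}$ and $d_{i+1,b,a}$ into $d_{iba}$. The square-free part $m'_{iba}$ remains square-free because the new probability parameter $p_{i,y,\pi}$ carries indices that do not appear in any monomial of $\bar{U}_{i+1}$.

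The substantive case is $i\in\mathbb{J}$ with $i\in P_l$, where equation~(\ref{cond23}) combines three pieces before the sum/max: the standalone $\bar{U}_{i+1}$, the singleton $k_lU_l$, and the interaction $hk_lU_l\bar{U}_{i+1}$. By induction $\bar{U}_{i+1}$ is organized around base utility $l+1$; re-indexing to base $l$, its monomials omit $U_l$ and contribute $\binom{a-l-1}{b-l}\prod_{j=i}^{j_a}r_j$ monomials at position $(b,a)$, while the interaction multiplies each of them by $hk_l\psi_{l\sigma}$ (adding degree $3$) and contributes $\binom{a-l-1}{b-l-1}\prod_{j=i}^{j_a}r_j$ monomials at $(b,a)$. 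Pascal's identity $\binom{a-l-1}{b-l-1}+\binom{a-l-1}{b-l}=\binom{a-l}{b-l}$ yields exactly $r_{iba}$ for $(b,a)\neq(l,l)$, while the singleton term populates the boundary class $(l,l)$ with $r_{ill}=r_i$ monomials of degree $2+w_{i,l}$. Degrees are tracked by observing that multiplication by $hk_l\psi_{l\sigma}$ adds $3$ and the sum/max step adds $w_{ia}-w_{i+1,a}$, summing to the required increment from $d_{i+1,b,a}$ to $d_{iba}$.

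The main obstacle is the bookkeeping in the $i\in\mathbb{J}$ case: aligning the $(b,a)$ indexing across the shift of the base utility index from $l+1$ to $l$, identifying which of the three summands in~(\ref{cond23}) contributes to which position, and checking that the two interior contributions recombine via Pascal's identity. The hypothesis in Definition~\ref{mid} that the parent sets $P_j$ are pairwise disjoint plays a crucial role here, since it guarantees that the newly introduced $\psi_{l\sigma}$ is a fresh indeterminate with respect to $\bar{U}_{i+1}$ and hence that the square-freeness of $m'_{iba}$ is preserved under the interaction multiplication.
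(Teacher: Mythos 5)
Your argument is essentially sound, but it takes a genuinely different route from the paper's. The paper does not induct through the recursion of Proposition~\ref{i-th} at all: in Appendix~\ref{appendix1} it first unrolls that recursion into the closed form
\begin{equation*}
\bar{U}_i(\bm{y}_{B_i})=\sum_{I\in\mathcal{P}_0(\{l,\dots,m\})}h^{n_I-1}\prod_{s\in I}k_sU_s(\bm{y}_{P_s})\sum_{\bm{y}_{C_{i,I}}\in\bm{\mathcal{Y}}_{C_{i,I}}}\prod_{j\in C_{i,I}}P(y_j\mid\bm{y}_{\Pi_j}),
\end{equation*}
with $C_{i,I}$ the set of random indices between $i$ and $j_{\max I}$, and then reads off $r_{iba}$, $d_{iba}$ and the square-free structure by a single direct count: $\binom{a-l}{b-l}$ arises there as the number of subsets $I\subseteq\{l,\dots,m\}$ of cardinality $b-l+1$ with maximum $a$, and the three summands of $d_{iba}$ are matched to the power of $h$, the $k_sU_s$ factors and the probability factors. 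You instead propagate the counts through the backward recursion and recover the same binomial coefficient via Pascal's identity applied to the three terms of equation~(\ref{cond23}); your identification of which summand feeds which class (standalone into $(b'-1,a')$, interaction into $(b',a')$, singleton into $(l,l)$) and your degree bookkeeping ($+3$ for $hk_l\psi_{l\sigma}$, $+1$ per marginalisation) are correct and make the combinatorial origin of $\binom{a-l}{b-l}$ more transparent than the paper's one-line appeal to ``properties of binomial coefficients''. The paper's route buys a shorter proof and a reusable closed form; yours verifies that the recursion actually computed by Algorithm~\ref{algo} produces the claimed structure step by step.

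Two caveats. First, your assertion that ``summing or maximizing over $y_i$ multiplies the monomial count at each $(b,a)$ by $r_i$'' is only true for marginalisation: \texttt{EUMaximization} selects one row per parent configuration and, as Proposition~\ref{polyop} states, leaves the polynomial structure of the entries unchanged, so no factor $r_i$ is contributed at a decision node. This is partly a defect of the statement itself --- the literal product $\prod_{j=i}^{j_a}r_j$ in equation~(\ref{struct}) overcounts by the cardinalities of the decision variables in the range, as a comparison of $\bar{\bm{U}}_4$ with $\bar{\bm{U}}_5$ in Table~\ref{ciao} shows, and the paper's own proof in fact only justifies the product over the random indices in $C_{i,I}$ --- but your induction should state explicitly that the factor $r_i$ enters only when $i\in\mathbb{V}$, rather than build the discrepancy into the inductive step. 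Second, the square-freeness of $m'_{iba}$ does not really rest on the disjointness of the $P_j$'s: the indeterminates $k_l$ and $\psi_{l\sigma}$ are fresh simply because each $U_l$ is processed exactly once, at step $j_l$ of the backward pass; the only repeated indeterminate is $h$, which you correctly quarantine in the factor $h^{b-l}$ of $m_{iba}$.
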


 \begin{table}
\renewcommand{\arraystretch}{1.5}
\begin{center}
\begin{tabular}{|l|}
\hline
$\bm{p}_2=(p_{211},p_{201},p_{210},p_{200})^{\T}$\\
$\bm{p}_3=(p_{3111},p_{3011},p_{3101},p_{3001},p_{3110},p_{3010},p_{3100},p_{3000})^{\T}$\\
$\bm{p}_5=(p_{5111},p_{5011},p_{5101},p_{5001},p_{5110},p_{5010},p_{5100},p_{5000})^{\T}$\\
$\bm{p}_6=(p_{6111},p_{6011},p_{6101},p_{6001},p_{6110},p_{6010},p_{6100},p_{6000})^{\T}$\\
$\bm{\psi}_1=(\psi_{11},\psi_{10})^{\T}$, $\bm{\psi}_2=(\psi_{21},\psi_{20})^{\T}$, $\bm{\psi}_3=(\psi_{311},\psi_{301}, \psi_{310},\psi_{300})^{\T}$\\
\hline
\end{tabular}
\end{center}
\caption{Parameterization associated to the MID in Fig.~\ref{fig-ex}. \label{para}}
\end{table}

The proof of Theorem~\ref{polyexp} is given in Appendix~\ref{appendix1}. Equation~(\ref{struct}) defines the \emph{structure} of the polynomials $\bar{\bm{U}}_i$ of the EU. Specifically, a polynomial is specified once its coefficients and its support (i.e. monomials which form the polynomial) are known. By
 structure of a polynomial  we mean the number of monomials in its support and the number of monomials having a certain degree (sum of exponents). 
 An algorithm for computing the polynomials in Theorem~\ref{polyexp} is presented in Section~\ref{algebra}, whose operations utilise the polynomial structure of EUs. 
 If the MID has one decision node only, then the entries of the EU vector correspond to the pieces defined in~\cite{Borgonovo2014}.  

\begin{example}
For the MID of Fig.~\ref{fig-ex} the polynomial structure of the entries of $\bar{\bm{U}}_5$ can be constructed as follows. From  $B_5=\{3,4\}$ it follows that $c_5=4$. Thus, $\bar{\bm{U}}_5$ is a column vector of dimension $4$. From $U_2\equiv U_l$ it follows that
\begin{equation*}
\begin{array}{cccccc}
r_{522}=2, & r_{523}=4, &r_{533}=4,&
d_{522}=3, &d_{523}=4, &d_{533}=7,
\end{array}
\end{equation*}
using the fact that $w_{52}=1$ and $w_{53}=2$. All monomials are square-free because the index $b$ of $r_{iba}$ in Theorem~\ref{polyexp} is either equal to $l$ or $l+1$. Each entry of $\bar{\bm{U}}_5$ is a square free polynomial of degree seven consisting of ten monomials: two  of degree $3$, four of degree $4$ and four of degree $7$. 
 \end{example}

 Since additive utility factorizations can be seen as special cases of multiplicative ones by setting $h = 0$, it follows that the EU polynomials of an additive ID are square-free.

\begin{corollary} \label{corAID}
In the notation of Theorem~\ref{polyexp}, the EU $\bar{\bm{U}}_i$, $i\in [n]$, of an additive ID $G$ is a vector of dimension $c_i$ whose entries are square free polynomials of degree $w_{im}+2$ including, for $a=l,\dots, m$, $r_{ia}$ monomials of degree $w_{ia}+2$, where
$r_{ia}=\prod_{j=i}^{j_a}r_j$.
\end{corollary}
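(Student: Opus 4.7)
The plan is to derive the corollary as an immediate specialisation of Theorem~\ref{polyexp} to the case $h=0$. Since an additive utility factorisation corresponds precisely to the first branch of equation~(\ref{muia}), it is obtained from the multiplicative form by setting $h=0$, so the EU polynomials for an additive ID are exactly the polynomials in Theorem~\ref{polyexp} evaluated at $h=0$. The proof therefore reduces to tracking which monomials survive this substitution and checking how the three formulae in~(\ref{struct}) simplify.

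The first step is to observe that, in the expression $m_{iba} = h^{b-l} m_{iba}'$ given by Theorem~\ref{polyexp}, every monomial indexed by $b>l$ carries a strictly positive power of $h$ and so vanishes when $h=0$; only the monomials with $b=l$ survive. Substituting $b=l$ into~(\ref{struct}) yields a monomial count $r_{ila} = \binom{0}{0}\prod_{j=i}^{j_a} r_j = \prod_{j=i}^{j_a} r_j$, which is exactly the quantity $r_{ia}$ claimed in the corollary; a degree $d_{ila} = 0 + 2\cdot 1 + w_{ia} = w_{ia}+2$; and a monomial $m_{ila} = m_{ila}'$, which by the last assertion of Theorem~\ref{polyexp} is square-free.

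It remains to verify the declared overall degree and dimension. The dimension $c_i$ of $\bar{\bm{U}}_i$ is inherited unchanged since it does not depend on $h$. The overall polynomial degree of each entry is the maximum over $a \in \{l,\dots,m\}$ of $w_{ia}+2$; because $w_{ia}$ counts the number of random nodes between $Y_i$ and $U_a$ in the DS and is therefore non-decreasing in $a$, this maximum is attained at $a=m$, giving the announced degree $w_{im}+2$. The one point that requires some care, and the place I would expect mild bookkeeping rather than a genuine obstacle, is ensuring that the surviving monomials indexed by distinct pairs $(a, \cdot)$ do not combine after setting $h=0$; this follows because they correspond to distinct square-free products of the indeterminates $p_{iy\pi}$ and $\psi_{j\sigma}$ drawn from the disjoint parent sets $P_j$, so no collapse of monomials can occur and the square-free structure is preserved throughout.
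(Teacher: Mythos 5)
Your proposal is correct and follows essentially the same route as the paper: the paper's own proof likewise observes that the additive case corresponds to setting the exponent of $h$ to zero, i.e.\ to retaining only the $b=l$ terms of Theorem~\ref{polyexp}, from which the stated counts, degrees and square-freeness follow. Your version simply spells out the substitution $b=l$ in equation~(\ref{struct}) and the checks on dimension and overall degree more explicitly than the paper does.
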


\begin{proof}
This follows directly from Theorem~\ref{polyexp}, since an additive factorization can be derived by setting $n_I-1$, the exponent of $h$ in equation~(\ref{muia}), equal to zero. This
corresponds to fixing $b = l$ in Theorem~\ref{polyexp}.
  \end{proof}

So far we have assumed that the DM has not provided any numerical specification of the uncertainties and the values involved in the decision problem. This occurs for example if the system is defined through sample distributions of data from different experiments, where probabilities are only known with uncertainty.  But in practice sometimes  the DM is able to elicit the numerical values of some parameters. These numerical values can then simply be substituted to the corresponding probability and utility parameters in the system of polynomials constructed in Theorem~\ref{polyexp} employing e.g. a computer algebra system. In such a case the degree of the polynomials and possibly  the number of their monomials can decrease dramatically. We present in Section~\ref{sec:example} different plausible numerical specifications of  the parameters associated with the MID in Fig.~\ref{fig-ex}, and investigate how the outputs of the MID differ for the different quantifications.

\section{The symbolic algorithm}
\label{algebra}
In this section we develop an algorithm based on three operations which exploit the polynomial structure of EUs and use only linear algebra calculus.  The Maple\textsuperscript{\textit{\tiny{TH}}} code for their implementation is reported in Appendix~\ref{b3}\footnote{Some inputs of the Maple\textsuperscript{\tiny{\textit{TH}}}  functions in Appendix~\ref{b3} are different from those used in this section which are chosen to illustrate the procedure as concisely as possible.}. In contrast to other probabilistic symbolic algorithms (e.g.~\cite{Castillo1997a}), our procedure sequentially computes only monomials that are part of the EU polynomials and is thus much more efficient.
 
\subsection{A new algebra for MIDs}
\label{op}
 We need to introduce two  procedures entailing a change of dimension of probability, utility and EU vectors, named \texttt{EUDuplicationPsi} and \texttt{EUDuplicationP}. These are required in order to multiply parameters associated to compatible instantiations only, i.e. if the common conditioning variables associated to the parameters are instantiated to the same value. 
 
\begin{example}
In Algorithm~\ref{algo} we will need to compute the Schur (or element-wise) product $\circ$ between the probability vector $\bm{p}_6$ and the utility vector $\bm{\psi}_3$. However, as specified in Table~\ref{para}, $\bm{p}_6$ has length 8, whilst $\bm{\psi}_3$ has length 4. This is because $Y_5$ is a parent of $Y_6$ but not an argument of $U_3$. \texttt{EUDuplicationPsi} will then be needed to transform $\bm{\psi}_3$ to
\[\left(
\psi_{311},\psi_{301},\psi_{311},\psi_{301},\psi_{310},\psi_{300},\psi_{310},\psi_{300}
\right),
\] 
so that $\bm{p}_6\circ \bm{\psi}_3$ equals to
\begin{multline*}
\left(
\psi_{311}p_{6111},\psi_{301}p_{6011},\psi_{311}p_{6101},\psi_{301}p_{6001},\right.\\\left.\psi_{310}p_{6110},\psi_{300}p_{6010},\psi_{310}p_{6100},\psi_{300}p_{6000}
\right).
\end{multline*}
The above vector then only includes entries associated to compatible instantiations. 
\end{example}
 
 For conciseness, we detail here only \texttt{EUDuplicationPsi} and refer to Appendix~\ref{b2} for the code of both procedures. The steps of \texttt{EUDuplicationPsi} are shown in Algorithm~\ref{dupli}. For a vector $\bm{\psi}$, let $\bm{\psi}^{s,t}$ be the subvector of $\bm{\psi}$ including the entries from $s\cdot (t-1)+1$ to $s\cdot t$, for suitable $s,t\in\mathbb{Z}_{\geq 1}$. 
For $i\in[n{-1}]$ and $j\in[m]$, the procedure takes 7 elements as input: an EU $\bar{\bm{U}}_{i+1}$; the utility vector associated to the utility node preceding $Y_{i+1}$, $\bm{\psi}_j$; their dimensions, $c_{i+1}$ and $b_{j}$; the sets $B_{i+1}$ and $P_j$; the dimensions of all the probability vectors of the MID, $\bm{r}=(r_1,\dots, r_n)^\T$. 

\scalebox{0.85}{\parbox{\linewidth}{
\begin{spacing}{1.15}
\begin{pseudocode}[ruled]{EUDuplicationPsi}{\bar{\bm{U}}_{i+1}, \bm{\psi}_j, B_{i+1}, P_j,\bm{r}, c_{i+1}, b_j}
\FOR k \GETS i \DOWNTO 1  \DO \BEGIN 
\IF k\in \{\{B_{i+1}\cup P_{j}\}\setminus \{B_{i+1}\cap P_j\}\}\THEN \BEGIN
w_k=\prod_{l=k+1}^j \mathbbm{1}_{\{l\in \{B_{i+1}\cup  P_j\}\}}(r_l) \\
\IF k\in B_{i+1} \THEN \BEGIN
\bm{\psi}_j=\left(\begin{array}{ccc}
\underbrace{\begin{array}{ccc}
\bm{\psi}_j^{w_k,1}&\cdots &\bm{\psi}_j^{w_k,1}
\end{array}}_{\text{$r_k$ times}}
&\cdots&\underbrace{\begin{array}{ccc}
\bm{\psi}_j^{w_k,c_j/w_k}&\cdots &\bm{\psi}_j^{w_k,c_j/w_k}
\end{array}}_{\text{$r_k$ times}}
\end{array}\right) 
\END
\ELSEIF  k\in P_j  \THEN \BEGIN
\bar{\bm{U}}_{i+1}=\left(\begin{array}{ccc}
\underbrace{\begin{array}{ccc}
\bar{\bm{U}}_{i+1}^{w_k,1}&\cdots &\bar{\bm{U}}_{i+1}^{w_k,1}
\end{array}}_{\text{$r_k$ times}}
&\cdots&\underbrace{\begin{array}{ccc}
\bar{\bm{U}}_{i+1}^{w_k,c_i/w_k}&\cdots &\bar{\bm{U}}_{i+1}^{w_k,c_i/w_k}
\end{array}}_{\text{$r_k$ times}}
\end{array}\right)
\END 
\END
\END \\
\RETURN{ \bar{\bm{U}}_{i+1}, \bm{\psi}_j} 
\label{dupli}
\end{pseudocode}
\end{spacing}
} }

For all indices smaller than $i$ and not in $B_{i+1}\cap P_j$,  Algorithm~\ref{dupli} computes a positive integer number $w_k$ equal to the product of the dimension of the probability vectors with index bigger than $k$ belonging to $B_{i+1} \cup P_j$. The index $k$ is either in $B_{i+1}$ or in $P_j$. When $k\in B_{i+1}$, each block of $w_k$ rows of $\bm{\psi}_j$ is consecutively duplicated $r_k-1$ times.

The first of the three operations we introduce is \texttt{EUMultiSum}, which computes a weighted multilinear sum between a utility vector and an EU.  In the algorithm of Section~\ref{alg}, an \texttt{EUMultiSum} operation is associated to every utility vertex of the MID. This operation is required to formally assess the impact of a utility vertex to the overall EU and corresponds to a symbolic version of the sums in equation~(\ref{cond23}). Let  $P=\{P_1,\dots, P_m\}$. 

\begin{definition}[EUMultiSum]
\label{EUMultiSum}
For $i\in [n]$, let $\bar{\bm{U}}_{i+1}$ be an EU vector and $\bm{\psi}_j$ the utility vector of  node $U_j$, $j\in [m]$, succeeding $Y_i$ in the DS. The \texttt{EUMultiSum}, $+^{EU}$, between $\bar{\bm{U}}_{i+1}$ and $\bm{\psi}_j$ is defined as
\begin{enumerate}
\item $\bar{\bm{U}}_{i+1}',\bm{\psi}_j'\longleftarrow$\texttt{EUDuplicationPsi}($\bar{\bm{U}}_{i+1}$, $\bm{\psi}_j$, $B_{i+1}$, $P_j$, $\bm{r}$, $c_{i+1}$, $b_j$);
\item$h\cdot k_j\cdot(\bar{\bm{U}}_{i+1}'\;\circ\;\bm{\psi}_j')\;+\;k_j\cdot\bm{\psi}_j'\;+\;\bar{\bm{U}}_{i+1}'$, where $\circ$ and  $\cdot$ denote respectively the Schur (or element-wise) and the scalar products.
\end{enumerate}
\end{definition}
 
The second operation, \texttt{EUMarginalization} is applied to any random vertex of the MID. This operation is the symbolic equivalent of marginalizations (sums) $\sum_{y_i\in\mathcal{Y}_i}$ in Proposition~\ref{i-th}, often called variable elimination in the literature~\cite{Shachter1986}.
\begin{definition}[EUMarginalization]
\label{EU-Marginalization}
For $i\in\mathbb{V}$, let $\bar{\bm{U}}_{i+1}$ be an EU vector and $\bm{p}_i$ a probability vector. The \texttt{EUMarginalization}, $\Sigma^{EU}$, between $\bar{\bm{U}}_{i+1}$ and $\bm{p}_i$ is defined as
\begin{enumerate}
\item $\bar{\bm{U}}_{i+1}',\bm{p}_i'\longleftarrow$\texttt{EUDuplicationP}($\bar{\bm{U}}_{i+1}$, $\bm{p}_i$, $\Pi_i$, $P$, $\bm{r}$, $B_{i+1}$, $\mathbb{J}$);
\item $I_{i,\mathbb{V}}\times (\bar{\bm{U}}_{i+1}'\circ\bm{p}_i')$, where $\times$ is the standard matrix product and $I_{i,\mathbb{V}}$ is a matrix with $c_{i+1}s_i/r_i\in\mathbb{Z}_{\geq 1}$\footnote{This is so since $c_{i+1}=r_ia_{i+1}$, for an $a_{i+1}\in\mathbb{Z}_{\geq 1}$.} rows and $c_{i+1}s_i$ columns defined as 
\[
I_{i,\mathbb{V}}=\left(\begin{array}{cccc}
\left(
\begin{array}{cccc}
\bm{1}&\bm{0}&\cdots&\bm{0}
\end{array}
\right)
&\left(
\begin{array}{cccc}
\bm{0}&\bm{1}&\cdots&\bm{0}
\end{array}
\right)&
\cdots&
\left(
\begin{array}{cccc}
\bm{0}&\bm{0}&\cdots&\bm{1}
\end{array}
\right)
\end{array}
\right)^\T
\]
where $\bm{1}$ and $\bm{0}$ denote row vectors of dimension $r_i$ with all entries equal to one and zero respectively and $s_i=\prod_{k\in \{\Pi_i\setminus B_{i+1}\}}r_k$.
\end{enumerate}
\end{definition}

The last operation is a selection of a decision policy $y_i\in\mathcal{Y}_i$ in $\bar{\bm{U}}_{i+1}$, $i\in \mathbb{D}$, for every element of $\bm{\mathcal{Y}}_{\Pi(i)}$.

\begin{definition}[EUMaximization]
\label{EU-Maximization}
For $i\in \mathbb{D}$, let $\bar{\bm{U}}_{i+1}$ be an EU vector. An \texttt{EUMaximization} over $\mathcal{Y}_i$, $\max^{EU}_{\mathcal{Y}_i}$,  is defined by the following steps: 
 \begin{enumerate}
\item select a $y_i^*(\pi)\in \mathcal{Y}_i$, for $\pi\in\bm{\mathcal{Y}}_{\Pi(i)}$;
\item $I_{i,\mathbb{D}}\times \bar{\bm{U}}_{i+1}$, where  $I_{i,\mathbb{D}}$ is a matrix with $c_{i+1}/r_{i}\in\mathbb{Z}_{\geq 1}$ rows and $c_{i+1}$ columns  defined as
\[
I_{i,\mathbb{D}}=\left(\begin{array}{cccc}
\left(
\begin{array}{cccc}
\bm{e}_{\bm{y}_i^*(1)}&\bm{0}&\cdots&\bm{0}
\end{array}
\right)
&\left(
\begin{array}{cccc}
\bm{0}&\bm{e}_{\bm{y}^*_i(2)}&\cdots&\bm{0}
\end{array}
\right)&
\cdots&
\left(
\begin{array}{cccc}
\bm{0}&\bm{0}&\cdots&\bm{e}_{\bm{y}^*_i(c_{i+1}/r_{i})}
\end{array}
\right)
\end{array}
\right)^\T
\]
where $\bm{e}_{\bm{y}^*_i(\pi)}$, $\pi\in[ c_{i+1}/r_{i}]$, is a row vector of dimension $r_{i}$ whose entries are all zero but the one in position $y_i^*(\pi)$, which is equal to one.
\end{enumerate} 
\end{definition}
Using the terminology of  ~\cite{Bhattacharjya2010} and~\cite{Howard1968}, \texttt{EUMaximization} finds its natural application in \textit{open-loop} analyses, where one policy only is under scrutiny. In this case, the DM can simply fix the decision of interest and \texttt{EUMaximization} drops the polynomials associated to non-selected policies.\footnote{The Maple\textsuperscript{\tiny{\textit{TH}}} function \texttt{EUMaximization} in Appendix~\ref{b3} currently calls a subfunction \texttt{Maximize}, which randomly picks decisions. However, this can be modified to take into account a fixed policy given as input.}  Nevertheless, in \textit{closed-loop} analyses, where policies can vary, and in standard evaluation methods the first item of Definition~\ref{EU-Maximization} is critical for \texttt{EUMaximization}. It is not within the scope of this paper to present a methodology to identify EU maximizing decisions. However, within our symbolic approach polynomial optimization and semi-algebraic methods can be used to  guide the optimization process~\cite{parrilo2000}. In Section~\ref{sec:example} we present an example of the insights that the symbolic definition gives during the maximization step of an evaluation. 

Since all our operations simply consists of standard and matrix products, the  complexity of the algorithm for the symbolic computation of EUs we introduce below can be deduced by establishing the number of multiplications associated to each EU-operation. Formally, an \texttt{EUMultiSum} consists of $c_{i+1}s_i(2+m_{i+1})+1$ multiplications, where $m_{i+1}$ is the number of monomials in each entry of $\bar{\bm{U}}_{i+1}$ and can be deduced from Theorem \ref{polyexp}. An \texttt{EUMarginalization} consists of $c_{i+1}s_im_{i+1}+(c_{i+1}s_i)^2/r_i$ multiplications (without considering the sparsity of the matrix $I_{i,\mathbb{V}}$. Exploiting the structure of the matrix $I_{i,\mathbb{D}}$, an \texttt{EUMaximization} can be coded so that it does not perform any multiplication.

\subsection{Polynomial interpretation of the operations}
\label{polyopsec}
Each of the above three operations changes the EU vectors and their entries in a specific way we  formalize in Proposition~\ref{polyop}.

\begin{proposition}
\label{polyop}
For $i\in [n{-1}]$, let $\bar{\bm{U}}_{i+1}$ be an EU vector whose entries have the polynomial structure of equation~(\ref{struct}) and let $U_j$ be the vertex preceding $Y_{i+1}$ in the DS. Then in the notation of Theorem~\ref{polyexp}
\begin{itemize}
\item $\max^{EU}_{\mathcal{Y}_i}\bar{\bm{U}}_{i+1}$ has dimension $c_{i+1}/r_i\in\mathbb{Z}_{\geq 1}$ and its entries do not change polynomial structure;
\item $\bar{\bm{U}}_{i+1} +^{EU} \bm{\psi}_{j}$ has dimension $c_{i+1}t_{i}$, where $t_{i}=\prod_{k\in \{P_j\setminus B_{i+1}\}}r_k$, and each of its entries consists of $r_{(i+1)ba}$ monomials of degree $d_{(i+1)ba}$, $r_{(i+1)ba}$ monomials of degree $d_{(i+1)ba}+3$ and one monomial of degree 2;
\item $\bar{\bm{U}}_{i+1}\Sigma^{EU}\bm{p}_i$ has dimension $c_{i+1}s_i/r_i$, where $s_i=\prod_{k\in \{\Pi_i\setminus B_{i+1}\}}r_k$, and each of its entries consists of $r_ir_{(i+1)ba}$ monomials of degree $d_{(i+1)ba}+1$.
\end{itemize} 
\end{proposition}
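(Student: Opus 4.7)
The plan is to verify each of the three bullets directly from Definitions~\ref{EUMultiSum}--\ref{EU-Maximization}, tracking the dimension of the output vector and the monomials in each entry by combining the structural description of $\bar{\bm{U}}_{i+1}$ given by Theorem~\ref{polyexp} with elementary polynomial arithmetic. Throughout, the key observation that prevents any monomial cancellation is that the parameters introduced by each operation---the $\psi_j$'s in \texttt{EUMultiSum} and the $p_{iy\pi}$'s in \texttt{EUMarginalization}---are indeterminates that do not occur in $\bar{\bm{U}}_{i+1}$. This follows because $U_j$ precedes $Y_{i+1}$ in the DS and has therefore not yet been consumed in the backward recursion, and because no parameter $p_{iy\pi}$ has been introduced prior to step $i$.

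The claim for \texttt{EUMaximization} is immediate. Inspecting Definition~\ref{EU-Maximization}, the matrix $I_{i,\mathbb{D}}$ has exactly one nonzero entry in each of its $c_{i+1}/r_i$ rows, picking a single component out of a length-$r_i$ block of $\bar{\bm{U}}_{i+1}$. The product $I_{i,\mathbb{D}}\times\bar{\bm{U}}_{i+1}$ therefore has dimension $c_{i+1}/r_i$ and its entries are literal copies of entries of $\bar{\bm{U}}_{i+1}$, so the triple $(r_{(i+1)ba},\,d_{(i+1)ba},\,m_{(i+1)ba})$ is preserved verbatim.

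For \texttt{EUMultiSum} I would first read off the dimension from Algorithm~\ref{dupli}: \texttt{EUDuplicationPsi} lifts both $\bar{\bm{U}}_{i+1}$ and $\bm{\psi}_j$ to a common length by duplicating blocks along every index in $(B_{i+1}\cup P_j)\setminus(B_{i+1}\cap P_j)$, and the indices in $P_j\setminus B_{i+1}$ contribute precisely the extra factor $t_i=\prod_{k\in P_j\setminus B_{i+1}}r_k$ on top of $c_{i+1}$. I would then examine the three summands of $hk_j(\bar{\bm{U}}_{i+1}'\circ\bm{\psi}_j')+k_j\bm{\psi}_j'+\bar{\bm{U}}_{i+1}'$ entry-wise. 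Because $\psi_j$ does not appear in $\bar{\bm{U}}_{i+1}$, the first summand multiplies each of the $r_{(i+1)ba}$ monomials of degree $d_{(i+1)ba}$ by $hk_j\psi_j$, preserving their count and raising the degree by exactly $3$; the second summand contributes a single monomial of degree $2$, namely $k_j\psi_j$; the third summand inherits the structure of $\bar{\bm{U}}_{i+1}'$ unchanged. Since the three families share no common monomial, summing them yields exactly the structure stated.

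For \texttt{EUMarginalization}, the duplication routine \texttt{EUDuplicationP} expands both $\bar{\bm{U}}_{i+1}$ and $\bm{p}_i$ to the common length $c_{i+1}s_i$, the extra factor $s_i=\prod_{k\in\Pi_i\setminus B_{i+1}}r_k$ accounting for the parents of $Y_i$ not yet represented in $B_{i+1}$. The Schur product $\bar{\bm{U}}_{i+1}'\circ\bm{p}_i'$ adjoins a fresh parameter $p_{iy\pi}$ of degree $1$ to every monomial, so each entry retains $r_{(i+1)ba}$ monomials but of degree $d_{(i+1)ba}+1$. The matrix $I_{i,\mathbb{V}}$ then sums, in each of its $c_{i+1}s_i/r_i$ output rows, exactly $r_i$ of these entries---one for each value $y_i\in\mathcal{Y}_i$. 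The probability parameters $p_{i,0,\pi},\ldots,p_{i,r_i-1,\pi}$ attached to the $r_i$ summed terms are distinct indeterminates, so the corresponding monomials remain pairwise distinct and the count multiplies by $r_i$ with no change in degree, giving the claimed $r_ir_{(i+1)ba}$ monomials of degree $d_{(i+1)ba}+1$. The main obstacle throughout is the careful bookkeeping inside the two duplication routines that confirms the factors $t_i$ and $s_i$ enumerate exactly the indices added to $B_{i+1}$; once the non-cancellation observation above is in hand, this reduces to a direct cross-check of Algorithm~\ref{dupli} and its $\bm{p}_i$ counterpart against the definition of $B_i$.
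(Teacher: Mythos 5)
Your proof is correct and follows the same route the paper takes: the paper simply asserts that Proposition~\ref{polyop} ``directly follows from the definition of the above three operations,'' and your argument is a careful, fully written-out verification of exactly that claim, with the useful added observation that no monomial cancellation can occur because the indeterminates $\psi_{j\sigma}$ and $p_{iy\pi}$ introduced at step $i$ do not yet appear in $\bar{\bm{U}}_{i+1}$. Nothing further is needed.
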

This result directly follows from the definition of the above three operations. An illustration of Proposition~\ref{polyop} is given in Example~\ref{esempio} below.

\subsection{An algorithm for the computation of an MID's expected utilities} \label{alg}
The algorithm for the computation of an  MID's EUs is given in Algorithm~\ref{algo}. It receives as input the DS of the MID, $S$, the sets $\mathbb{J}$, $\mathbb{V}$ and $\mathbb{D}$, and the vectors $\bm{p}=(\bm{p}_1,\dots,\bm{p}_n)^\T$, $\bm{\psi}=(\bm{\psi}_1,\dots,\bm{\psi}_m)^\T$ and $\bm{k}=(k_1,\dots, k_m, h)^\T$. The algorithm corresponds to a symbolic version of the backward induction procedure working over the elements of the DS explicated in Proposition~\ref{i-th}. At each inductive step, a utility vertex is considered together with the variable that precedes it in the DS.

\begin{center} 
\scalebox{0.85}{\parbox{\linewidth}{
\begin{center}
\begin{spacing}{1.15}
\vspace{-0.5cm}
\begin{pseudocode}[ruled]{SymbolicExpectedUtility}{\mathbb{J},S,\bm{p},\bm{\psi}, \bm{k},\mathbb{V},\mathbb{D}}
\label{algo}
\bar{\bm{U}}_{n+1}=(0)\hspace{8.5cm}(1)\\
\FOR k \GETS n \DOWNTO 1 \hspace{7.015cm}(2)\DO \BEGIN 
\FOR l \GETS m \DOWNTO 1 \hspace{5.835cm}(3)\DO \BEGIN
\IF k=j_l \hspace{6.64cm} (4)\THEN \BEGIN
\IF k\in \mathbb{D} \hspace{5.17cm}(5)\THEN\BEGIN
\bar{\bm{U}}_{k}=\max^{EU}_{\mathcal{Y}_k}(\bar{\bm{U}}_{k+1}+^{EU}\bm{\psi}_l)\hspace{0.62cm}(6)
\END
\ELSE \BEGIN
\bar{\bm{U}}_k=\bm{p}_k\;\Sigma^{EU}\;(\bar{\bm{U}}_{k+1}+^{EU}\bm{\psi}_{l}) \hspace{0.55cm}(7)
\END
\END
\ELSEIF k\in \mathbb{D} \hspace{5.67cm}(8)\THEN \BEGIN
\bar{\bm{U}}_k=\max^{EU}_{\mathcal{Y}_k}\bar{\bm{U}}_{k+1}\hspace{3.59cm}(9)
\END
\ELSE \BEGIN
\bar{\bm{U}}_k=\bm{p}_k\;\Sigma^{EU}_{\mathcal{Y}_k}\;\bar{\bm{U}}_{k+1}\hspace{3.41cm}(10)
\END
\END
\END\\
\RETURN{ \bar{\bm{U}}_{1}} \hspace{8.12cm} (11)
\label{algo1}
\end{pseudocode}
\vspace{-0.5cm}
\end{spacing}
\end{center}
}}\\
\end{center}

In line (1) the EU $\bar{\bm{U}}_{n+1}$ is initialized to $(0)$, namely a vector of dimension one including a zero.   
Lines (2) and (3) index a reverse loop over the indices of the variables and the utility vertices respectively (starting from $n$ and $m$). If the current index corresponds to a variable preceding a utility vertex in the DS (line 4), then the algorithm jumps to lines (5)-(7). Otherwise it jumps to lines (8)-(10). In the former case, the algorithm computes, depending on whether or not the variable is controlled (line 5), either an \texttt{EUMaximization} over $\mathcal{Y}_k$ (line 6) or an \texttt{EUMarginalization} (line 7) with $\bm{p}_k$,  jointly to an \texttt{EUMultiSum} with $\bm{\psi}_l$. In the other case, \texttt{EUMaximization} and \texttt{EUMarginalization} operations are performed without \texttt{EUMultiSum}. The Maple\textsuperscript{\tiny{\textit{TH}}}
function \texttt{SymbolicExpectedUtility}  in Appendix~\ref{b4} is an implementation of  Algorithm~\ref{algo}.

\begin{example}
\label{esempio}
For the MID in Fig.~\ref{fig-ex} the \texttt{SymbolicExpectedUtility} function first considers  the random vertex $Y_6$ which precedes the utility vertex $U_3$ and therefore first calls the \texttt{EUMultiSum} function. For this  MID
\[
\begin{array}{llll}
 P_3=\{4,6\}, & t_6=4,  &
 \Pi_6=\{4,5\}, & s_6=2.
\end{array}
\]
Thus, first $\bar{\bm{U}}_7$ is replicated four times (since $t_6=4$) via \texttt{EUDuplicationPsi} and 
\begin{equation}
\label{ms3}
\bar{\bm{U}}_7+^{EU}\bm{\psi}_3=\left(
\begin{array}{cccc}
k_3\psi_{11}&
k_3\psi_{01}&
k_3\psi_{10}&
k_{3}\psi_{00}
\end{array}
\right)^\T.
\end{equation}
Then, the rhs of equation~(\ref{ms3}) is duplicated via \texttt{EUDuplicationP} (as $s_6=2$) and  
\begin{equation}
\label{eq:ciao}
\bar{\bm{U}}_6=I_{6,\mathbb{V}} \times \bar{\bm{U}}_6'\circ \bm{p}_6= \left(k_3\psi_{31j}p_{61ij}+k_3\psi_{30j}p_{60ij}\right)^\T_{i,j=0,1},
\end{equation}
where $\bar{\bm{U}}_6'$ is equal to the duplicated version of the rhs of equation~(\ref{ms3}).
The vector $\bar{\bm{U}}_6$ has dimension four and its entries include two monomials of degree $3$. Since the random vertex $Y_5$ is the unique parent of $U_2$ the \texttt{SymbolicExpectedUtility} function follows the same steps as before. \texttt{EUMultiSum} is called and
\begin{equation}
\label{u6p}
\bar{\bm{U}}_5'\triangleq\bar{\bm{U}}_6+^{EU}\bm{\psi}_2= (h\cdot\bar{\bm{U}}_6+1)\cdot k_2\circ \left(
\begin{array}{cccc}
\psi_{21}&
\psi_{20}&
\psi_{21}&
\psi_{20}
\end{array}
\right)^\T+\bar{\bm{U}}_6   .   
\end{equation}
The polynomial $\bar{\bm{U}}_5'$ is the sum of two monomials of degree $3$ inherited from $\bar{\bm{U}}_6$, of two monomials of degree $6$ (from the first term on the rhs of equation~(\ref{u6p})) and one monomial of degree $2$ (from the last term on the rhs of equation~(\ref{u6p})). Its dimension is equal to four since $c_6=4$ and $s_5=0$ (i.e. no \texttt{EUDuplicationPsi} is required). 
Thus, \texttt{EUMultiSum} manipulates the EU vector according to Proposition~\ref{polyop}. The \texttt{EUMarginalization} function computes
$
\bar{\bm{U}}_5=
I_{5,\mathbb{V}}\times
((\begin{array}{cc}
\bar{\bm{U}}_5'&
\bar{\bm{U}}_5'
\end{array}
)^\T\circ
\bm{p}_5)
$. Each entry of $\bar{\bm{U}}_5$ has twice the number of monomials of the entries of $\bar{\bm{U}}_5'$ and each  monomial of $\bar{\bm{U}}_5$ has degree $d+1$, where $d$ is the degree of each monomial of $\bar{\bm{U}}_5'$ (whose entries are homogeneous polynomials). These vectors also have the same dimension since $t_5=2$ and $r_5=2$. Thus, this \texttt{EUMarginalization} changes the EU vector according to Proposition~\ref{polyop}. The entry $\bar{U}_5(y_3,y_4)$, with $y_3,y_4=0,1$, of this EU  can be shown to be equal to the sum of the terms in Table~\ref{table}.

\begin{table}
\renewcommand{\arraystretch}{1.25}
\begin{center}
\resizebox{\columnwidth}{!}{
\begin{tabular}{|c|}
\hline
$k_2(\psi_{21}p_{51y_4y_3}+\psi_{20}p_{50y_4y_3})$\\
$k_3(\psi_{31y_4}p_{611y_4}+\psi_{30y_4}p_{601y_4})p_{51y_4y_3}+k_3(\psi_{31y_4}p_{610y_4}+\psi_{30y_4}p_{600y_4})p_{50y_4y_3}$\\
$hk_2k_3((\psi_{31y_4}p_{610y_4}+\psi_{30y_4}p_{600y_4})\psi_{20}p_{50y_4y_3}+(\psi_{31y_4}p_{611y_4}+\psi_{30y_4}p_{601y_4})\psi_{21}p_{51y_4y_3})$\\
\hline
\end{tabular}}
\end{center}
\caption{The utility funtion $\bar{\bm{U}}_5$ is the sum of the three polynomials in this table. \label{table}}
\end{table}

The algorithm then considers the controlled variable $Y_4$. Since $4\not\in \mathbb{J}$, $Y_4$ is not the argument of a utility function with the highest index and therefore the algorithm calls the \texttt{EUMaximization} function. Suppose the DM decides to fix $Y_4=1$ when $Y_3=1$ and $Y_4=0$ when $Y_3=0$.  Then \texttt{EUMaximization} returns $\bar{\bm{U}}_4=I_{4,\mathbb{D}}\times \bar{\bm{U}}_5$, where $I_{4,\mathbb{D}}$ is a $2\times 4$ matrix with ones in positions $(1,1)$ and $(2,4)$ and zeros otherwise.
Proposition~\ref{polyop} is respected since the entries of $\bar{\bm{U}}_4$ have the same polynomial structure of those of $\bar{\bm{U}}_5$ and $\bar{\bm{U}}_4$ has dimension $2$.

The \texttt{SymbolicExpectedUtility} function then applies in sequence the operations defined in  Section~\ref{op}. For the MID in Fig.~\ref{fig-ex} this sequentially computes the following quantities, assuming the DM fixed $Y_1=1$
\[
\begin{array}{lll}
\bar{\bm{U}}_3'=h\cdot k_1\cdot \bar{\bm{U}}_4\circ\bm{\psi}_1
+\bar{\bm{U}}_4+ k_1\cdot 
\bm{\psi}_1,
&& \bar{\bm{U}}_3 = I_{3,\mathbb{V}}\times \left(\left(
\begin{array}{cccc}
\bar{\bm{U}}_3'&
\bar{\bm{U}}_3'&
\bar{\bm{U}}_3'&
\bar{\bm{U}}_3'
\end{array}
\right)^\T
\circ\bm{p}_3\right),\\
\bar{\bm{U}}_2=I_{2,\mathbb{V}}\times
\left(\bar{\bm{U}}_3\circ
\bm{p}_2
\right),
&&\bar{\bm{U}}_1=\left(
\begin{array}{cc}
1&0
\end{array}
\right)\times 
\bar{\bm{U}}_2  .   
\end{array}
\]
\end{example}

The overall complexity of the algorithm can be formally deduced by counting the number of multiplications it involves. Given the number of such products for each of our EUoperations, the overall number of operations of Algorithm 4.2 is the sum of the multiplications of its operations and will depend on the topology of the ID network. An empirical study of the efficiency of our implementation is given below.
  
\subsection{Simulation study} \label{sect:simulation}
\begin{table}
\begin{center}
\begin{tabular}{|c|c|c|c|c|c|c|}
\hline
Net. & Free par. & \# $\mathbb{V}$ &\# $\mathbb{D}$ & $m$& \# $E(G)^*$ & Avg. indegree\\
\hline
A & 44 &4 & 2 &3 &10 &1.556\\
\hline 
B&96 & 6 & 5 & 5& 23 &1.875\\
\hline
C & 192 & 9 & 6 & 6 &33 &2.286\\
\hline
D & 252 & 13 & 7 & 8 & 46 & 2.429\\
\hline
E & 356 & 17 & 8 & 9 & 62 & 2.412\\
\hline
\end{tabular}
\end{center}
\caption{Summaries of the IDs considered in the simulation study: Net. - ID identifier; Free par. - number of free parameters; \# $\mathbb{V}$ - number of random nodes; \# $\mathbb{D}$ - number of decision nodes; $m$ - number of utility nodes; \# $E(G)^*$ - number of edges without those into decision nodes; Avg. indegree - average number of edges directed into vertices (without decision nodes). \label{tablesimul}}
\end{table}

To investigate the complexity of the symbolic algorithm in Section~\ref{alg}, we perform a simulation study comprising of 5 IDs, whose features are summarized in Table~\ref{tablesimul}  all with binary variables. We first produced a full symbolic definition of utilities and probabilities and then run our symbolic algorithm for both multiplicative and additive utility factorizations in Maple\textsuperscript{\textit{\tiny{TH}}}. This gives as output the EU vectors $\bar{U}_i$ associated to every random and decision nodes of the IDs. We also built the same networks using the GeNIe Modeler software of \lq\lq{B}ayesFusion, LLC\rq\rq (freely available for academics at \textit{http://www.bayesfusion.com}), which embeds numerical evaluation techniques for IDs. After building the networks in GeNIe, we specified numerical values for the probabilities and utilities and then ran the evaluation algorithm. It is important to highlight that GeNIe considers only additive factorizations between utility nodes.

The results of the study are summarized in Table~\ref{study}. Whilst the memory allocated in Maple\textsuperscript{\textit{\tiny{TH}}} is almost identical for IDs with multiplicative and additive utility factorizations, the computation time as well as the number of monomials is much larger for MIDs. Comparing the computation times of GeNIe with those in Maple\textsuperscript{\textit{\tiny{TH}}}, we notice that whilst these are of the same magnitude for smaller IDs, for larger networks GeNIe appears to become significantly slower.  However we underline that the two softwares produce different outputs: expected utility vectors with polynomial entries in Maple\textsuperscript{\textit{\tiny{TH}}} and numerical evaluation of the ID in GeNIe.

\begin{table}
\begin{center}
\begin{tabular}{|c|c|c|c|c|c|c|c|}
\hline
&\multicolumn{3}{ |c| }{Multiplicative}& \multicolumn{3}{ |c| }{Additive} & GeNIe  \\
\hline
Net.& Mem. All. & Time & \# Mon. & Mem. All. & Time & \# Mon. & Time\\
\hline
A& 116KiB & 3ms & 84 & 114KiB&2ms & 28 &9.5ms\\
\hline
B& 0.95MiB& 36ms&1426 & 0.94MiB& 29ms &138&13ms\\
\hline
C&1.05MiB&59ms&17810&1.04MiB&36ms&650&30ms\\
\hline
D&1.25MiB&1.45s&1590674&1.23MiB&82ms&17034&3.6s\\
\hline
E&1.41MiB&38.5s&$>10^9$&1.38MiB&355ms&148106&81.2s\\
\hline
\end{tabular}
\end{center}
\caption{Complexity summaries of the symbolic algorithms in Maple\textsuperscript{\textit{\tiny{TH}}} and of the numerical algorithms in GeNIe: Mem. All. - memory allocation; Time - computation time; \# Mon. - number of monomials of the final EU vectors $\bar{U}_1$. \label{study}}
\end{table}

Although the efficiency of the symbolic algorithms highly depends on the size of the network, 
 the  simulation study in this section shows that even with the current capabilities of general-purpose computer algebra softwares symbolic techniques in decision making problems of medium/large scale are usefully applicable. In particular for IDs embedding additive factorizations, computation times increase at a slower pace than in the other cases (GeNIe and MIDs) and could thus be efficiently implemented in much larger domains than those presented here. However, it is uncommon to perform sensitivity studies over networks much larger than those investigated here. We refer a discussion of the handling of massive networks in our symbolic framework to Section \ref{discussion}.

\section{Modifying the topology of the MID}
Algorithm~\ref{algo}  works under the assumption that the MID is in extensive form  whose importance was discussed in Section~\ref{subsec:evalMIDS}.
It has been recognized that typically a DM will build an MID so that variables and decisions are ordered in the way they actually 
happen and  this might not correspond to the order in which variables are observed. Thus, MIDs often are not in extensive form.
But it is always possible to transform an MID into one in extensive form, although this might entail the loss of conditional independence structure. 
In Section~\ref{arcbarren} we consider 
two of the most common operations that can do this: 
edge reversal and barren node elimination. 

In practice  DMs often also include in the MID variables that subsequently turn out  not to be strictly necessary for identifying an optimal policy.  DMs are able to provide probabilistic judgements for conditional probability tables associated to an MID with variables describing the way they understand the unfolding of events. However their understanding usually includes variables that are redundant for the evaluation of the MID. In Section~\ref{suffy} we describe the polynomial interpretation of a criterion introduced in~\cite{smith89a,smith89} to identify a subgraph of the original MID whose associated optimal decision rule is the same as the one of the original MID. 

\label{topology}

\subsection{Rules to transform an MID in extensive form}
\label{arcbarren}

\begin{figure}
\begin{center}
\scalebox{0.7}{
\xymatrix{
U_1&*+[Fo]{Y_3}\ar[l]\ar[dd]\ar[dr]&*+[Fo]{Y_2}\ar[l]&*+[F]{Y_1}\ar[dl]\ar[l]\ar@/^1pc/[ll]\\
&&*+[F]{Y_4}\ar[d]\ar[dl]\ar[rd]&\\
U_2&*+[Fo]{Y_5}\ar[r]\ar[l]&*+[Fo]{Y_6}\ar[r]&U_3
}}\hspace{0.5cm}
\scalebox{0.7}{\xymatrix{
U_1&*+[Fo]{Y_3}\ar[l]\ar[dd]\ar[dr]\ar[r]&*+[Fo]{Y_2}&*+[F]{Y_1}\ar[dl]\ar[l]\ar@/^1pc/[ll]\\
&&*+[F]{Y_4}\ar[d]\ar[dl]\ar[rd]&\\
U_2&*+[Fo]{Y_5}\ar[r]\ar[l]&*+[Fo]{Y_6}\ar[r]&U_3
}}\hspace{0.5cm}
\scalebox{0.7}{\xymatrix{
U_1&*+[Fo]{Y_3}\ar[l]\ar[dd]\ar[dr]&&*+[F]{Y_1}\ar[dl]\ar@/^1pc/[ll]\\
&&*+[F]{Y_4}\ar[d]\ar[dl]\ar[rd]&\\
U_2&*+[Fo]{Y_5}\ar[r]\ar[l]&*+[Fo]{Y_6}\ar[r]&U_3
}}
\caption{Example of a sequence of manipulations of a non extensive form MID. }\label{fig:1}
\end{center}
\end{figure}
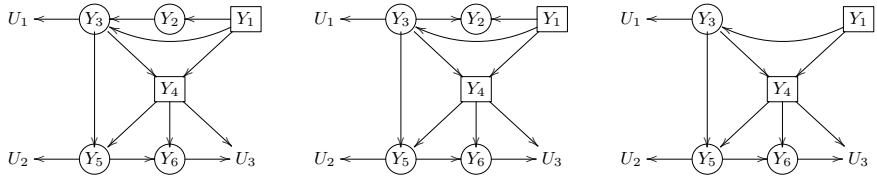

The two operations of \textbf{arc reversal} and \textbf{barren node removal} are often used in combination by first reversing the direction of some edges of the MID and then removing vertices that, consequently to the reversals, becomes barren, i.e. have no children~\cite{Shachter1986}. 

\begin{example}
The MID on the left of Fig.~\ref{fig:1} is
a non-extensive variant of the MID in Fig.~\ref{fig-ex} not including the edge $(Y_2,Y_4)$. 
The  MID in the centre of Fig.~\ref{fig:1} is obtained by the reversal of the edge $(Y_2,Y_3)$ and the  MID on the right of Fig.~\ref{fig:1} is the network in extensive form obtained by deleting the barren node $Y_2$. 
\end{example}

First we introduce a terminology to characterize a special pair of parent/child as in~\cite{Castillobook} for which edge reversals are simpler~\cite{Jensenbook}.  
 It is not the purpose of this paper to identify an optimal sequence of edge reversals, i.e. one yielding a simplified MID with the least number of vertices and edges. 
Instead  we can use algorithms already devised for standard IDs to perform diagram transformations~\cite{Shachter1990} by arc reversal and barren node removals. 
 We say that $Y_i$  is the father of $Y_j$ and $Y_j$ its son if the edge set of the MID includes $(Y_i, Y_j)$ and there is no other path starting at $Y_i$ and terminating at $Y_j$ that connects them. 

\begin{example}
For the MIDs in Figure \ref{fig:1}, both $Y_4$ and $Y_5$ are parents of $Y_6$, but only $Y_5$ is its father since there is the path $(Y_4,Y_5,Y_6)$. Notice that a vertex can have only one father but more than one son. 
\end{example}

\begin{proposition}
\label{manipulation}
The evaluation of an MID $G$ provides the same optimal policy as the MID  $G'$ obtained by implementing any of the following manipulations:
\begin{itemize}
\item \textbf{Arc Reversal}: for $i,j\in\mathbb{V}$, if $Y_i$ is the father of $Y_j$ in $G$ reverse the arc $(Y_i,Y_j)$ into $(Y_j,Y_i)$ and change the edge set as 
\[
E(G')=E(G)\setminus \{(Y_i,Y_j)\}\cup  \{(Y_k,Y_i): k\in\{\Pi_j\cup j\}\setminus i\}\cup\{(Y_k,Y_j): k\in \Pi_i\},
\]
\item \textbf{Barren Node Removal}: for $i\in\mathbb{V}$, remove the vertex $Y_i$ if this has no children and transform the diagram according to the following rules:
\[
V(G')=V(G)\setminus \{Y_i\},\;\;\;\;\hspace{0.5cm}E(G')=E(G)\setminus \big\{(Y_k,Y_i): k\in \Pi_i\big\}.
\]
\end{itemize}
\end{proposition}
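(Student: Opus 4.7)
The plan is to verify the two manipulations separately by showing each preserves the joint conditional distribution of $\bm{Y}_{\mathbb{V}}$ given any decision profile and keeps $G'$ a valid DAG compatible with a refinement of the ordering $\preceq$. Once these two ingredients are in place, the backward recursion of Proposition~\ref{i-th} applied to $G'$ evaluates term by term the same expression as when applied to $G$, so the optimal policies coincide.

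For \textbf{Barren Node Removal}, the key observation is that if $Y_i \in \mathbb{V}$ has no children in $G$, then $i$ does not belong to $\Pi_k$ for any $k>i$ nor to any $P_l$, $l\in[m]$. Consequently $i\notin B_{i+1}$, so $\bar{U}_{i+1}(\bm{y}_{B_{i+1}})$ does not depend on $y_i$, and the marginalization step of Proposition~\ref{i-th} collapses to
\[
\bar{U}_{i,\mathbb{V}}(\bm{y}_{B_i}) \;=\; \bar{U}_{i+1}(\bm{y}_{B_{i+1}})\sum_{y_i\in\mathcal{Y}_i} P_i(y_i\mid\bm{y}_{\Pi_i}) \;=\; \bar{U}_{i+1}(\bm{y}_{B_{i+1}}).
\]
The MID $G'$ obtained by deleting $Y_i$ and the edges $(Y_k,Y_i)$, $k\in\Pi_i$, leaves every other function in the recursion unchanged, so it produces identical EU vectors at all subsequent steps and therefore the same optimal policy.

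For \textbf{Arc Reversal}, I would proceed in three steps. First, check acyclicity of $G'$: since $Y_i$ is the father of $Y_j$, no directed path from $Y_i$ to $Y_j$ exists besides the edge $(Y_i,Y_j)$, so replacing $(Y_i,Y_j)$ with $(Y_j,Y_i)$ and adding the specified incoming edges cannot create a cycle (the standard check from~\cite{Shachter1986}). Second, verify that the joint conditional law of $\bm{Y}_{\mathbb{V}}$ is preserved by Bayes' rule: writing $A=(\Pi_i\cup\Pi_j)\setminus\{i\}$, we have the identity
\[
P_i(y_i\mid\bm{y}_{\Pi_i})\,P_j(y_j\mid y_i,\bm{y}_{\Pi_j\setminus\{i\}}) \;=\; P'_j(y_j\mid\bm{y}_A)\,P'_i(y_i\mid y_j,\bm{y}_A),
\]
where the primed factors are taken as the new transition probabilities attached to $G'$; all other factors $P_k$, $k\neq i,j$, are unchanged. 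Third, because the utility nodes and their parent sets $P_l$ are untouched and the ordering $\preceq$ can be refined so that $Y_j$ precedes $Y_i$ in $G'$, the sets $B_k$ of Proposition~\ref{i-th} and the sequence of summations/maximizations in the recursion produce the same numerical value in $G$ and $G'$, hence the same optimal policy.

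The main obstacle will be formalising the \emph{father} condition so as to ensure simultaneously that (i) no cycle is introduced in $G'$, (ii) the extended conditional independence statements encoded by $G'$ remain consistent with the joint law induced by $G$, and (iii) no ancestor of $Y_i$ or $Y_j$ becomes a descendant after reversal, which would conflict with the ordering used by Proposition~\ref{i-th}. The Bayes-rule step is routine once the combinatorial bookkeeping of the parent sets $\Pi'_i$ and $\Pi'_j$ is done carefully, so the essential work is structural rather than analytical.
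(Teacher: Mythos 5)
Your argument is essentially correct, but note that the paper itself gives no proof of Proposition~\ref{manipulation}: it is presented as a classical manipulation inherited from the influence-diagram literature (the references to~\cite{Shachter1986} and~\cite{Shachter1990}), and the only piece the paper actually proves is the induced reparametrization in Proposition~\ref{par} (Appendix~\ref{proofteo2}), whose Bayes-rule computation is exactly your second step for arc reversal. So you are supplying a reconstruction of the standard argument rather than mirroring anything in the text. Your reconstruction is sound where it matters: barren node removal is handled correctly (the observation that $i\notin B_{i+1}$ so the marginalization collapses to multiplication by $\sum_{y_i}P_i(y_i\mid\bm{y}_{\Pi_i})=1$ is precisely the right reduction), and for arc reversal the decisive facts are the ones you list --- acyclicity from the father condition, preservation of the joint conditional law of $\bm{Y}_{\mathbb{V}}$ given any decision profile, and invariance of the utility nodes and of the decision nodes' parent sets (hence of the policy space). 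The one place where you are looser than you should be is the claim that the backward recursion of Proposition~\ref{i-th} evaluates ``term by term the same expression'' on $G'$: after reversal $G'$ contains the edge $(Y_j,Y_i)$ with $i<j$, so the recursion must process $Y_i$ and $Y_j$ in swapped order (this is what the paper later calls the adjusted Algorithm~\ref{algo}), and a term-by-term identification of the two recursions is not the cleanest justification. The airtight route, which your opening paragraph already contains, is policy-wise: for every admissible policy the expected utility is an expectation of the unchanged utility function under the unchanged joint law, so the EU of each policy --- and hence the maximizer --- is identical in $G$ and $G'$; the recursion is merely one way of computing it. With that substitution your proof is complete and consistent with what the paper assumes.
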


Arc reversal and barren node removal change the symbolic parametrization of the MID according to Proposition~\ref{par}. 
After an arc reversal, the diagram $G'$ includes the edge $(Y_j,Y_i)$ where $i<j$. Algorithm~\ref{algo}, and similarly the  Maple\textsuperscript{\textit{\tiny{TH}}} function \texttt{SymbolicExpectedUtility}, works through a backward induction  over the indices of the variables and, by construction, always either marginalize or maximize a vertex before its parents. It cannot therefore be applied straightforwardly to the diagram $G'$. We define here the adjusted Algorithm~\ref{algo} which takes into account the reversal of an arc by, roughly speaking, switching the order in which the  variables associated to the reversed edge are marginalized during the procedure.  Specifically, in the adjusted Algorithm~\ref{algo} a marginalization operation is performed over $Y_i$ at the $n-j+1$ backward inductive step, whilst for $Y_j$ this happens at the $n-i+1$ step. Therefore $\bar{\bm{U}}_j'$ is the EU associated to $G'$ after the marginalization of $Y_i$ and  $\bar{\bm{U}}_i'$ is the EU after the marginalization of $Y_j$. Note that under this operation the sets $\mathbb{J}$ and $B_i$, $i\in [n]$, might change: we respectively call $\mathbb{J}'$ and $B'_i$ the ones associated to $G'$.

\begin{proposition}
\label{par}
Under the conditions of Proposition~\ref{manipulation}, 
let $p_{iy\pi}'$ and $\Pi'_i$  be a parameter and a parent set associated to the diagram $G'$ resulting from arc reversal and barren node removal:
\begin{itemize}
\item for $i,j\in\mathbb{V}$, if $\Pi'_i$ and $\Pi'_j$ are the parent sets of $Y_i$ and $Y_j$ after the reversal of the edge $(Y_i,Y_j)$, then 
the parametrization associated to $G'$ is
\begin{equation*}
p_{iy_i\pi'_i}'= \frac{p_{jy_j\pi_j}p_{iy_i\pi_i}}{\sum_{y_i\in\mathcal{Y}_i}p_{jy_j\pi_j}p_{iy_i\pi_i}},
\hspace{1cm}p_{jy_j\pi'_j}'=\sum_{y_i\in\mathcal{Y}_i}p_{jy_j\pi_j}p_{iy_i\pi_i},
\end{equation*}
for $\pi_i\in\bm{\mathcal{Y}}_{\Pi_i}$, $\pi_j\in\bm{\mathcal{Y}}_{\Pi_j}$, $\pi'_i\in\bm{\mathcal{Y}}_{\Pi'_i}$,$\pi'_j\in\bm{\mathcal{Y}}_{\Pi'_j}$, $y_i\in\mathcal{Y}_i$ and $y_j\in\mathcal{Y}_j$;
\item for $i,j\in \mathbb{V}$, assume that after the reversal of the edges $(Y_i,Y_j)$, for every children $Y_j$ of $Y_i$, $Y_i$ is now a barren node and let $\Pi_{j\setminus i}=\Pi_j\setminus \{i\}$. Then
\begin{itemize}
\item in the new parametrization $\bm{p}_i'$ is deleted;
\item in the original parametrization $\bm{p}_i$ is deleted and $p_{jy_j\pi_{j\setminus i}0}=\cdots=p_{jy_j\pi_{j\setminus i}r_i-1},$ for $y_j\in\mathcal{Y}_j$, $\pi_{j\setminus i}\in\bm{\mathcal{Y}}_{\Pi_{j\setminus i}}$, where the fourth index of $p_{jy_j\pi_{j\setminus i}i}$, $i\in[r_i{-1}]$, refers to the instantiation of $Y_i$.
\end{itemize}
\end{itemize}
\end{proposition}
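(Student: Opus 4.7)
\emph{Plan of proof.} The argument separates naturally along the two transformations described in the statement.

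For the \textbf{arc reversal} part, the strategy is to apply Bayes' rule to the joint distribution of $Y_i$ and $Y_j$ conditional on their combined ancestors. The hypothesis that $Y_i$ is the \emph{father} of $Y_j$ — so the edge $(Y_i,Y_j)$ is the unique directed $Y_i$--$Y_j$ path in $G$ — guarantees that no directed cycle is created after reversal, and that the admissible new parent sets are $\Pi'_j = \Pi_i \cup \Pi_j \setminus \{i\}$ and $\Pi'_i = \Pi'_j \cup \{j\}$, matching the edge set $E(G')$ prescribed in Proposition~\ref{manipulation}. Using the extended conditional independence $Y_i \independent \bm{Y}_{[i-1]} \mid \bm{Y}_{\Pi_i}$ built into Definition~\ref{mid}, together with the analogous statement for $Y_j$, the joint probability $P(Y_i=y_i, Y_j=y_j \mid \bm{Y}_{\Pi'_j}=\pi'_j)$ factorizes as $p_{iy_i\pi_i}\, p_{jy_j\pi_j}$. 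Marginalising over $y_i$ then produces the stated formula for $p'_{jy_j\pi'_j}$, and dividing the joint by this marginal yields the Bayes-inverted expression for $p'_{iy_i\pi'_i}$.

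For the \textbf{barren node removal} part, the first bullet is a direct consequence of the factorization used in Algorithm~\ref{algo}: once $Y_i$ has no children and, by the disjointness assumption on the $P_k$'s, is not a parent of any utility node, the factor $\bm{p}'_i$ in the joint distribution integrates to one under every marginalisation occurring in Proposition~\ref{i-th}, so deleting $\bm{p}'_i$ from the new parametrisation leaves every EU polynomial unchanged. For the second bullet, I would re-express the post-reversal identity $p'_{jy_j\pi'_j} = \sum_{y_i \in \mathcal{Y}_i} p_{jy_j\pi_j}\, p_{iy_i\pi_i}$ in the original symbols and observe that the equalities $p_{jy_j\pi_{j\setminus i}0} = \cdots = p_{jy_j\pi_{j\setminus i}r_i-1}$ are exactly the condition under which this sum collapses to the common value multiplied by $\sum_{y_i} p_{iy_i\pi_i} = 1$; under that constraint the original parametrisation itself is free of any dependence on $y_i$, and $\bm{p}_i$ can be deleted without reference to the reversal.

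The main obstacle I foresee is the careful bookkeeping of parent sets when invoking the extended conditional independence in the arc-reversal step: $\Pi_i$ and $\Pi_j$ live in different index ranges and may overlap non-trivially, so verifying that the ``father'' hypothesis is exactly what enables the factorization $P(Y_i,Y_j\mid \bm{Y}_{\Pi'_j}) = p_{iy_i\pi_i}\, p_{jy_j\pi_j}$ (and simultaneously prevents implicit cycles in $G'$) requires a short but delicate chase through Definition~\ref{mid} rather than a straightforward computation. The remaining manipulations are then routine algebraic rewritings of conditional probabilities.
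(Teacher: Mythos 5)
Your proposal is correct and follows essentially the same route as the paper: the arc-reversal formulas are obtained by Bayes' theorem together with the conditional independencies encoded in the DAG (the paper manipulates the conditionals $P(y_i\mid\bm{y}_{\Pi'_i})$ directly, which is the same computation as your factorize-the-joint/marginalize/divide organization), and the barren-node part is treated as an immediate consequence of the node no longer appearing in the diagram. Your elaboration of the barren-node bullet, including the observation that the equalities $p_{jy_j\pi_{j\setminus i}0}=\cdots=p_{jy_j\pi_{j\setminus i}r_i-1}$ collapse the marginalization sum via $\sum_{y_i}p_{iy_i\pi_i}=1$, is consistent with (and slightly more explicit than) the paper's one-line justification.
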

 The proof of this proposition is reported in Appendix~\ref{proofteo2}.

\begin{example}
Reversing the edge $(Y_2,Y_3)$ in the MID on the left of Fig.~\ref{fig:1}, by Proposition~\ref{par} 
we obtain:
\begin{equation*}
p_{3y_3y_1}'=p_{3y_31y_1}p_{21y_1}+p_{3y_30y_1} p_{20y_1},\hspace{1cm}p_{2y_2y_3y_1}'=\frac{p_{3y_3y_2y_1}p_{2y_2y_1}}{p_{3y_31y_1}p_{21y_1}+p_{3y_30y_1}p_{20y_1}}.
\end{equation*}
for $y_1,y_2,y_3\in\{0,1\}$. Proposition~\ref{par} also specifies that the deletion of the vertex $Y_2$ as on the right of Fig.~\ref{fig:1} simply corresponds to cancelling the vectors $\bm{p}_2$ and $\bm{p}_2'$ and setting $p_{3y_31y_1}$ equal to $p_{3y_30y_1}$ for any $y_1,y_3\in\{0,1\}$.
\end{example}
 
 Note that arc reversals, just as posterior probabilities in symbolic inferences, transform EUs into rational functions of multilinear polynomials. However, Proposition \ref{par} suggests a straightforward model's reparametrization, which maps  EUs back to polynomial functions. In addition,  Proposition~\ref{par} shows that manipulations of the diagram change the polynomial structure of the EUs under the new parametrization $\bm{p}'$ that we formally study in Lemmas \ref{polarc} and \ref{br} below. We assume here for simplicity that $i\not \in P_j$, $j\in [m]$. There is no loss of generality in this assumptions since arguments of utility functions cannot be deleted from the diagram without changing the result of the evaluation.

\begin{lemma}
\label{polarc}
Under the assumptions of Proposition~\ref{par}  and in the notation of Theorem~\ref{polyexp}, suppose we reverse the arc $(Y_i,Y_j)$ in an MID $G$. Let $x$ be the smallest index in $\Pi_i\cup \Pi_j$. Evaluating $G$ using the adjusted Algorithm~\ref{algo} the following holds:
	\begin{enumerate} 
	\item if $j\not \in \mathbb{J}$, then
		\begin{itemize}
		\item the entries of $\bar{\bm{U}}_j'$ have $r_ir_{jba}/r_j\in\mathbb{Z}_{\geq 1}$\footnote{This is so since $r_{jba}=r'r_j$ for some $r'\in\mathbb{Z}_{\geq 1}$.} monomials of degree $d_{jba}$; for $i<k<j$, the entries of $\bar{\bm{U}}_k'$ can have different polynomial structure from the ones of $\bar{\bm{U}}_k$ according to Proposition~\ref{polyop};
		\item the vectors $\bar{\bm{U}}_k'$, $x<k\leq j$, have dimension $c_k=\prod_{s\in C_k\setminus\{k,\dots,n\}}r_s$  where 
		         $ C_k=B_k\cup\{l: (Y_l,Y_i) \mbox{ or } (Y_l,Y_j) \in E(G')\}$;
		\end{itemize}
	\item \label{1b} if $j\in \mathbb{J}\cap \mathbb{J}'$, then
		\begin{itemize}
		\item the entries of $\bar{\bm{U}}_j'$ have $r_ir_{(j+1)ba}$ monomials of degree $d_{(j+1)ba}+1$ and, for $i<k<j$, the entries of $\bar{\bm{U}}_k'$ have a different polynomial structure from the ones of $\bar{\bm{U}}_k$ according to Proposition~\ref{polyop};
		\item for $x<k<j$, $\bar{\bm{U}}_k'$ has dimension  $ a_k=\prod_{s\in \{A_k\setminus \{k,\dots,n\}\}}r_s$, with $A_k=C_k\cup P_{j_j}$;
		\end{itemize}
	\item if $j\not\in \mathbb{J}'$, suppose $j\in P_t$ and  $s$ is the second highest index in $P_t$, then
		\begin{itemize}
		\item for $s<k\leq j$, the entries of  $\bar{\bm{U}}_k'$ have the polynomial structure specified in point 2 and dimension $a_k$; 
		\item $i< k\leq s$,  the entries of $\bar{\bm{U}}_k'$ have the polynomial structure specified in point 1 and dimension $c_k$. 
		\item for $x<k\leq i$, $\bar{\bm{U}}_k'$ has dimension $c_k$ and the polynomial structure of its entries does not change;
		\end{itemize}
	\end{enumerate}
\end{lemma}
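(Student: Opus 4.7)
The plan is to run the adjusted Algorithm~\ref{algo} on $G'$ from step $n$ down to step $x+1$, compare its behaviour with the original Algorithm~\ref{algo} on $G$, and invoke Proposition~\ref{polyop} at each step to read off how the polynomial structure and the dimension of the EU vectors change. First I would note that, since arc reversal (and possible barren node removal) is a local manipulation, the two algorithms produce \emph{identical} EU vectors for every index $k>j$, so that $\bar{\bm{U}}_{j+1}'=\bar{\bm{U}}_{j+1}$ and its polynomial structure is the one given by Theorem~\ref{polyexp}. The only indices at which the two algorithms behave differently lie in $\{x+1,\dots,j\}$: by the definition of the adjusted algorithm, at the $(n-j+1)$st step we marginalize $Y_i$ with the conditional probability vector $\bm{p}_i'$ (whose dimension is $r_i$ per instantiation of $\Pi_i'=\Pi_i\cup\Pi_j\cup\{j\}\setminus\{i\}$), and at the $(n-i+1)$st step we marginalize $Y_j$ with $\bm{p}_j'$; the intermediate steps are unchanged in operation type but their EU vectors differ because the ``active'' indices carried along the backward recursion are now those in $C_k$ (and $A_k$) instead of $B_k$.

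For part (1), since $j\notin\mathbb{J}$ the operation at step $n-j+1$ is an \texttt{EUMarginalization} only (line (10) of Algorithm~\ref{algo}). Applying the third bullet of Proposition~\ref{polyop} to $\bar{\bm{U}}_{j+1}$ with probability vector $\bm{p}_i'$ (which has $r_i$ levels for $Y_i$) shows that the monomial count of each entry gets multiplied by $r_i$ and the degree is increased by $1$. Using the identity $r_{jba}=r_j\cdot r_{(j+1)ba}$ (read off from~\eqref{struct}) and $d_{jba}=d_{(j+1)ba}+1$ (again from~\eqref{struct} since no utility vertex is crossed when $j\notin\mathbb{J}$) this gives exactly $r_i r_{jba}/r_j$ monomials of degree $d_{jba}$. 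For indices $i<k<j$ the operations on the recursion are the same as in the original algorithm, so the changes in polynomial structure follow by a second, straightforward application of Proposition~\ref{polyop}. The dimension formula $c_k=\prod_{s\in C_k\setminus\{k,\dots,n\}}r_s$ is obtained by repeating the argument used to derive the expression for $c_i$ in Section~\ref{subsec:evalMIDS}, but replacing the set of ``still-needed indices'' $B_k$ by $C_k$, since after reversal the parents of $Y_i$ and $Y_j$ must be carried along as new conditioning variables at step $k$.

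For part (2), $j\in \mathbb{J}\cap\mathbb{J}'$ means the utility node $U_{j_j}$ is still attached to $Y_j$ in $G'$, so at the $(n-j+1)$st step the algorithm performs an \texttt{EUMultiSum} with $\bm{\psi}_{j_j}$ followed by an \texttt{EUMarginalization} with $\bm{p}_i'$ (line (7) of Algorithm~\ref{algo}). The second bullet of Proposition~\ref{polyop} gives the polynomial structure produced by \texttt{EUMultiSum}, which together with the third bullet for the subsequent marginalization yields the advertised count $r_i r_{(j+1)ba}$ of monomials of degree $d_{(j+1)ba}+1$. For the dimension, one must additionally carry along the parents in $P_{j_j}$ throughout the intermediate steps, which explains the correction from $C_k$ to $A_k=C_k\cup P_{j_j}$ for $x<k<j$.

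Part (3) is the most delicate piece and where I expect the real work to lie. If $j\notin\mathbb{J}'$ then the highest-indexed parent of $U_t$ shifts from $Y_j$ to $Y_s$, so the \texttt{EUMultiSum} associated with $U_t$ is pulled back from step $n-j+1$ to step $n-s+1$; between these two steps the algorithm carries the full argument vector of $U_t$ but has not yet absorbed $U_t$ itself. My plan is to split the range $\{i+1,\dots,j\}$ at $s$: for $s<k\leq j$ the absence of the \texttt{EUMultiSum} leaves the polynomial structure in the form claimed in part (2) (with the $A_k$ dimensions, since $P_{j_j}=P_t$ still has to be tracked); for $i<k\leq s$ the \texttt{EUMultiSum} with $\bm{\psi}_t$ happens at step $s$, after which the recursion continues as in case (1), yielding the stated structure and dimensions $c_k$. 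Finally, for $x<k\leq i$ no parent sets have changed relative to the original MID beyond the factors already collapsed by the preceding marginalizations, so Proposition~\ref{polyop} confirms that the structure of $\bar{\bm{U}}_k'$ is unchanged and the dimension remains $c_k$. The main technical hurdle will be bookkeeping: confirming that the index sets $C_k$ and $A_k$ really do list exactly the factors that appear in each entry of $\bar{\bm{U}}_k'$, which I would handle by a careful inductive argument based on how the parent sets $\Pi_i'$ and $\Pi_j'$ propagate backwards through the recursion, together with the disjointness assumptions on the $P_i$'s in Definition~\ref{mid}.
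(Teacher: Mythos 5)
Your overall strategy -- run the adjusted Algorithm~\ref{algo} backwards, note that nothing changes for indices above $j$, and read off the structural changes at each step from Proposition~\ref{polyop} while re-deriving the dimensions from the modified index sets -- is exactly the route the paper takes in Appendix~\ref{corollario}, and your treatments of part (1) and of the dimension bookkeeping match the paper's.

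There is, however, a genuine error in your part (2). You assert that when $j\in\mathbb{J}\cap\mathbb{J}'$ the adjusted algorithm performs an \texttt{EUMultiSum} with $\bm{\psi}_{j_j}$ \emph{followed by} the \texttt{EUMarginalization} with $\bm{p}_i'$ at step $n-j+1$, and that this ``yields the advertised count $r_i r_{(j+1)ba}$ of monomials of degree $d_{(j+1)ba}+1$.'' It does not: by the second and third bullets of Proposition~\ref{polyop}, that composition would produce \emph{three} families of monomials per pair $(b,a)$ -- $r_i r_{(j+1)ba}$ of degree $d_{(j+1)ba}+1$, another $r_i r_{(j+1)ba}$ of degree $d_{(j+1)ba}+4$, and $r_i$ of degree $3$ -- whereas the lemma claims only the first family. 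The paper's proof resolves this by observing that in the adjusted algorithm the \texttt{EUMultiSum} with $\bm{\psi}_{j_j}$ is \emph{deferred} to the step at which $Y_j$ is actually marginalized (step $n-i+1$), so that at step $n-j+1$ only an \texttt{EUMarginalization} over $Y_i$ is performed; this is also precisely why the indices $P_{j_j}$ must be carried along in the intermediate sets $A_k=C_k\cup P_{j_j}$ for $x<k<j$. Your own justification of the $A_k$ dimension formula (``one must additionally carry along the parents in $P_{j_j}$ throughout the intermediate steps'') presupposes this deferral and is therefore inconsistent with your claim that $U_{j_j}$ is absorbed already at step $n-j+1$: if the \texttt{EUMultiSum} had been executed there, there would be nothing left in $P_{j_j}$ to track. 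The same mislocation of the \texttt{EUMultiSum} would also contaminate your propagation of the structure through the steps $i<k<j$ in case (2). Your split of case (3) at the index $s$ is fine and agrees with the paper's ``combination of the previous two cases,'' but it should be reconciled with a corrected case (2).
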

The proof of this lemma is provided in Appendix~\ref{corollario}.

We next consider how a barren node removal changes the EU vectors.
\begin{lemma}
\label{br}
In the notation of Lemma \ref{polarc}, let $Y_z$ be the child of $Y_i$ in $G$ with the highest index and remove the barren node $Y_i$ in $G'$. Then
		\begin{itemize}
		\item  for $i<k\leq z $, $\bar{\bm{U}}_k'$ has  $c_k/r_i$ entries whose polynomial structure does not change;
		\item for $k\leq i$, $\bar{\bm{U}}_k'$ has dimension $c_k$ and its entries have $r_{kba}/r_i$ monomials of degree $d_{kba}-1$.
         	\end{itemize}
\end{lemma}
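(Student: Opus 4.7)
My plan is to prove the two claims separately, by case analysis on the range of $k$, using Proposition~\ref{par} for the parametric adjustments and Proposition~\ref{polyop} for the effect of the EU operations on polynomial structure. As an implicit base case, for $k>z$ the index $i$ does not appear in $B_k$ (since $z$ is the highest-indexed descendant of $Y_i$ and $i\notin P_l$ for any $l$ by the standing assumption), so neither the operations at levels $\geq k$ nor the parameters involved in them depend on $Y_i$; hence $\bar{\bm{U}}_k'=\bar{\bm{U}}_k$ identically, and in particular at level $z+1$ the two EU vectors coincide, providing the starting point of the induction.

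For $i<k\leq z$, I would first observe that $Y_i\in\Pi_z$ combined with $z\geq k$ gives $i\in B_k$, so the dimension of $\bar{\bm{U}}_k$ is $c_k$ in $G$. By Proposition~\ref{par}, removing $Y_i$ as a barren node forces the identification $p_{l,y,\pi\setminus i,0}=\cdots=p_{l,y,\pi\setminus i,r_i-1}$ for every $l$ that previously had $Y_i$ as a parent. Under this identification, the $r_i$ entries of $\bar{\bm{U}}_k$ corresponding to different values of $y_i$ but the same instantiation of $\bm{y}_{B_k\setminus\{i\}}$ become identical and collapse to a single entry, yielding $c_k/r_i$ entries for $\bar{\bm{U}}_k'$. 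The per-entry polynomial structure is preserved because the count and degree of monomials within each entry are unaffected by which of the now-equal $y_i$-indexed parameters appear.

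For $k\leq i$, the fact that $i\in\{k,\dots,n\}$ immediately gives $i\notin B_k$, and moreover $B_k'=B_k$ since deleting $Y_i$ does not alter which indices smaller than $k$ appear as parents of later random vertices or in any $P_l$; hence $c_k'=c_k$. For the polynomial content, note that in $G$ the backward induction performs an EUMarginalization over $Y_i$ between levels $i+1$ and $i$, which by Proposition~\ref{polyop} multiplies the number of monomials per entry by $r_i$ and adds one to the degree (the extra factor being $p_{iy\pi}$). In $G'$ this step is simply absent, so each entry of $\bar{\bm{U}}_k'$ has $r_{kba}/r_i$ monomials of degree $d_{kba}-1$, and a downward induction on $k$ shows that the subsequent EU operations at levels $<i$ preserve this ratio relative to those executed in $G$.

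The main obstacle lies in reconciling the intermediate dimensions across level $i$ in $G'$: the vector $\bar{\bm{U}}_{i+1}'$ inherited from the previous case has dimension $c_{i+1}/r_i$, while the claim requires $\bar{\bm{U}}_k'$ to have dimension $c_k$ for every $k\leq i$. One must verify that the implicit EUDuplicationP calls triggered when processing the lower-indexed variables supply exactly the factor $s_i$ needed so that $c_i=c_{i+1}s_i/r_i$ emerges without altering the per-entry polynomial structure described above. Carefully tracking how $B_k'$ compares with $B_k$ at each step of the downward induction from $i$ to $1$, and invoking Proposition~\ref{polyop} to describe the effect of each EU operation on both dimension and monomial structure, then completes the argument.
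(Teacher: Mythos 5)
Your proposal is correct and follows essentially the same route as the paper's own (very terse) argument: for $i<k\leq z$ the $r_i$ entries indexed by the values of $Y_i$ collapse because $i\in B_k$, and for $k\leq i$ the evaluation performs one fewer \texttt{EUMarginalization}, so Proposition~\ref{polyop} yields the stated monomial count $r_{kba}/r_i$ and degree $d_{kba}-1$. Your explicit base case at $k>z$ and your flagged reconciliation of dimensions across level $i$ are checks the paper omits entirely, so the proposal is, if anything, more careful than the published two-sentence proof.
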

The proof of this lemma is provided in Appendix~\ref{corollario}.

\begin{example}
After the reversal of the edge $(Y_2,Y_3)$ from the network on the left of Fig.~\ref{fig:1}, the polynomial structure of the EUs associated to the original and manipulated diagrams is reported in Table~\ref{ciao} by $\bar{\bm{U}}_i$ and $\bar{\bm{U}}_i^r$ respectively. 
Since $Y_3$ is the only argument of $U_1$ we are in Item~(\ref{1b}) of Lemma~\ref{polarc}. The EU $\bar{\bm{U}}^r_3$ is obtained running the adjusted Algorithm~\ref{algo} over the graph in the centre of Fig.~\ref{fig:1} after the marginalization of $Y_2$. This can be noted to  change according to Lemma~\ref{polarc}, by comparing its structure to the one of $\bar{\bm{U}}_4$. Furthermore, $\bar{\bm{U}}_2^r$ and $\bar{\bm{U}}_1^r$ have the same polynomial structures as  $\bar{\bm{U}}_2$ and $\bar{\bm{U}}_1$. 
The last $3$ columns of the Table~\ref{ciao} show the polynomial structure of the EUs $\bar{\bm{U}}_3^b$ associated to the MID on the right of Fig.~\ref{fig:1} which does not include $Y_2$. 
According to Lemma~\ref{polarc}, $\bar{\bm{U}}_3^b$ has the same polynomial structure of $\bar{\bm{U}}_3$
and for each row of the table, the number of monomials with degree $d$ in $\bar{\bm{U}}_1^b$ is half the number of monomials of $\bar{\bm{U}}_1$ having degree $d+1$. 
\end{example}

\subsection{The sufficiency principle}
\label{suffy}
After an  MID has been transformed in extensive form according to the rules in Section~\ref{arcbarren}, further manipulations can be applied to simplify its evaluation, such as the sufficiency principle, which
mirrors the concept of sufficiency in statistics and is based on the concept of d-separation for DAGs~\cite{Pearl1988} formally defined below. 

We first introduce a few concepts from graph theory. The moralized graph $G^M$ of the MID $G$ is a graph with the same vertex set  of $G$. Its directed edges include the directed edges of $G$ and 
an undirected edge between any two vertices which are not joined by an edge in $G$ but which are parents of the same child in $Y_i$, $i\in \mathbb{V}$. The skeleton of $G^M$, $\mathcal{S}(G^M)$, is
a graph with the same vertex set of $G^M$ and an undirected edge between any two vertices $(Y_i,Y_j)\in V(G^M)$ if and only if there is a directed or undirected edge between $Y_i$
and $Y_j$ in $G^M$. 

\begin{definition}
For any three disjoint subvectors $\bm{Y}_A, \bm{Y}_B, \bm{Y}_C \in V(G^M)$, $\bm{Y}_A$ is \textit{d-separated} from $\bm{Y}_C$ by $\bm{Y}_B$ in the moralized graph $G^M$ of an MID $G$ if and only if any path from any vertex $Y_a\in\bm{Y}_A$ to any vertex $Y_c\in\bm{Y}_C$ passes through a vertex $Y_b\in\bm{Y}_B$ in its skeleton $\mathcal{S}(G^M)$.
\end{definition}
   
\begin{proposition}
\label{suff}
Let $j\in \mathbb{D}$, $i\in \mathbb{V}\,\cap\, \Pi_j$ and $Ch_i$ be the index set of the children of $Y_i$. Then if $Y_i$ is d-separated from $\{U_k, \mbox{ for } k \; s.t. \; i\leq j_k\}$ by $\{Y_k: k\in \{\Pi_j\setminus i\}\}\cup \{Y_k:k\in\mathbb{D}\}$ in the MID $G$, the \emph{sufficiency principle} guarantees that the evaluation of the graph $G'$ provides the same optimal policy as $G$, where $G'$ is such that $V(G')=V(G)\setminus \{Y_i\}$ and $E(G')$ is equal to
\begin{multline*}
E(G)\setminus \{(Y_i,Y_j): j\in Ch_i\}\setminus \{(Y_k,Y_i): k\in \Pi_i\}
\cup \{(Y_k,Y_j): j\in Ch_i, k \in \Pi_i\}.
\end{multline*}
\end{proposition}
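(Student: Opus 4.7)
The plan is to combine the conditional independence implied by the d-separation hypothesis with the backward induction of Proposition~\ref{i-th}, and then finish with an application of Proposition~\ref{manipulation}. First, by soundness of d-separation on the moralised graph within the extended conditional independence framework of~\cite{Dawid2014}, which accommodates variables of mixed (random/decision) type, the hypothesis translates into
\[
Y_i \;\independent\; \bm{Y}_{\bigcup\{P_k\,:\,i\,\leq\, j_k\}} \;\bigm|\; \bm{Y}_{\Pi_j\setminus i},\; \bm{Y}_{\mathbb{D}},
\]
using the fact that each $U_k$ is a deterministic function of $\bm{Y}_{P_k}$. Informally, conditional on the remaining parents of $Y_j$ and on any instantiation of all decisions, $Y_i$ carries no information about the random vectors that feed the utilities at or after $Y_i$ in the DS.

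I would then track this independence through the backward induction. At the step corresponding to the decision $Y_j$ in Proposition~\ref{i-th}, the function $\bar{U}_{j+1}(\bm{y}_{B_{j+1}})$ is, up to the criterion weights and $h$, a conditional expectation (under the implicitly fixed optimal future policies) of the polynomial in the $U_k$'s with $j_k\geq j+1$ obtained from equation~(\ref{muia}). Because $i<j\leq j_k$ for every such $k$, and since extending the conditioning set from $\bm{Y}_{B_{j+1}}$ to $\bm{Y}_{\Pi_j}\cup\bm{Y}_{\mathbb{D}}$ preserves the value of the conditional expectation in extensive form, the displayed independence forces $\bar{U}_{j+1}(\bm{y}_{B_{j+1}})$ to be invariant under $y_i$ once the remaining entries of $\bm{y}_{B_{j+1}}$ are fixed. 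Consequently a maximiser $y_j^{\ast}$ may be chosen as a function of $\bm{y}_{\Pi_j\setminus i}$ alone, and the arc $(Y_i,Y_j)$ can be deleted without altering the optimal policy. The same argument applies to any later decision whose optimal rule involves $Y_i$, since such rules are conditional expectations of exactly the same future-utility polynomial.

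For the remaining children $Y_c$ of $Y_i$, which are random variables in an extensive-form MID, I would apply a marginalisation step in the spirit of the barren-node case of Proposition~\ref{par}, replacing each conditional table indexed by $\Pi_c=\{i\}\cup \Pi_c'$ with
\[
p'_{c\,y_c\,(\pi_c',\pi_i)} \;=\; \sum_{y_i\in\mathcal{Y}_i} p_{c\,y_c\,(y_i,\pi_c')}\; p_{i\,y_i\,\pi_i},
\]
so that $Y_c$ acquires parent set $(\Pi_c\setminus\{i\})\cup \Pi_i$, matching the edge set prescribed for $G'$. Because this is merely summing out $y_i$, the joint distribution over all variables other than $Y_i$ is preserved, so every subsequent EU-operation of Algorithm~\ref{algo} returns the same value as in $G$. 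Once the arc into $Y_j$ has been deleted and the tables of the random children of $Y_i$ have been reparametrised, $Y_i$ has no outgoing edge, is barren, and a final application of the barren-node part of Proposition~\ref{manipulation} produces exactly $G'$.

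The main obstacle is the first step: the d-separation condition, stated on the moralised graph of an MID, must be translated carefully into the required extended conditional independence statement, because decision variables are not random and the semigraphoid manipulations used must be the ones valid in the extended framework of~\cite{Dawid2014}; in particular, one has to verify that enlarging the conditioning set from $B_{j+1}$ to $\Pi_j$ does not break either the independence or the value of the relevant conditional expectation. Once this is established, the rest is essentially bookkeeping, because the backward induction of Proposition~\ref{i-th} is linear in the conditional probability tables and a distribution-preserving reparametrisation propagates through it unchanged.
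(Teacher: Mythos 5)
First, a point of reference: the paper contains no proof of Proposition~\ref{suff}. The sufficiency principle is imported from~\cite{smith89a,smith89} and only its consequences for the parametrization (Proposition~\ref{parsuff}) and for the EU polynomials (Lemma~\ref{polsuff}) are proved in the appendix. So there is no in-paper argument to compare yours against; your outline is the classical one from those references — d-separation yields an extended conditional independence, the continuation value $\bar{U}_{j+1}$ is therefore constant in $y_i$, hence optimal decision rules never need $y_i$, and $Y_i$ can be integrated out and removed as a barren node — and at the level of strategy it is the right argument.

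There are, however, two places where your bookkeeping is not yet a proof. First, the reparametrization you propose for a random child $Y_c$ of $Y_i$, namely $p'_{c\,y_c\,(\pi_c',\pi_i)}=\sum_{y_i}p_{c\,y_c\,(y_i,\pi_c')}\,p_{i\,y_i\,\pi_i}$, implicitly uses $P(y_i\mid\bm{y}_{\Pi_c\setminus i},\bm{y}_{\Pi_i})=P(y_i\mid\bm{y}_{\Pi_i})$, which holds only when $Y_i$ is the \emph{father} of $Y_c$ in the sense of Section~\ref{arcbarren}. When $Y_i$ is a parent but not the father, conditioning on the other parents of $Y_c$ tilts the law of $Y_i$, and the correct expression is the one with the normalizing ratio given in the paper's own Proposition~\ref{parsuff}; your formula is only its first bullet. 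Second, the claim that "merely summing out $y_i$" preserves everything is too quick when $Y_i$ has more than one child: $\sum_{y_i}\prod_c P(y_c\mid y_i,\cdot)P(y_i\mid\cdot)$ does not in general factor into a product of the marginalized tables, so the marginal law need not factorize according to the stated $E(G')$, which adds edges from $\Pi_i$ to $Ch_i$ but none among the children. What rescues the construction is the d-separation hypothesis itself: in extensive form the conditioning set $\{Y_k:k\in\Pi_j\setminus i\}\cup\{Y_k:k\in\mathbb{D}\}$ must block every directed path from $Y_i$ to a utility with $i\leq j_k$, which severely constrains where the children of $Y_i$ can sit, and in any case one only needs the conditional expectation of the future utilities — not the full joint — to be preserved. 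You should make that link explicit; as written, the hypothesis is used only to kill the dependence of $\bar{U}_{j+1}$ on $y_i$ and not to justify that $G'$ with the new tables reproduces the same EU vectors.
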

\begin{table}
\begin{center}
\def\arraystretch{1.2}
\resizebox{\columnwidth}{!}{
\begin{tabular}{|c|c|c|c|c|c|c|c|c|c|c|c|c|c|c|c|c|c|}
\hline
 \multicolumn{3}{|c|}{$\bar{\bm{U}}_2\equiv \bar{\bm{U}}_1$}&\multicolumn{3}{|c|}{$\bar{\bm{U}}_3$}&\multicolumn{3}{|c|}{$\bar{\bm{U}}_4$}&\multicolumn{3}{|c|}{$\bar{\bm{U}}_3^r$} &\multicolumn{3}{|c|}{$\bar{\bm{U}}_2^r\equiv \bar{\bm{U}}_1^r$}&\multicolumn{3}{|c|}{$\bar{\bm{U}}_3^b\equiv \bar{\bm{U}}^b_1$}\\
\hline
$\#$&d.&s.f.&$\#$&d.&s.f.&$\#$&d.&s.f.&$\#$&d.&s.f.&$\#$&d.&s.f.&$\#$&d.&s.f.\\
\hline
4&4&\ding{51}&2&3&\ding{51}&2&3&\ding{51}&4&4&\ding{51}&4&4&\ding{51}&2&3&\ding{51}\\
8&5&\ding{51}&4&4&\ding{51}&4&4&\ding{51}&8&5&\ding{51}&8&5&\ding{51}&4&4&\ding{51}\\
16&6&\ding{51}&8&5&\ding{51}&4&7&\ding{51}&8&8&\ding{51}&16&6&\ding{51}&8&5&\ding{51}\\
8&8&\ding{51}&4&7&\ding{51}&&&&&&&8&8&\ding{51}&4&7&\ding{51}\\
32&9&\ding{51}&16&8&\ding{51}&&&&&&&32&9&\ding{51}&16&8&\ding{51}\\
16&12&\ding{55}&8&11&\ding{55}&&&&&&&16&12&\ding{55}&8&11&\ding{55}\\
\hline
\end{tabular}}
\end{center}
\caption{Polynomial structure of the EUs for the original MID, $\bar{\bm{U}}_j$, for the one after the reversal of the arc $(Y_2,Y_3)$, $\bar{\bm{U}}_j^r$ and for the one after the removal of the barren node $Y_2$, $\bar{\bm{U}}_j^b$. The symbol $\#$ corresponds to the number of monomials, d. to the degree and s.f. to whether or not they are square free. }\label{ciao}
\end{table}

The sufficiency principle can be equally stated for a vector of variables~\cite{smith89a,smith89}. However, we can simply apply the criterion in Proposition~\ref{suff} for each variable of the vector and obtain the same result.
\begin{example} 
The MID in Fig.~\ref{fig-ex} is already moralized. Any path from $Y_2$ into $U_i$, $i\in[3]$, goes through both $Y_3$ and $Y_4$. By Proposition~\ref{suff}, we can delete $Y_2$ and the modified diagram is given on the right of Fig.~\ref{fig:1}. This happens to be equal to the diagram resulting from the reversal of the arc $(Y_2,Y_3)$ and the deletion of~$Y_2$.
\end{example}
We now formalize how this principle changes our parametrization.
\begin{proposition}
\label{parsuff}
Let $i,j,k\in\mathbb{V}$ and $G$ be an MID.
Let $Y_i$ be a vertex removed after the application of the sufficiency principle to $G$ and $G'$ the obtained  MID.
Assume $Y_i$ to be the father of $Y_k$ and 
a parent (not the father) of $Y_j$ in $G$ and let $\Pi'_k$ be the parent set of a vertex $Y_k$ in $G'$.  Then the reparametrization of the  MID  with graph  $G'$ is
\begin{align*}
p'_{ky_k\pi'_k}&=\sum_{y_i\in\mathcal{Y}_i}p_{ky_k\pi_k}p_{iy_i\pi_i},\\
p'_{jy_j\pi'_j}&=\sum_{y_j\in\mathcal{Y}_j}p_{jy_j\pi_j}\frac{\prod_{l\in \Pi_{j}\setminus[i-1]}\sum_{\bm{\mathcal{Y}}_{\Pi_l\cap\Pi_k\cap\Pi_i}} p_{ly_l\pi_l}p_{iy_i\pi_i}}{\sum_{y_i\in\mathcal{Y}_i}\prod_{l\in \Pi_j\setminus[i-1]}\sum_{\bm{\mathcal{Y}}_{\Pi_l\cap\Pi_k\cap\Pi_i}} p_{ly_l\pi_l}p_{iy_i\pi_i}}
\end{align*}
\end{proposition}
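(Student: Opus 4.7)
The plan is to start from the joint factorisation of $P(\bm{Y}_{[n]\setminus\mathbb{D}}\mid\bm{Y}_{\mathbb{D}})$ implied by the DAG $G$, marginalise out $Y_i$, and verify that the resulting distribution factorises according to $G'$ with exactly the conditional probability tables stated. The sufficiency principle (Proposition~\ref{suff}) guarantees that the optimal policy is unchanged, so the task reduces to identifying which numerical expressions in the $\bm{p}$ parameters of $G$ must play the role of the $\bm{p}'$ parameters of $G'$ in the factorisation of the marginalised joint distribution.

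For the first identity, I would exploit the fact that $Y_i$ is the father of $Y_k$: by definition, the unique directed path from any vertex of $\Pi_i$ to $Y_k$ is through $Y_i$, so in $G'$ the parent set of $Y_k$ is precisely $\Pi'_k=(\Pi_k\setminus\{i\})\cup\Pi_i$. Applying the chain rule together with the local Markov property to the factor $p_{k y_k\pi_k}\,p_{i y_i\pi_i}$ and summing $Y_i$ out yields
\[
p'_{k y_k\pi'_k}=\sum_{y_i\in\mathcal{Y}_i}p_{k y_k\pi_k}\,p_{i y_i\pi_i},
\]
as claimed. Here no other factor of the joint distribution depends on $Y_i$ in a way that couples with $Y_k$, because $Y_i$ being the father of $Y_k$ rules out alternative routes.

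For the second identity, the derivation is more delicate since $Y_i$ is a parent but not the father of $Y_j$: there exist indirect routes from ancestors of $Y_i$ to $Y_j$ through the other members of $\Pi_j\setminus\{i\}$, and in particular through the child $Y_k$ of $Y_i$ whenever it lies on such a path. I would start from Bayes' rule,
\[
p'_{j y_j\pi'_j}=\frac{P(Y_j=y_j,\bm{Y}_{\Pi'_j}=\pi'_j)}{P(\bm{Y}_{\Pi'_j}=\pi'_j)},
\]
and express numerator and denominator by summing the factorised joint over $Y_i$ together with the intermediate nodes lying on the indirect paths. For each $l\in\Pi_j\setminus[i-1]$ one has to marginalise the variables in $\Pi_l\cap\Pi_k\cap\Pi_i$, which are exactly the common ancestors responsible for the residual coupling between $Y_i$ and $Y_j$ once $Y_i$ is eliminated. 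Under the d-separation hypothesis invoked by Proposition~\ref{suff}, these sums decouple into the product over $l$ that appears in the statement, and the ratio of numerator and denominator produces the required normalisation.

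The main obstacle is this second step: keeping track of which intermediate variables have to be marginalised, and verifying that the d-separation conditions from Proposition~\ref{suff} genuinely turn what would otherwise be a coupled sum over several ancestor configurations into the product structure $\prod_{l\in\Pi_j\setminus[i-1]}\sum_{\bm{\mathcal{Y}}_{\Pi_l\cap\Pi_k\cap\Pi_i}}p_{l y_l\pi_l}\,p_{i y_i\pi_i}$. The denominator then emerges automatically as the sum over $y_i$ of the numerator's joint, which is the normalising constant forced by conditioning on the new parent set $\Pi'_j$ of $G'$.
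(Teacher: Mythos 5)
Your proposal is correct and follows essentially the same route as the paper: the first identity is obtained by conditioning on $Y_i$ (law of total probability) and then invoking the Markov property, and the second by a Bayes'-theorem computation of the posterior of $Y_i$ given the parents of $Y_j$ with index at least $i$, whose likelihood term factorises over $l\in\Pi_j\setminus[i-1]$ after marginalising the shared ancestors. The paper phrases the second step as rewriting the factor $P(y_i\mid\bm{y}_{\Pi_{j\setminus i}},\bm{y}_{\Pi_i})$ inside the total-probability sum rather than as a ratio of marginal joints, but that is the same calculation.
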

The proof of this proposition is provided in Appendix~\ref{teorema}.
Again, this new parame-trization $\bm{p}'$  implies a change in the EU vectors.
\begin{lemma}
\label{polsuff}
Assume the vertex $Y_i$ is removed using the sufficiency principle and that $Y_j$ is the child of $Y_i$ with the highest index.  Under the notation of Theorem~\ref{polyexp} the EU vectors in $G'$ are such that
\begin{enumerate}
\item \label{bullet1} for $k<i$, the entries of $\bar{\bm{U}}_k'$ have $r_{kba}/r_i$ monomials of degree $d_{kba}-1$, whilst for $k>i$ their structure does not change.
\item \label{bullet2} for $k\leq j$, $\bar{\bm{U}}_k$ has now dimension $\prod_{s\in C_k}r_s$, where $C_k=B_k\cup \Pi_i\setminus \{k,\dots,n\}$, whilst for $k>j$ its dimension does not change.
\end{enumerate}
\end{lemma}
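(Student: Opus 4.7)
The plan is to trace Algorithm~\ref{algo} on $G'$ step by step and compare its execution to that on $G$, using the parameter update in Proposition~\ref{parsuff} and the transformation rules in Proposition~\ref{polyop}. A preliminary observation is that the d-separation hypothesis in Proposition~\ref{suff} forces $Y_i\in\mathbb{V}$, ensures $i\not\in\mathbb{J}$, and guarantees that $Y_i$ is not an argument of any utility; thus in $G$ the vertex $Y_i$ is processed by a pure EUMarginalization with $\bm{p}_i$ and no EUMultiSum.

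For item~\ref{bullet1}, I would handle $k>i$ first. In our symbolic framework the primed parameters produced by Proposition~\ref{parsuff} are treated as fresh indeterminates, so the structure of the entries of $\bar{\bm{U}}_k'$ described in Theorem~\ref{polyexp} depends only on the state spaces $r_j$ and on the combinatorics of random and utility vertices at indices $\geq k$; since $i<k$, none of these are affected by the removal of $Y_i$. For $k\leq i$, the key fact is that executing Algorithm~\ref{algo} on $G'$ coincides, at the level of per-entry polynomial structure, with running it on $G$ but omitting the single EUMarginalization at index $i$: by Proposition~\ref{parsuff} the vectors $\bm{p}_s$ for $s<i$ are unchanged, so operations at indices $<i$ are identical, while operations at indices $>i$ induce the same per-entry transformations from Proposition~\ref{polyop} even if the EU vectors they act on have different dimensions. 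Hence omitting the EUMarginalization at $i$ accounts precisely for the reduction of the monomial count by $r_i$ and of the degree by $1$.

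For item~\ref{bullet2}, I would compute $B_k'$ directly from its definition using the modified parent sets $\Pi_s'=(\Pi_s\setminus\{i\})\cup\Pi_i$ for children $Y_s$ of $Y_i$ and $\Pi_s'=\Pi_s$ otherwise. For $k>j$ no child of $Y_i$ has index $\geq k$ and $Y_i$ itself has index $i\leq j<k$, so no contribution to $B_k$ is affected and $B_k'=B_k$. For $k\leq j$, the children of $Y_i$ with index $\geq k$ (in particular $Y_j$) inject $\Pi_i$ directly into $B_k'$, while the index $i$ drops out since $Y_i$ is no longer in the graph; applying the set difference $\setminus\{k,\dots,n\}$ yields $B_k'=(B_k\cup\Pi_i)\setminus\{k,\dots,n\}=C_k$, from which the dimension formula follows via~\eqref{veccond}.

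The main obstacle is the set-theoretic bookkeeping in item~\ref{bullet2}, specifically reconciling the two distinct routes by which $\Pi_i$ enters the state set---via $Y_i$ itself in $B_k$ and via the reparameterised parent sets of its children in $B_k'$---across the case split $k\leq i$ versus $i<k\leq j$. Once this set equality is established, the polynomial-structure claim of item~\ref{bullet1} is an immediate consequence of Proposition~\ref{polyop}.
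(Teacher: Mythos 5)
Your proposal is correct and follows essentially the same route as the paper's own (much terser) proof: item~\ref{bullet1} is obtained from Proposition~\ref{polyop} by observing that deleting $Y_i$ removes exactly one \texttt{EUMarginalization} from Algorithm~\ref{algo}, and item~\ref{bullet2} reduces to identifying the modified index set $C_k$ that governs the dimension of the EU vectors. Your version simply spells out the set-theoretic bookkeeping and the preliminary d-separation observations that the paper leaves implicit.
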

\begin{proof}
Item 1 of Lemma~\ref{polsuff} is a straightforward consequence of Proposition~\ref{polyop}, since the deletion of the vertex $Y_i$ entails one less EUMarginalization during Algorithm~\ref{algo}. Item 2 of Lemma~\ref{polsuff} follows from the fact that the sets $B_k$ and $C_k$ only affect the dimension of the EU vectors. \end{proof}

Since the application of the sufficiency principle to the diagram of Fig.~\ref{fig-ex} provides the same output network as the one obtained from the reversal of the edge $(Y_2,Y_3)$ and the removal of $Y_2$, an illustration of these results can be found in Table~\ref{ciao}.

\section{Asymmetric decision problems}
\label{asymmetry}
The new symbolic representation of decision problems we introduce here enables us to concisely express a large amount of information that might not be apparent from an ID.  Different types of extra information, often consisting of asymmetries of various kinds, have been explicitly modelled in graphical extensions of the ID model~\cite{Bhattacharjya2012,Bielza2011,Demirer2006,Jensen2006,Smith1993} and are found in the descriptions of many applied decision problems. Although providing a framework for the evaluation of more general decision problems, many of these extensions lose the intuitiveness and the simplicity associated with IDs. Within our symbolic approach we are able to elegantly and concisely characterize asymmetric decision problems through manipulations of the polynomials representing the ID's EU as we show next.

Asymmetries can be categorized in three classes. If the possible outcomes or decision options of a variable vary depending on the past, the asymmetry is called \textit{functional}. If the very occurrence of a variable depends on the past, the asymmetry is said to be \textit{structural}. \textit{Order} asymmetries are present if $\{Y_i:i\in\mathbb{D}\}$ is not totally ordered. In this section we only deal with functional asymmetries.  Heuristically, for a functional asymmetry the observation of $\bm{y}_A$, $A\subset [n]$, restricts the space $\bm{\mathcal{Y}}_B$ associated to a vector $\bm{Y}_B$, such that $A\cap B=\emptyset$. This new space, $\bm{\mathcal{Y}}_B'$ say, is a subspace of $\bm{\mathcal{Y}}_B$. 

In Theorem~\ref{polyasy} we characterize an asymmetry between two chance nodes and, depending on the stage of the evaluation,  this may  entail setting equal to zero monomials in either some or all  the rows of the EU vector.  We present the result for elementary asymmetries of the following form: if $Y_i=y_i$ then $Y_j\neq y_j$. Composite asymmetries are unions of simple asymmetries and the features of the EU vectors in more general cases can be deduced through a sequential application of Theorem~\ref{polyasy}.

\begin{theorem}
\label{polyasy}
Let $G$ be an MID, $Y_i$ and $Y_j$ be two random variables with $j>i$, $U_x$ be the utility node following $Y_j$ in the DS. Assume the asymmetry $Y_i=y_i\Rightarrow Y_j\neq y_j$ holds and that $k$ and $z$ are the highest indices such that $j\in B_k$ and $i\in B_z$ and assume  $k>j$. Then 
\begin{itemize}
\item for $j<t\leq z$,  $\bar{\bm{U}}_t$ has $\prod_{s\in B_t\setminus \{i\cup j\}} r_s$ rows with no monomials;
\item for $i<t\leq j$, $\bar{\bm{U}}_t$ has $\prod_{s\in B_t\setminus \{i\}} r_s$ rows with polynomials all with a different structure. Specifically, these consists, in the notation of Theorem~\ref{polyexp}, of $s_{tba}$ monomials of degree $d_{tba}$, where, for $a=x,\dots,m$ and $b=l,\dots,a$, 
\[
s_{tba}=\left(\binom{a-x}{b-l}-1\right)\prod_{s=t}^{j_a}r_s/r_j;
\]  
\item for $t\leq i$, each row of $\bar{\bm{U}}_t$ has in the notation of Theorem~\ref{polyexp}, $f_{tba}$ monomials of degree $d_{tba}$, where for $a=x,\dots,m$ and $b=l,\dots,a$ 
\[
f_{tba}=\left(\binom{a-x}{b-l}-1\right)\prod_{s=t}^{j_a}r_s/(r_j\cdot r_i).
\]  
\end{itemize}

\end{theorem}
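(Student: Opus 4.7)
The plan is to recast the functional asymmetry $Y_i = y_i \Rightarrow Y_j \neq y_j$ as a vanishing probability constraint $p_{jy_j\pi_j} = 0$ for every instantiation $\pi_j$ compatible with $Y_i = y_i$ (or, when $i \notin \Pi_j$, propagated through a chain of descendants connecting $Y_i$ to $Y_j$), and then trace how this zero propagates through Algorithm~\ref{algo}. By Theorem~\ref{polyexp}, each entry of $\bar{\bm{U}}_t$ is a multilinear polynomial in the probability and utility parameters, so setting some $p_{jy_j\pi_j} = 0$ removes precisely the monomials that contain this factor. The three bullets of the statement correspond naturally to three regimes of the backward induction, partitioned by the position of $t$ relative to $i$ and $j$.

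In the first regime $j < t \leq z$, neither $Y_j$ nor $Y_i$ has been processed by the algorithm at step $t$, so both indices still parameterize the rows of $\bar{\bm{U}}_t$ via $\bm{y}_{B_t}$. For every row whose coordinates contain both $Y_i = y_i$ and $Y_j = y_j$ (with the remaining coordinates free), the subsequent \texttt{EUMarginalization} at level $j$ will multiply the row by the zero probability $p_{jy_j\pi_j}$; consequently, when the asymmetry is imposed upfront, such rows must be identically zero. Counting these forbidden rows by varying the coordinates in $B_t\setminus\{i,j\}$ yields the first bullet's count $\prod_{s \in B_t \setminus \{i,j\}} r_s$.

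In the second regime $i < t \leq j$, the \texttt{EUMarginalization} over $Y_j$ has already been carried out, but $Y_i$ is still a row index; only the rows corresponding to $Y_i = y_i$ feel the asymmetry, yielding the row count $\prod_{s \in B_t \setminus \{i\}} r_s$. For such a row, the sum over $Y_j$ is restricted to the $r_j - 1$ allowed values, removing the monomials that carried the factor tied to $Y_j = y_j$. A careful monomial-by-monomial accounting using Proposition~\ref{polyop}, together with the subset-of-utilities interpretation of $\binom{a-l}{b-l}$ in Theorem~\ref{polyexp}, produces the claimed $s_{tba}$. The third regime $t \leq i$ is then obtained once the subsequent \texttt{EUMarginalization} at step $i$ collapses the $Y_i$ coordinate: every row of $\bar{\bm{U}}_t$ inherits the reduced structure, and the extra factor $1/r_i$ in $f_{tba}$ records that the affected slice amounts to only a fraction $1/r_i$ of the $Y_i$ values.

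The main obstacle will be establishing the precise binomial identity $\binom{a-l}{b-l} \rightsquigarrow \binom{a-x}{b-l} - 1$ for the affected rows. Since the utilities $U_l,\dots,U_{x-1}$ have all their parents preceding $Y_j$ while $U_x,\dots,U_m$ may depend on $Y_j$, one must carefully separate the subsets of $\{U_l,\dots,U_a\}$ according to how many of their elements lie in the ``post-$Y_j$'' block $\{U_x,\dots,U_a\}$ and hence actually carry a dependence on $Y_j$. An inclusion--exclusion argument on these subsets, combined with the bookkeeping of Proposition~\ref{polyop}, should yield the claimed formula, with the division by $r_j$ in $s_{tba}$ recording the shrinkage of the $Y_j$ marginalization range and the $-1$ capturing the single subset-configuration that is wiped out when $Y_j = y_j$ is forbidden.
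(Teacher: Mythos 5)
Your proposal follows essentially the same route as the paper's own proof: identify the monomials corresponding to the impossible joint instantiation $Y_i=y_i,\,Y_j=y_j$ and set them to zero, then split into the same three regimes according to whether $t$ lies above $j$, between $i$ and $j$, or below $i$, counting affected rows via $B_t$ and affected monomials via the products $\prod_{s=t}^{j_a}r_s$ divided by $r_j$ and $r_j r_i$. The binomial bookkeeping $\binom{a-l}{b-l}\rightsquigarrow\binom{a-x}{b-l}-1$ that you flag as the remaining obstacle is also the step the paper's proof leaves implicit, so your plan is at the same level of rigour as the published argument.
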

The proof of this theorem is provided in Appendix~\ref{cippu}. Corollary~\ref{corro} gives a  characterization of simple asymmetries between any two variables, whether they are controlled or non-controlled. This follows  from Theorem~\ref{polyasy} since controlled variables can be thought of as a special case of random ones.

\begin{corollary}\label{corro}
In the notation of Theorem~\ref{polyexp} and under the assumptions of Theorem~\ref{polyasy}, with the difference that $Y_i$ and $Y_j$ are two variables, controlled or non-controlled, we have that
\begin{itemize}
\item for $j<t\leq z$, each row of $\bar{\bm{U}}_t$ has $\prod_{s\in B_t\setminus \{i\cup j\}} r_s$ rows with no monomials;
\item for $i<t\leq j$, $\bar{\bm{U}}_t$ has at most $\prod_{s\in B_t\setminus \{i\}} r_s$ rows with polynomials  all with a different structure. Specifically, these consists of between $s_{tba}$ and $r_{tba}$ monomials of degree $d_{tba}$, for $a=x,\dots,m$ and $b=l,\dots,a$;
\item  for $t\leq i$, some rows of $\bar{\bm{U}}_t$ have a number of monomials of degree $d_{tba}$ between $f_{tba}$ and $r_{tba}$, for $a=x,\dots,m$ and $b=l,\dots,a$.
\end{itemize} 
\label{corasy}
\end{corollary}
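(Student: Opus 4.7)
The plan is to reduce the corollary to Theorem~\ref{polyasy} by analyzing how the backward induction of Algorithm~\ref{algo} differs when $Y_i$ or $Y_j$ is controlled instead of random. At every backward inductive step the algorithm applies either an \texttt{EUMarginalization} (if the variable is random) or an \texttt{EUMaximization} (if the variable is controlled). The proof of Theorem~\ref{polyasy} tracks, row by row, how each asymmetry-forced zero pattern on certain monomials is propagated through the sequence of \texttt{EUMarginalization}'s and \texttt{EUMultiSum}'s. The \texttt{EUMultiSum} operation is unaffected by whether its upstream variable was random or controlled, so only the steps corresponding to $Y_i$ and $Y_j$ require new analysis.

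First I would split into the three new cases: (a) $Y_j\in\mathbb{D}$, $Y_i\in\mathbb{V}$; (b) $Y_j\in\mathbb{V}$, $Y_i\in\mathbb{D}$; (c) both in $\mathbb{D}$. In case (a), at the step for $Y_j$ the algorithm applies \texttt{EUMaximization}, which by Definition~\ref{EU-Maximization} selects, for each $\pi\in\bm{\mathcal{Y}}_{\Pi_j}$, a single row indexed by some $y_j^\ast(\pi)\in\mathcal{Y}_j$. If $\pi$ is compatible with $Y_i=y_i$ then the asymmetry forbids $y_j^\ast(\pi)=y_j$; the corresponding row of $\bar{\bm{U}}_{j+1}$ is excluded, exactly as happens implicitly inside the sum in Theorem~\ref{polyasy}. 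However, whether $y_j^\ast(\pi)=y_j$ would have been chosen in the absence of the asymmetry depends on the numerical values. Hence some rows of $\bar{\bm{U}}_j$ may retain the full polynomial structure of $r_{tba}$ monomials of degree $d_{tba}$ (when the asymmetry does not bite), while others collapse to the $s_{tba}$ monomials predicted by Theorem~\ref{polyasy}. This yields the stated interval $[s_{tba},r_{tba}]$.

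In case (b), the step for $Y_i$ replaces an \texttt{EUMarginalization} with an \texttt{EUMaximization}. For each $\pi\in\bm{\mathcal{Y}}_{\Pi_i}$, only the row $y_i^\ast(\pi)$ is retained. If $y_i^\ast(\pi)=y_i$, then the restriction on $Y_j$ propagates downstream exactly as in Theorem~\ref{polyasy}, producing $f_{tba}$ monomials in the corresponding rows of $\bar{\bm{U}}_t$ for $t\leq i$. If $y_i^\ast(\pi)\neq y_i$, the asymmetry never activates for that row and its structure is the unrestricted $r_{tba}$. Again this gives the stated interval, now applied row-wise. Case (c) follows by combining the two analyses: both maximization steps contribute a choice of whether the restriction is activated, so some rows exhibit the full asymmetric structure predicted by Theorem~\ref{polyasy} and others do not, with all intermediate configurations possible.

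The main obstacle will be formalizing the claim \emph{some rows} in Item~3 of the corollary statement: unlike the marginalization case, where every summand is automatically present, the maximization case produces a row-dependent activation pattern governed by the numerical data. The cleanest route is to argue that for each fixed choice of the selector functions $y_i^\ast(\cdot)$ and $y_j^\ast(\cdot)$, the resulting EU vector is obtained by applying the transformations of Theorem~\ref{polyasy} only to the sub-vector of selected rows, and then to observe that across all admissible selectors the per-row monomial counts sweep out the claimed intervals. The upper bound $r_{tba}$ is attained by selectors that avoid triggering the asymmetry on that row, the lower bound $s_{tba}$ (resp.\ $f_{tba}$) by selectors that trigger it everywhere relevant to that row, completing the reduction to Theorem~\ref{polyasy}.
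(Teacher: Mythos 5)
Your argument is correct and follows essentially the same route as the paper, which disposes of this corollary in a single sentence immediately after stating it (``controlled variables can be thought of as a special case of random ones''); your case analysis of how \texttt{EUMaximization} replaces \texttt{EUMarginalization} at the $Y_i$ and $Y_j$ steps, so that the asymmetry-induced cancellations become policy-dependent and the exact counts of Theorem~\ref{polyasy} relax to intervals and ``some rows'', is precisely the content that sentence leaves implicit. The only caveat is that your closing claim that the lower bounds $s_{tba}$ and $f_{tba}$ are actually \emph{attained} is stronger than the corollary requires (it asserts only membership in an interval) and is not quite accurate when the relevant variable is controlled, since a maximization selects a single full-structure row rather than summing a reduced number of terms; this overreach does not affect the validity of the stated bounds.
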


\begin{example}[Example~\ref{nuclearexample} continued] 
\label{eee}
Assume that the DM believes the decision problem is characterized by three composite asymmetries:
\begin{itemize}
\item if $Y_1$ was fixed to $1$, then $Y_4=1$ cannot be chosen;
\item if either $Y_2$ or $Y_3$ were observed to be equal to $1$ then $Y_5=1$;
\item if  $Y_4=1$ then both $Y_5$ and $Y_6$ are equal to $1$. 
\end{itemize}
\begin{figure} 
\begin{center}
\scalebox{0.85}{
\begin{tikzpicture}[node distance=4cm]
\node[draw,rectangle] (00) at (0,0) {$Y_1$};
\node[draw,circle] (1-1) at (2,-1) {$Y_3$};
\node[draw,circle] (11) at (2,1) {$Y_2$};
\node[draw,rectangle] (20) at (4,0) {$Y_4$};
\node[draw,circle] (31) at (8,1) {$Y_6$};
\node[draw,circle] (3-1) at (8,-1) {$Y_5$};
\node (0-1) at (0,-1) {$U_1$};
\node (21) at (4,1) {$U_3$};
\node (4-1) at (10,-1) {$U_2$};
\draw[->] (00) -- (11);
\draw[->] (00) -- (1-1);
\draw[->,anchor=center, below] ([yshift=0.1cm]00.east) to node {} ([yshift=0.1cm]20.west);
\draw[->,dashed,anchor=center, below] ([yshift=-0.1cm]00.east) to node {\tiny{$Y_1=4|Y_4=0$}} ([yshift=-0.1cm]20.west);
\draw[->] (11) -- (1-1);
\draw[->] (1-1) -- (0-1);
\draw[->] (20) -- (21);
\draw[->] (31) -- (21);
\draw[->] (3-1) -- (4-1);
\draw[->] (11) -- (20);
\draw[->] (1-1) -- (20);
\draw[->] (1-1) -- (3-1);
\draw[->,anchor=center, below] (20.north) to node {} (31);
\draw[->,anchor=center, below] (20.south) to node {} (3-1);
\draw[->] (3-1) -- (31);
\node[draw=black, dashed, ellipse, inner sep =4pt, fit= (11) (1-1)] (circ1) {};
\node[draw=black, dashed, ellipse, inner sep =4pt, fit= (31) (3-1)] (circ2) {};
\draw[->, dashed,anchor=center, below] ([yshift=-1.2cm]circ1.east) to node {\tiny{$Y_2=1,Y_3=1|Y_5=1$}} ([yshift=-0.2cm]3-1.west);
\draw[->, dashed,anchor=center, above] (20) to node {\tiny{$Y_4=1|Y_5=1,Y_6=1$}} (circ2.west);
\end{tikzpicture}
}
\end{center}
\caption{Representation of the asymmetric version of the MID of Fig.~\ref{fig-ex} through a sequential influence diagram. }\label{sid}
\end{figure}
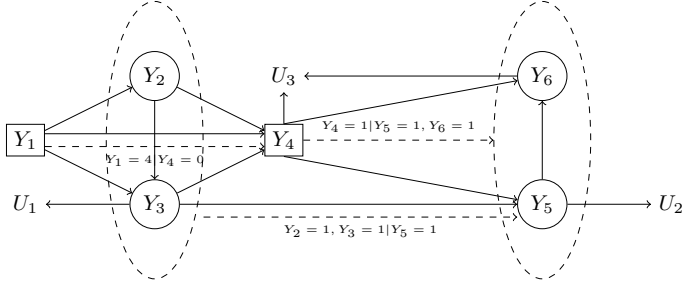
A graphical representation of these asymmetries is given in Fig.~\ref{sid}, in the form of a sequential influence diagram~\cite{Jensen2006}. Asymmetries are represented as labels on new dashed arcs. If the asymmetry is composite, then vertices can be grouped through a dashed ellipse and dashed arcs can either start or finish by the side of these ellipses. Although this generalization of the MID in Fig.~\ref{fig-ex} graphically captures the asymmetries, most of its transparency is now lost. 
Instead asymmetries have the opposite effect on our polynomial representation of MIDs by greatly simplifying the  structure of the EUs.
 
In this asymmetric framework the first row of $\bar{\bm{U}}_6$ corresponds to $k_3\psi_{311}p_{6111}$, whilst its second row is empty.  
This is because according to Theorem~\ref{polyasy} the monomial $k_3\psi_{301}p_{6011}$ in equation~(\ref{eq:ciao}) is cancelled by the asymmetry $Y_4=1\Rightarrow Y_6=1$, $k_3\psi_{311}p_{6101}$ by $Y_4=1\Rightarrow Y_5=1$ and $k_3\psi_{301}p_{6001}$ by both asymmetries. The imposition of asymmetries  further reduces from ten to three the number of monomials in $\bar{\bm{U}}_5$ which becomes 
\[
k_3\psi_{311}p_{6111}p_{511i}+k_2\psi_{21}p_{511i}+hk_2k_3\psi_{311}\psi_{21}p_{6111}p_{511i}, \;\;\;\;\; i=0,1.
\]
Suppose the DM decided to fix $Y_4=0$ if $Y_3=1$ and $Y_4=1$ if $Y_3=0$.
The entry of $\bar{\bm{U}}_3$ for which $Y_2=1$ and $Y_1=1$ can be written as the sum of the terms 
\begin{align*}
&(k_2\psi_{21}+k_3\psi_{311}p_{6111}(1+kk_2\psi_{21}))p_{5110}p_{3011}+k_1(\psi_{10}p_{3011}+\psi_{11}p_{3111})\\
&kk_1k_3\psi_{11}p_{5101}p_{3111}((1+k_2\psi_{21})(\psi_{300}p_{6010}+\psi_{310}p_{6110})).
\end{align*}
This polynomial consists of only nine monomials. This compared with the number of monomials in the symmetric case, $42$ (see Table~\ref{ciao}), means that even in this small problem  the number of monomials is decreased by over three quarters.
\end{example}

So the example above illustrates that under asymmetries the polynomial representation is simpler than standard methods but still able to inform decision centres about the necessary parameters to elicit. A more extensive discussion of the advantages of symbolic approaches in asymmetric contexts, although fully inferential ones, can be found in~\cite{Gorgen2015}.  Finally it is possible to develop a variant of Algorithm~\ref{algo1} which explicitly takes into account the asymmetries of the problem \textit{during} the computation of the EU vectors.  Note that this approach  would  be  computationally  more efficient, since this would require the computation of a smaller number of monomials/polynomials.

\section{An example} \label{sect:example}
\label{sec:example} 
\begin{table}
\begin{center}
\resizebox{\columnwidth}{!}{
\begin{tabular}{|llll||llll|}
\hline
\multicolumn{8}{|c|}{\textbf{Parameters' specifications}}\\
\hline
\multicolumn{4}{|c||}{\textit{Complete}}&\multicolumn{4}{|c|}{\textit{Partial}}\\
\hline
$p_{6111}=0.3$,&$p_{5110}=0.2$,&$\psi_{311}=0$,&$k_3=0.4$&$p_{6100}=0.3$&$\psi_{311}=0,$&$\psi_{21}=0,$&$k_3=0.4$,\\
$p_{6110}=0.2$,&$p_{5101}=0.9$,&$\psi_{310}=0.4$,&$k_2=0.2$&$p_{5110}=0.2$,&$\psi_{310}=0.4$&$\psi_{20}=1,$&$k_2=0.2,$\\
$p_{6101}=0.2$,&$p_{5100}=0.6$,&$\psi_{301}=0.8$,&$k_1=0.2$&$p_{5101}=0.9$,&$\psi_{300}=1,$&&$k_1=0.2,$\\
$p_{6100}=0.3$,&$p_{5111}=0.7$,&$\psi_{300}=1$,&$\,\,h=0.9$&$p_{5100}=0.6$, & & &$\,\,h=0.9$ \\
\multicolumn{2}{|c}{$\psi_{21}=0$,} &\multicolumn{2}{c||}{$\psi_{20}=1$,}&\multicolumn{2}{|c}{$p_{5111}=1-p_{6111}$,}&\multicolumn{2}{c|}{$p_{6001}=p_{6010}$,}\\
\hline
\end{tabular}}
\end{center}
\caption{Complete and partial specification of the parameters associated to MID in Fig.~\ref{fig-ex} for the optimization step over $Y_4$.  By the sum-to-one condition $p_{6001}=p_{6010}$ is equivalent to $p_{6101}=p_{6110}$. 
\label{table:num}}
\end{table}

\begin{figure}
\begin{center}
\scalebox{0.9}{
\includegraphics[scale=0.5]{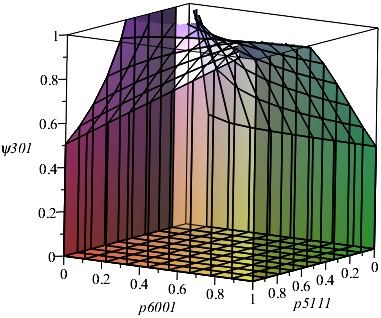}\hspace{1cm}
\includegraphics[scale=0.5]{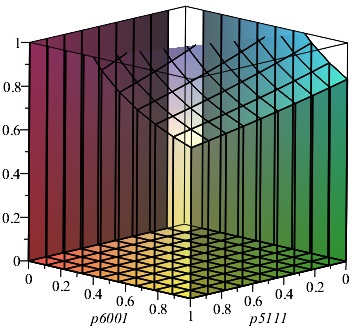}
}
\end{center}
\caption{Admissible domains, expressed in the unknowns $\psi_{301}$, $p_{6001}$ and $p_{5111}$,  for the combinations of parameters leading to a preferred decision $Y_4=0$ (coloured regions) and $Y_4=1$ (white regions) for the MID in Fig.~\ref{fig-ex} given the partial numeric specification in Table~\ref{table:num}.} 
 \label{fig:opt}
\end{figure}

In this section we study the polynomial features of the EUs associated to the MID in Fig~\ref{fig-ex}.  We focus  on the selection of the decision variable $Y_4$  and consider two different scenarios in which the DM provides two different  sets of information of the relevant parameters.   In the first scenario the elicitation is complete, i.e. for each parameter   the DM  delivers  the unique numerical value  specified in the left hand side of Table~\ref{table:num}. The second  scenario combines unique probability specifications, symbolic parameters and qualitative information.  Specifically the DM does not elicit $p_{5111}$, $p_{6001}$, $p_{6010}$, $p_{6011}$ and $\psi_{301}$,  because e.g. there is strong uncertainty related to their values, specifies the two relationships $p_{5111}=p_{6011}$ and $p_{6001}=p_{6010}$  and assigns specific values to the remaining parameters as indicated in Table~\ref{table:num}.

 In the first scenario, using any standard propagation algorithm or by simply substituting the appropriate numerical values from Table~\ref{table:num} into the EU polynomial $\bar{U}_5(y_4,y_3)$ from Table~\ref{table}, the DM would be suggested to choose $Y_4=1$ if $Y_3=0$ and $Y_4=0$ if $Y_3=1$, since
\[ 
\bar{U}_5(1,0)=0.4465,\, \bar{U}_5(0,0)=0.4460 \text{ and } \bar{U}_5(1,1)=0.3074, \, \bar{U}_5(0,1)=0.3755.
\]
In an automated decision making process the DM might overlook the small difference in EU values when $Y_3=0$, which already suggests that small changes in parameters' values may lead to different preferred policies. 

A symbolic study of EUs in the partial elicitation case of the second scenario can provide insights on why the DM's decision making may not be robust. Substituting the partial numeric specification in Table~\ref{table:num} into the polynomial from Table~\ref{table}
yields
\begin{align*}
\bar{U}_5(y_4,y_3) 
= &  0.2 p_{50y_4y_3} + 0.4 \left( \psi_{31y_4} p_{611y_4} + \psi_{30y_4} p_{601y_4} \right) p_{51y_4y_3}  \\
+ & 0.472 \left( \psi_{31y_4} p_{610y_4} + \psi_{30y_4} p_{600y_4} \right) p_{50y_4y_3}
\end{align*}
which is further specialised in 
\begin{align*}
\bar{U}_5(1,1) & =  0.2 (1-p_{5111}) +0.4 \psi_{301} p^2_{5111} + 0.472 \psi_{301} p_{6001} (1-p_{5111}) \\
\bar{U}_5(0,1) & =   0.100976 + 0.6192 p_{6001}\\
\bar{U}_5(1,0) & =0.16 + 0.08 \psi_{301} p_{5111} + 0.3776 p_{6001}  \\
\bar{U}_5(0,0) & = 0.233984 + 0.6192 p_{6001}
\end{align*}
Under the partial specification scenario, the admissible domains when $Y_3=1$,   namely  $\argmax \{ \bar{U}_5(1,1),\bar{U}_5(0,1) \}$,  are reported on the left hand side of Fig.~\ref{fig:opt} and the associated \textit{indifference surface} is defined by the equation $\bar{U}_5(1,1)=\bar{U}_5(0,1)$. 
The right hand side of Fig.~\ref{fig:opt} shows the admissible domains when $Y_3=0$. 
For $Y_3=1$, the combination of values elicited by the DM is well inside the colored region, whilst for $Y_3=0$ it is very closed to the {indifference surface} defined by the points where the DM is indifferent between the two policies. The indifference surface for $Y_3=0$ is very smooth and regular since the associated variety is defined by a simple multilinear polynomial. Conversely, the surface for $Y_3=1$ exhibits more interesting features since the associated variety is defined by a quadratic function.

\begin{figure}
\begin{center}
\includegraphics[scale=0.25]{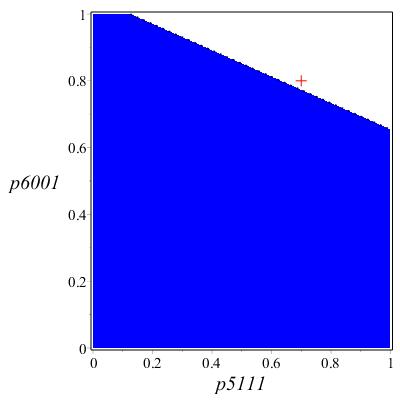}\hspace{0.5cm}
\includegraphics[scale=0.25]{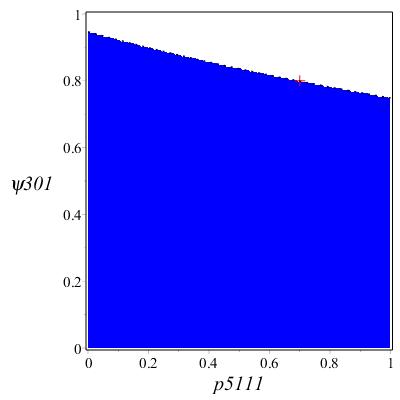}\hspace{0.5cm}
\includegraphics[scale=0.25]{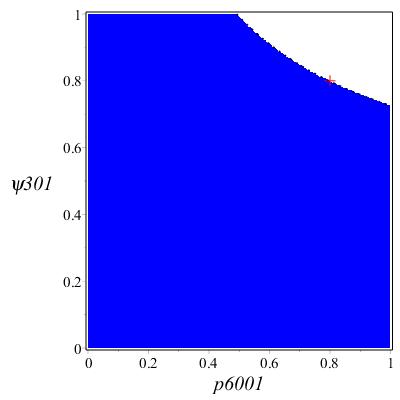}
\end{center}
\caption{Admissible domains for subsets of 2 elements of the parameter space for $Y_3=0$, fixing the third to the value of the complete elicitation (colored for $Y_4=0$ and white for $Y_4=1$). \label{fig:o}}
\end{figure}
\begin{figure}
\begin{center}
\includegraphics[scale=0.25]{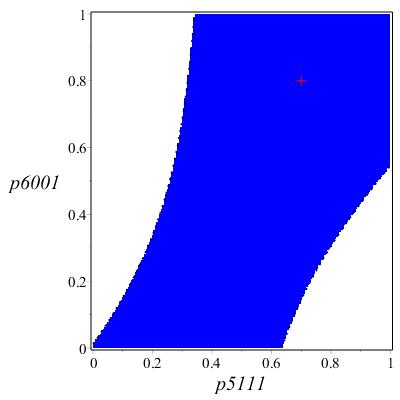}\hspace{0.5cm}
\includegraphics[scale=0.25]{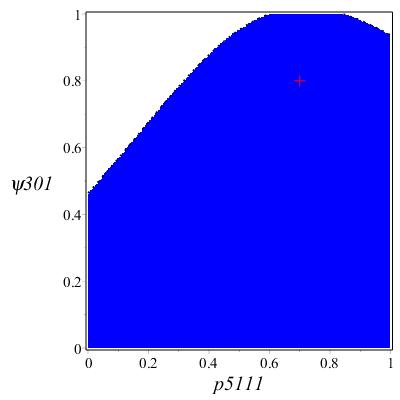}\hspace{0.5cm}
\includegraphics[scale=0.25]{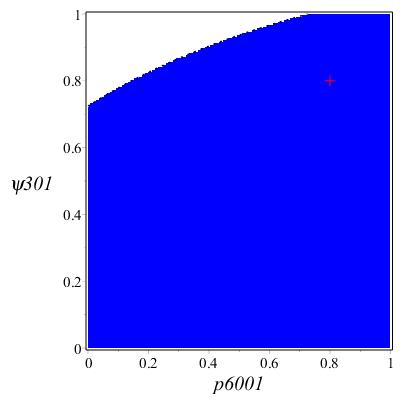}
\end{center}
\caption{Admissible domains for subsets of 2 elements of the parameter space for $Y_3=1$, fixing the third to the value of the complete elicitation (colored for $Y_4=0$ and white for $Y_4=1$). \label{fig:op}}
\end{figure}

Additional information about the DM's decision problem can be gained by investigating the admissible domains defined by two parameters only, when the third one is fixed to the value chosen in the complete elicitation scenario. In Fig.~\ref{fig:o} we report the regions for $Y_3=0$. The admissible domains are very \lq\lq{s}mooth\rq\rq{} and the indifference surfaces are all monotonic functions. In all the plots the complete elicitation point is very close to the indifference curve {and thus small perturbations of the parameters can lead to a different preferred policy}.  Much more robust is the DM's potential decision in the case $Y_3=1$, since all the complete elicitation points are well inside the admissible domains. Note how in this case the admissible domains have a much more complex geometry, due to the polynomial structure of the indifference surface variety. We highlight a few points from this example:
\begin{itemize}
\item  the geometry of the admissible regions has provided insights on the decision making process. In much more complex problems,  tools of algebraic geometry~\cite{Cox2007a}  can still be used to guide DMs and to uncover even more surprising features;
\item although other approaches allow for non exact probability specifications, our symbolic characterization provides a straightforward platform to input qualitative information, e.g. equality of two parameters, which entails a simple reparametrization of the problem;
\item the symbolic approach is particularly efficient for this type of sensitivity studies since a DM can simply plug-in different combinations of values for the unknowns and instantaneously observe the results. In full numerical domains, the propagation of EUs would need to performed for each combination of values and this can become computationally very expensive;
\item if, after robustness studies as the one in this example, the DM is still not convinced about  a preferred course of action,  our  algorithm can be adapted to run separately for each admissible domain back to the root of the MID. In this way it would then output the admissible regions for each multivariate available policy together with its defining polynomial;
\item the identification of the admissible domains consists of the solution of a system of polynomial inequalities. We are currently investigating these domains using semi-algebraic methods~\cite{parrilo2000};
\item all these methods are especially informative in asymmetric domains, since different policies can be associated to polynomial having very different properties. This is because, as shown in Theorem~\ref{polyasy}, in contast to standard MIDs, different policies can be associated with very differently structured polynomials.
\end{itemize}

\section{Discussion}
\label{discussion}
With this work we have developed symbolic methods - currently being successfully applied to the analysis of probabilistic graphical models - to study  MIDs. We have defined a complete toolkit to deal with standard operations for MIDs from a symbolic point of view, such as the computations of EUs, possible manipulations of the diagram and asymmetries. Whilst in open-loop analyses our symbolic definition finds its natural application, in closed-loop  analyses the EU-Maximization operation becomes critical. In some specific cases, as illustrated through the example in Section \ref{sec:example}, partial parameters' elicitations will allow the DM to perform such step. In more general cases we still need to formalize such maximization techniques  for example by adopting semi-algebraic methods  which have already proved successful in other applications~\cite{parrilo2000}. We expect these to be particularly useful in asymmetric domains, since  different polynomial structures can inform even more deeply the DM about  the structure of the decision space.

We here provide a full report of  an implementation of our methodology within an accessible computer algebra system. Of course when addressing very large problems  generic tools can have difficulties handling the number of unknown variables that need to be stored in the computer memory and computations  may become infeasible. However,  there are ways around this memory problem. For example by imposing certain conditions on the model -  formally discussed in~\cite{Leonelli2015} - computations can then be distributed. This can dramatically reduce complexity and make calculations  again feasible albeit with the necessary addition of further software - designed for the particular application - which intelligently merges together the outputs of the different contributing distributed components of the system: see also~\cite{Smith2015}. The simulations carried out in Section~\ref{sect:simulation} and the example in Section~\ref{sect:example} showed that the methodology and algorithm presented in this paper are competitive and allow for the analysis of more general classes of models than traditional methods. Implementations based on specialised programs rather than on a general purpose software like Maple\textsuperscript{\textit{\tiny{TH}}} will enable the analysis of more complex MIDs.

 Also in the case the  variables take values in continuous spaces,
EU  exhibits a similar polynomial representation to the one discussed in this paper for discrete variables. In  the continuous case the unknown quantities of the polynomials are low order moments. Examples of these polynomials are presented in~\cite{Leonelli2015}. Just as in  the discrete case,  the manipulations of the diagrams for policies with continuous variables and their associated asymmetries can be described as operations over the polynomials. A full study of the symbolic representation of EUs in a continuous domain will be reported in future work. 



\begin{thebibliography}{}
\bibitem{Bhattacharjya2008}
Bhattacharjya, D., Shachter, R.D.: Sensitivity analysis in decision circuits. In: Proc. 24th Conf. Uncertainty in Artif. Intel., pp. 34-42 (2008)
\bibitem{Bhattacharjya2010} 
Bhattacharjya, D., Shachter, R.D.: Three new sensitivity analysis methods for influence diagrams. In: Proc. 26th Conf. Uncertainty in Artif. Intel., pp. 56-64 (2010)
\bibitem{Bhattacharjya2012}
Bhattacharjya, D., Shachter, R.D.: Formulating asymmetric decision problems as decision circuits. Decis. Anal. 9, 138-145 (2012)
\bibitem{Bielza2011}
Bielza, C., G\'{o}mez, M., Shenoy, P.P.: A review of representation issues and modeling challenges with influence diagrams. Omega 39, 227-241 (2011)
\bibitem{parrilo2000}
Blekherman, G., Parrilo, P.A., Thomas, R.R.: Semidefinite Optimization and Convex Algebraic Geometry. Siam, Philadelphia (2013)
\bibitem{Borgonovo2014}
Borgonovo, E., Tonoli, F.: Decision-network polynomials and the sensitivity of decision-support models. Eur. J. Oper. Res. 239, 490-503 (2014)
\bibitem{De2004}
de Campos, C.P., Cozman, F.G.: Inference in credal networks using multilinear programming. In: Proc. of the 2nd Starting AI Researcher Symp., pp. 50-61 (2004)
\bibitem{De2008}
de Campos, C.P., Ji, Q.: Strategy selection in influence diagrams using imprecise probabilities. In: Proc. 24th Conf. Uncertainty in Artif. Intel., pp. 121-128 (2008)
\bibitem{Castillo1995}
Castillo, E., Gutierrez, J.M., Hadi, A.S.: Parametric structure of probabilities in Bayesian networks. In: ECSQARU1995, pp. 89-98. Springer (1995)
\bibitem{Castillo1997a}
Castillo, E., Gutierrez, J.M., Hadi, A.S.: Sensitivity analysis in discrete Bayesian networks. IEEE T. Syst. Man. Cy. A 27, 412-423 (1997)
\bibitem{Castillobook}
Castillo, E., Gutierrez, J.M., Hadi, A.S.: Expert systems and probabilistic network models. Springer, New York (2012)
\bibitem{Castillo2003}
Castillo, E., Kj\ae{r}ulff, U.: Sensitivity analysis in Gaussian Bayesian networks using a symbolic-numerical technique. Reliab. Eng. Syst. Safe. 79, 139-148 (2003)
\bibitem{Chan2001}
Chan, H., Darwiche, A.: When do Numbers Really Matter? In: Proc. 17th Conf. Uncertainty in Artif. Intel., pp. 65-74 (2001)
\bibitem{Chan2004}
Chan, H., Darwiche, A.: Sensitivity analysis in Bayesian networks: from single to multiple parameters. In: Proc. 20th Conf. Uncertainty in Artif. Intel., pp. 67-75 (2004)
\bibitem{Coupe2002}
Coup\'{e}, V.M.H., van der Gaag, L.C.: Properties of sensitivity analysis of Bayesian belief networks. Ann. Math. Artif. Intel. 36, 323-356 (2002)
\bibitem{Cox2007a}
Cox, D.A., Little, J., O’Shea, D.: Ideals, Varieties, and Algorithms. Springer, New York (2007)
\bibitem{Cozman2000}
Cozman, F.G.: Credal networks. Artif. Intel. 120, 199-233 (2000)
\bibitem{Darwiche2003}
Darwiche, A.: A differential approach to inference in Bayesian networks. J. ACM 50,
280-305 (2003)
\bibitem{Dawid2014}
Dawid, A.P., Constantinou, P.: A formal treatment of sequential ignorability. Stat. Biosci. 6, 166-188 (2014)
\bibitem{Demirer2006}
Demirer, R., Shenoy, P.P.: Sequential valuation networks for asymmetric decision problems.
Eur. J. Oper. Res. 169, 286-309 (2006)
\bibitem{Felli2004}
Felli, J.C., Hazen, G.B.: Javelin diagrams: a graphical tool for probabilistic sensitivity
analysis. Decis. Anal. (2), 93-107 (2004)
\bibitem{French1989}
French, S.: Readings in Decision Analysis. CRC Press, Boca Raton (1989)
\bibitem{Van2007}
van der Gaag, L.C., Renooij, S., Coup\'{e}, V.M.H.: Sensitivity analysis of probabilistic networks. In: Advances in Probabilistic Graphical Models, pp. 103-124. Springer (2007)
\bibitem{Gorgen2015}
G\"{o}rgen, C., Leonelli, M., Smith, J.Q.: A differential approach for staged trees. In: ECSQARU2015, pp. 346-355. Springer (2015)
\bibitem{Howard1968}
Howard, R.: The foundations of decision analysis. IEEE T. Syst. Man. Cyb. 4, 211-219 (1968)
\bibitem{Howard1983}
Howard, R.A., Matheson, J.E.: Influence diagrams. Decis. Anal. 2, 127-143 (2005)
\bibitem{Jensen1994}
Jensen, F., Jensen, F.V., Dittmer, S.L.: From influence diagrams to junction trees. In: Proc. 10th Conf. Uncertainty in Artif. Intel., pp. 367-373 (1994)
\bibitem{Jensen2006}
Jensen, F.V., Nielsen, T.D., Shenoy, P.P.: Sequential influence diagrams: a unified asymmetry framework. Int. J. Approx. Reason. 42, 101-118 (2006)
\bibitem{Keeney1974}
Keeney, R.L.: Multiplicative utility functions. Oper. Res. 22, 22-34 (1974)
\bibitem{keeney93}
Keeney, R.L., Raiffa, H.: Decision with Multiple Objectives. Cambridge University Press,
Cambridge (1976)
\bibitem{Kikuti2011}
Kikuti, D., Cozman, F.G., Shirota Filho, R.: Sequential decision making with partially
ordered preferences. Artif. Intel. 175, 1346-1365 (2011)
\bibitem{Koller2009}
Koller, D., Friedman, N.: Probabilistic Graphical Models. MIT press, Cambridge (2009)
\bibitem{Leonelli2015r}
Leonelli, M., G\"{o}rgen, C., Smith, J.Q.: Sensitivity analysis, multilinearity and beyond.
Tech. rep., arXiv:1512.02266 (2015)
\bibitem{Leonelli2015}
Leonelli, M., Smith, J.Q.: Bayesian decision support for complex systems with many distributed experts. Ann. Oper. Res. 235, 517-542 (2015)
\bibitem{Nielsen2003}
Nielsen, T.D., Jensen, F.V.: Sensitivity analysis in influence diagrams. IEEE T. Syst. Man. Cy. A 33, 223-234 (2003)
\bibitem{Jensenbook}
Nielsen, T.D., Jensen, F.V.: Bayesian networks and decision graphs. Springer, New York
(2009)
\bibitem{Pearl1988}
Pearl, J.: Probabilistic reasoning in intelligent systems. Morgan Kaufmann, San Francisco
(1988)
\bibitem{Sanner2010}
Sanner, S., Kersting, K.: Symbolic dynamic programming for first-order POMDPs. In:
Proc. 24th AAAI Conf. Artif. Intel., pp. 1140-1146 (2010)
\bibitem{Shachter1986}
Shachter, R.D.: Evaluating influence diagrams. Oper. Res. (6), 871-882 (1986)
\bibitem{Shachter1990}
Shachter, R.D.: An ordered examination of influence diagrams. Networks (20), 535-563
(1990)
\bibitem{Smith1993}
Smith, J.E., Holtzman, S., Matheson, J.E.: Structuring conditional relationships in influence diagrams. Oper. Res. 41, 280-297 (1993)
\bibitem{smith89}
Smith, J.Q.: Influence diagrams for Bayesian decision analysis. Eur. J. Oper. Res. 40,
363-376 (1989)
\bibitem{smith89a}
Smith, J.Q.: Influence diagrams for statistical modelling. Ann. Stat. 17, 654-672 (1989)
\bibitem{Smith2010}
Smith, J.Q.: Bayesian Decision Analysis: Principles and Practice. Cambridge University
Press, Cambridge (2010)
\bibitem{Smith2015}
Smith, J.Q., Barons, M.J., Leonelli, M.: Coherent frameworks for statistical inference
serving integrating decision support systems. Tech. rep., CRISM15-10, Warwick University
(2015)
\bibitem{Tatman1990}
Tatman, J.A., Shachter, R.D.: Dynamic programming and influence diagrams. IEEE
Trans. Systems Man Cybernet. 20, 365-379 (1990)
\bibitem{Zamani2012}
Zamani, Z., Sanner, S., Fang, C.: Symbolic dynamic programming for continuous state
and action MDPs. In: Proc. 26th AAAI Conf. Artif. Intel., pp. 1839-1845 (2012)
\end{thebibliography}

\appendix
\section{Proofs}
\subsection{Proof of Proposition~\ref{i-th}}
We develop the proof via backward induction over the random and decision vertices of the MID, starting from $Y_n$. Define, for $i\in[n]$,
\begin{equation*}
\hat{U}_i=\int_{\mathcal{Y}_{[n]_{i-1}^{\mathbb{V}}}}\max_{\mathcal{Y}_{[n]_{i-1}^{\mathbb{D}}}}\sum_{I\in\mathcal{P}_0([m])}h^{n_I-1}\prod_{i\in I}k_iU_i(\bm{y}_{P_i})f\left(\bm{y}_{[n]_{i-1}^{\mathbb{V}}}\;|\;\bm{y}_{[i{-1}]}\right)\dr\bm{y}_{[n]_{i-1}^{\mathbb{V}}},
\end{equation*}
where $[n]_{i-1}^{\mathbb{V}}=[n]\setminus[i-1]\cap \mathbb{V}$, $[n]_{i-1}^{\mathbb{D}}=[n]\setminus[i-1]\cap \mathbb{D}$ and $\Pi_{[n]_{i-1}^{\mathbb{V}}}=\cup_{j\in[n]_{i-1}^{\mathbb{V}}}\Pi_j$. The quantity $\hat{U}_i$ corresponds to an overall EU score after having marginalized/maximized $Y_i,\dots,Y_n$.

The DM's preferences are a function of $Y_n$ only through $k_mU_m(\bm{y}_{P_m})$, since by construction $n=j_m\in\mathbb{J}$. Therefore this quantity can be either maximized or marginalized as in equation~(\ref{cond4}) to compute $\bar{U}_n(\bm{y}_{B_n})$. Note that $B_n$ includes only the indices of the variables $\bar{U}_n$ formally depends on, since $B_n=P_m\setminus \{n\}$, if $n\in\mathbb{D}$, whilst $B_n=P_m\cup \Pi_n\setminus \{n\}$, if $n\in\mathbb{V}$. Then 
\begin{equation*}
\label{eq:proofpro}
\hat{U}_n=\sum_{I\in\mathcal{P}_0([m])}h^{n_I-1}\prod_{i\in I}\left(\mathbbm{1}_{\{i\neq n\}}\left[ k_iU_i(\bm{y}_{P_i})\right]+\mathbbm{1}_{\{i=n\}}\left[\bar{U}_i(\bm{y}_{B_i})\right]\right).
\end{equation*}
Now consider $Y_{n-1}$. If $n-1\not\in \mathbb{J}$, then $\hat{U}_n$ is a function of $Y_{n-1}$ only through $\bar{U}_n$. Therefore maximization and marginalization steps can be computed as in equation~(\ref{cond1}) to compute $\bar{U}_{n-1}(\bm{y}_{B_{n-1}})$. Again $B_{n-1}$ includes the indices of the variables $\bar{U}_{n-1}$ formally depends on, since $B_{n-1}=P_m\setminus\{n,n{-1}\}$, if $n,n{-1}\in\mathbb{D}$, $B_{n-1}=P_m\cup\Pi_n\cup\Pi_{n-1}\setminus\{n,n{-1}\}$, if $n,n{-1}\in\mathbb{V}$, $B_{n-1}=P_m\cup\Pi_{n-1}\setminus\{n,n{-1}\}$, if $n\in\mathbb{D}$ and $n{-1}\in\mathbb{V}$,  $B_{n-1}=P_m\cup\Pi_n\setminus\{n,n{-1}\}$, if $n\in\mathbb{V}$ and $n{-1}\in\mathbb{D}$. Then 
\begin{equation*}
\hat{U}_{n-1}=\sum_{I\in\mathcal{P}_0([m])}h^{n_I-1}\prod_{i\in I}(\mathbbm{1}_{\{i\neq n\}} k_iU_i(\bm{y}_{P_i})+\mathbbm{1}_{\{i=n\}}\bar{U}_{i-1}(\bm{y}_{B_{i-1}})).
\end{equation*}
 Conversely, if $n-1\in\mathbb{J}$, $\hat{U}_n$ is potentially a function of $Y_{n-1}$ through both $U_{m-1}(\bm{y}_{P_{m-1}})$ and $\bar{U}_n(\bm{y}_{B_n})$ and note that $\hat{U}_n$ can be written in this case as
\begin{equation*}
\hat{U}_n=\sum_{I\in\mathcal{P}_0([m{-2}])}h^{n_I-1}\prod_{i\in I}k_iU_i(\bm{y}_{P_i})+U_{m-1}'+\left(\sum_{i\in\mathcal{P}_0([m{-2}])}h^{n_i-1}\prod_{i\in I}k_iU_i(\bm{y}_{P_i})\right)U_{m-1}', 
\end{equation*}  
where 
\begin{equation*}
U_{m-1}'=hk_{m-1}U_{m-1}(\bm{y}_{P_{m-1}})\bar{U}_n(\bm{y}_{B_n})+k_{m-1}U_{m-1}(\bm{y}_{P_{m-1}})+\bar{U}_n(\bm{y}_{B_n}).
\end{equation*}
Therefore optimization and marginalization steps can be performed over $U_{m-1}'$ as specified in the two equations~(\ref{cond23}) respectively. Then note that $\hat{U}_{n-1}$ can be written as 
\begin{align*}
\hat{U}_{n{-1}}&=\hspace{-0.4cm}\sum_{I\in\mathcal{P}_0([m{-2}])}\hspace{-0.4cm}h^{n_I-1}\prod_{i\in I}k_iU_i(\bm{y}_{P_i})+\bar{U}_{n-1}(\cdot)+\left(\sum_{i\in\mathcal{P}_0([m{-2}])}\hspace{-0.4cm}h^{n_i-1}\prod_{i\in I}k_iU_i(\bm{y}_{P_i})\right)\bar{U}_{n-1}(\cdot)\\
&=\hspace{-0.4cm}\sum_{I\in\mathcal{P}_0([m{-1}])}\hspace{-0.4cm}h^{n_I-1}\prod_{i\in I}(\mathbbm{1}_{\{i\neq n-1\}} k_iU_i(\bm{y}_{P_i})+\mathbbm{1}_{\{i=n-1\}}\bar{U}_i(\bm{y}_{B_i})).
\end{align*}

Now for a $j\in[n{-2}]$ and assuming with no loss of generality that $k$ is the index of a utility vertex such that $j_{k-1}<j\leq j_k$, we have that 
\begin{equation*}
\label{eq:proofproof}
\hat{U}_j=\sum_{I\in\mathcal{P}_0([k])}h^{n_I-1}\prod_{i\in I}(\mathbbm{1}_{\{i\neq j\}} k_iU_i(\bm{y}_{P_i})+\mathbbm{1}_{\{i=j\}}\bar{U}_i(\bm{y}_{B_i})).
\end{equation*}
Therefore at the following step, when considering $Y_{j-1}$, we can proceed as done with $Y_{n-1}$ by maximization and marginalization in equations~(\ref{cond23})-(\ref{cond1}) to compute $\hat{U}_{j-1}$. Thus at the conclusion of the procedure, $\bar{U}_1$ yields the EU of the optimal decision.
\label{proof:proof}
\subsection{Proof of Theorem~\ref{polyexp}}
\label{appendix1}
For a subset $I\in \mathcal{P}_0([m])$, let $j_I$ be the index of the variable appearing before the utility vertex with index $U_{\max_{I}}$ in the decision sequence. Let $C_{i,I}=\{z\in\mathbb{V}:i\leq z\leq j_I\}$ and recall that $l$ is the index of the first utility node following $Y_i$ in the DS. The EU function of equation~(\ref{cond4})-(\ref{cond1}) can be (less intuitively) written as $\bar{U}_i(\bm{y}_{B_i})=\sum_{I\in \mathcal{P}_0(\{l,\dots,m\})}\bar{U}_{i,I}(\bm{y}_{B_i})$, where  $\bar{U}_{i,I}(\bm{y}_{B_i})$ is defined as
\begin{equation}
\label{proof1}
\bar{U}_{i,I}(\bm{y}_{B_i})=\sum_{I\in \mathcal{P}_0(\{l,\dots,m\})}h^{n_I-1}\prod_{s\in I}k_sU_s(\bm{y}_{P_s})\sum_{\bm{y}_{C_{i,I}}\in\bm{\mathcal{Y}}_{C_{i,I}}}\prod_{j\in C_{i,I}}P(y_j|\bm{y}_{\Pi_j}).
\end{equation}
The EU therefore depends on the power set of the indices of the utility vertices subsequent to $Y_i$ in the decision sequence. We can note that for any $I,J\in \mathcal{P}(\{l,\dots,m\})$ such that $\# I=\# J$ and $U_{\max_{I}}=U_{\max_{J}}$, $\bar{U}_{i,I}(\bm{y}_{B_i})$ and $\bar{U}_{i,J}(\bm{y}_{B_i})$ have the same polynomial structure since $C_{i,I}=C_{i,J}$. Now for $a=l,\dots,m$ and $b=l,\dots,a$, by the properties of binomial coefficients, $\binom{a-l}{b-l}$ counts the number of elements $I\in\mathcal{P}_0(\{l,\dots,m\})$ having $\# I=b-l+1$ and including $a$. Thus $r_{iba}$ in equation~(\ref{struct}) counts the correct number of monomials having a certain degree since $\bm{\mathcal{Y}}_{C_{i,I}}=\times_{t\in C_{i,I}}\mathcal{Y}_t$. Further note that considering each combination of $b$ and $a$ in the ranges specified above, we count each element of $\mathcal{P}_0(\{l,\dots,m\})$.

 By having a closer look at $d_{iba}$ in equation~(\ref{struct}) it is easy to deduce the corresponding degree of these monomials. The first term of $d_{iba}$, $(b-l)$, computes the degree associated to the criterion weight $h$, since $b-l=n_I-1$ and the second term, $2(b-l+1)$, computes the degree associated to the product between the criterion weights $k_s$ and the utilities $U_s(\bm{y}_{P_s})$ for $s\in C_{i,I}$. The last term $w_{ia}$ corresponds to the degree deriving from the probabilistic part of equation~(\ref{proof1}), which is equal to the number of non-controlled vertices between $Y_i$  and $Y_{j_{\max_{I}}}$ (both included).

Since the set $B_i$ includes the arguments of $\bar{U}_i(\bm{y}_{B_i})$ and $\bm{\mathcal{Y}}=\times_{i\in[n]}\mathcal{Y}_i$, equation~(\ref{veccond}) guarantees that the dimension of the EU vector is $\prod_{t\in B_i}r_t$.

\subsection{Proof of Proposition~\ref{par}.}
After the reversal of the arc $(Y_i,Y_j)$ into $(Y_j,Y_i)$, the new parent sets of these two variables are $\Pi'_j=\{\Pi_j \cup \Pi_i\setminus i\}$ and $\Pi'_i=\{j\cup \Pi_i\cup \Pi_j\setminus i\}$. Call $\Pi_{j\setminus i}=\{\Pi_j\setminus i\}$.  It then follows that
\begin{align*}
p_{iy_i\pi'_i}&=P(y_i\;|\; \bm{y}_{\Pi'_i})=P(y_i\;|\; \bm{y}_{\Pi_{j\setminus i}}, \bm{y}_{\Pi_i},y_j)=\frac{P(y_j\;|\; \bm{y}_{\Pi_{j\setminus i}}, \bm{y}_{\Pi_i},y_i)P(y_i\;|\; \bm{y}_{\Pi_{j\setminus i}}, \bm{y}_{\Pi_i})}{P(y_j\;|\; \bm{y}_{\Pi_{j\setminus i}}, \bm{y}_{\Pi_i})}\\
&=\frac{P(y_j\;|\;\bm{y}_{\Pi_j})P(y_i\;|\;\bm{y}_{\Pi_i})}{P(y_j\;|\;\bm{y}_{\Pi_{j\setminus i}},\bm{y}_{\Pi_i})}\hspace{-0.05cm}=\hspace{-0.05cm}\frac{P(y_j\;|\;\bm{y}_{\Pi_j})P(y_i\;|\;\bm{y}_{\Pi_i})}{\sum_{y_i\in\mathcal{Y}_i}\hspace{-0.075cm}P(y_j\;|\;y_i,\bm{y}_{\Pi_{j\setminus i}})P(y_i\;|\;\bm{y}_{\Pi_i})}\hspace{-0.05cm}=\hspace{-0.05cm}\frac{p_{jy_j\pi_j}p_{iy_i\pi_i}}{\sum_{y_i\in\mathcal{Y}_i}\hspace{-0.05cm}p_{jy_j\pi_j}p_{iy_i\pi_i}},
\end{align*}
and
\begin{equation*}
p_{jy_j\pi'_j}'=P(y_j\;|\; \bm{y}_{\Pi'_j})=P(y_j\;|\; \bm{y}_{\Pi_{j\setminus i}},\bm{y}_{\Pi_i})=\hspace{-0.1cm}\sum_{y_i\in\mathcal{Y}_i}P(y_j\;|\; \bm{y}_{\Pi_j})P(y_i\;|\;\bm{y}_{\Pi_i})=\hspace{-0.1cm}\sum_{y_i\in\mathcal{Y}_i}p_{jy_j\pi_j}p_{iy_i\pi_i}.
\end{equation*}

The proof of the barren node removal easily follows from the fact that the vertex is not included anymore in the MID.
\label{proofteo2}
\subsection{Proof of Lemma~\ref{polarc} and \ref{br}.}
\label{corollario}
We first consider the arc reversal and the change of dimension of the vectors. If $j\not\in \mathbb{J}$ the sets $B_k$ that are affected by the arc reversal are only the ones such that  $k\in\Pi_i\cup \Pi_j$ and the set $B'_k$ simply takes into account the presence of the additional edges in $G'$. If $j\in \mathbb{J}'$ then the sets $B_k$ affected by the arc reversal are the ones such that $k\in \Pi_i\cup \Pi_j\cup P_{j_j}$ and the set $B''_k$ additionally takes into account that the indices in $P_{j_j}$ are included only before the EUMarginalization between $\bar{\bm{U}}_{i+1}$ and $\bm{p}_j$. The final case is if $j\not \in \mathbb{J}'$, which can be seen as a combination of the previous two cases.

Now consider the polynomial structure of the entries after an arc reversal. If $j\not\in \mathbb{J}$, then the adjusted Algorithm~\ref{algo} simply computes an EUMarginalization between $\bar{\bm{U}}_{j+1}$ and $\bm{p}_i$ instead of $\bm{p}_j$. Therefore the entries of $\bar{\bm{U}}_j$ have $r_{jba}'=r_ir_{(j+1)ba}/r_j$ monomials of degree $d_{(j+1)ba}$ and, until the adjusted algorithm computes $\bar{\bm{U}}_i$, the change in the structure is propagated through the \lq{E}UOperations\rq{.} If $j\in\mathbb{J}'\cap\mathbb{J}$, then instead of an EUMultiSum and a EUMarginalization, now the algorithm only computes an EU-Marginalization and, as before, the change is propagated until $\bar{\bm{U}}_i$. As in the previous paragraph, the last case can be seen as combination of the previous two situations.

Consider now the deletion of the barren node $Y_i$. The set $B_z$ is the one with the highest index which includes $i$ in $G$. Thus, for $i<k\leq z$, $i\in B_k$ and $\bar{\bm{U}}_k$ is conditional on $Y_i=y_i$. The deletion of this vertex therefore implies that the dimension of the vector becomes $c_k'/r_i$. For $k\leq i$, Algorithm~\ref{algo} now performs one EUMarginalization less and, from Proposition~\ref{polyop}, we deduce that $\bar{U}_k'$ has now $r_{kba}/r_i$ monomials of degree $d_{kba}-1$.
 
 \subsection{Proof of Proposition~\ref{parsuff}.}
\label{teorema}
Let $\Pi_{k\setminus i}=\Pi_k\setminus \{i\}$. If $Y_i$ is parent of $Y_k$ we have that
\begin{align}
p'_{ky_k\pi'_k}&=P(y_k\;|\; \bm{y}_{\Pi'_k})=P(y_k\;|\; \bm{y}_{\Pi_i},\bm{y}_{\Pi_{k\setminus i}})
=\sum_{y_i\in\mathcal{Y}_i} P(y_k\;|\; \bm{y}_{\Pi_k},\bm{y}_{\Pi_i})P(y_i\;|\; \bm{y}_{\Pi_{k\setminus i}},\bm{y}_{\Pi_i})\label{nonso}\\
&=\sum_{y_i\in\mathcal{Y}_i}P(y_k\;|\; \bm{y}_{\Pi_k})P(y_i\;|\; \bm{y}_{\Pi_i})= \sum_{y_i\in\mathcal{Y}_i}p_{ky_k\pi_k}p_{iy_i\pi_i}\nonumber
\end{align}
If $Y_i$ is a parent but not the parent of $Y_j$, then $P(y_i\;|\; \bm{y}_{\Pi_{j\setminus i}},\bm{y}_{\Pi_i})$ as in equation~(\ref{nonso}) can be written as
\begin{align*}
P(y_i\;|\; \bm{y}_{\Pi_{j\setminus i}},\bm{y}_{\Pi_i})&=P(y_i\;|\;\bm{y}_{\Pi_{j}\setminus [i-1]},\bm{y}_{\Pi_{j}\cap [i-1]},\bm{y}_{\Pi_i})\\
&=\frac{P(\bm{y}_{\Pi_{j}\setminus [i-1]}\;|\;y_i,\bm{y}_{\Pi_{j}\cap[i-1]},\bm{y}_{\Pi_i})P(y_i\;|\;\bm{y}_{\Pi_i})}{\sum_{y_i\in\mathcal{Y}_i}P(\bm{y}_{\Pi_{j}\setminus [i-1]}\;|\;y_i,\bm{y}_{\Pi_{j}\cap [i-1]},\bm{y}_{\Pi_i})P(y_i\;|\;\bm{y}_{\Pi_i})}\\
&=\frac{\prod_{l\in \Pi_{j}\setminus[i-1]}\sum_{\bm{\mathcal{Y}}_{\Pi_i\cap\Pi_j\cap\Pi_l}} P(y_l\;|\; \bm{y}_{\Pi_l})P(y_i\;|\;\bm{y}_{\Pi_i})}{\sum_{y_i\in\mathcal{Y}_i}\prod_{l\in \Pi_j\setminus[i-1]}\sum_{\bm{\mathcal{Y}}_{\Pi_i\cap\Pi_j\cap\Pi_l}} P(y_l\;|\; \bm{y}_{\Pi_l})P(y_i\;|\;\bm{y}_{\Pi_i})},
\end{align*}
\subsection{Proof of Theorem~\ref{polyasy}}
\label{cippu}
For $i,j,k,l\in\mathbb{V}$ and $s,t\in [m]$, an asymmetry $Y_i=y_i\Rightarrow Y_j=y_j$ implies that any monomials that include terms of the form $p_{ky_k\pi_k}$, $\psi_{s\pi_s}$, $p_{ky_k\pi_k}p_{ly_l\pi_l}$, $\psi_{t\pi_t}\psi_{s\pi_s}$ and $p_{ky_k\pi_k}\psi_{s\pi_s}$ entailing both instantiations $y_i$ and $y_j$ are associated to a non possible combination of events, with $y_k\in\mathcal{Y}_k$, $\pi_k\in\bm{\mathcal{Y}}_{\Pi_k}$, $y_l\in\mathcal{Y}_l$, $\pi_l\in\bm{\mathcal{Y}}_{\Pi_l}$, $\pi_t\in\bm{\mathcal{Y}}_{P_t}$ and  $\pi_s\in\bm{\mathcal{Y}}_{P_s}$. Thus these monomials have to be set equal to zero. 

For $j<t\leq z$, $\bar{\bm{U}}_t$ has an associated set $B_t$ which includes both $i$ and $j$ and consequently $\prod_{s\in B_t\setminus \{i\cup j\}}r_s$ rows of the vector corresponds to the conditioning on $Y_i=y_i$ and $Y_j=y_j$. Therefore all the monomials in those rows have to be set equal to zero.

For $i<t\leq j$, the index $i$ is in the set $B_t$, whilst the variable $Y_j$ has been already EUMarginalized. Thus, there are only $\prod_{s\in B_t\setminus \{i\}}r_s$ rows conditional on the event $Y_i=y_i$. In those rows only some of the monomials are associated to the event $Y_j=y_j$. Specifically, the ones implying $Y_j=y_j$ can only be multiplying a term including a $\bm{\psi}_{xP_x}$ from a utility vertex $U_x$ subsequent to $Y_j$ in the MID DS. We can deduce that there are 
$
\prod_{s=t}^{j_a}r_s/r_j
$ monomials of degree $d_{tba}$ that include the case $Y_j=y_j$ in such entries of $\bar{\bm{U}}_t$, for $a=x,\dots,m$ and $b=l,\dots,a$ (using the notation of Theorem~\ref{polyexp}).

Lastly, if $t\leq i$, then the set $B_t$ does not include $i$ and $j$, which have been both EUMarginalized. Thus monomials including a combination of the events $Y_j=y_j$ and $Y_i=y_i$ appears in each row of $\bar{\bm{U}}_t$. Similarly as before, we can deduce that there are $\prod_{s=t}^{j_a}r_s/(r_i\cdot r_j)$ monomials of degree $d_{tba}$, $a=x,\dots,m$, $b=l,\dots,a$, implying the event $Y_i=y_i\land Y_j=y_j$.

\section{Maple Code}
\label{maple}

\subsection{Initialization functions}
\label{b1}
\begin{alltt}
### Required Packages ###
with(ArrayTools): with(LinearAlgebra):
\end{alltt}

\begin{alltt}
### Computation of the highest index in each parent set of a utility node ###
# \textbf{Inputs}: PiU::table, parent sets of utility nodes; m::integer, num. utility nodes
# \textbf{Output}: J::list
\end{alltt}

\begin{alltt}
CompJ := \textbf{proc}(PiU,m) \textbf{local} i,j:
\textbf{for} j  \textbf{to} m \textbf{do} J[j] := max(PiU[J]) \textbf{end do}:
\textbf{return} convert(J,list):
\textbf{end proc}:
\end{alltt}

\begin{alltt}
### Computation of the indices of the argument of the EU at step i ###
# \textbf{Inputs}: PiU::table; PiV::table, parent sets of random nodes; i::integer;
# n::integer, number of random nodes; J::list
# \textbf{Output}: Bi[i]::set
\end{alltt}

\begin{alltt}
CompBi := \textbf{proc}(PiU,PiV,i,n,J) \textbf{local} Bi,part,j:
Bi[i], part := \{\},\{\}:
\textbf{for} j \textbf{from} i \textbf{to} n \textbf{do}
 part := part \textbf{union} \{j\}:
 \textbf{if} member(j,V) \textbf{then} Bi[i] := Bi[i] \textbf{union} PiV[j] \textbf{end if}:
 \textbf{if} member(j,J,'l') \textbf{then} Bi[i] := Bi[i] \textbf{union} PiU[l] \textbf{end if}:
\textbf{end do}:
Bi[i] := Bi[i] \textbf{minus} part:
\textbf{return} Bi[i]:
\textbf{end proc}:
\end{alltt}

\begin{alltt}
### Initialization of an MID ###
# \textbf{Inputs}: p::table, probability vectors; psi::table, utility vectors; PiV::table;
# PiU::table; n::integer; m::integer
# \textbf{Outputs}: J::list; Bi::list; u::table, EU vectors
\end{alltt}

\begin{alltt}
Initialize := \textbf{proc}(p, psi, PiV, PiU, n, m) \textbf{local} J, i, Bi, u:
J := CompJ(PiU, m):
\textbf{for} i \textbf{to} n \textbf{do}  Bi[i] := CompBi(PiU, PiV, i, n, J) \textbf{end do}:
Bi[n+1], u[n+1] := \{\}, []:
\textbf{return} J, Bi, u:
\textbf{end proc}:
\end{alltt}

\begin{alltt}
### Identification of an optimal policy (at random) ###
# \textbf{Inputs}: r::table; i::integer, index of the decision variable,
# t::integer, number of random draws
#\textbf{Outputs}: maxi::vector, optimal decisions
\end{alltt}

\begin{alltt}
Maximize := \textbf{proc}(r, i, t) \textbf{local} maxi, l: 
maxi := Vector(t, 0):
\textbf{for} l \textbf{to} t \textbf{do} maxi[l] := RandomTools[Generate](integer(range = 1 .. r[i])) \textbf{end do}:
\textbf{return} maxi:
\textbf{end proc}:
\end{alltt}

\subsection{EU duplications}
\label{b2}
\begin{alltt}
### EUDuplication of a utility vector and an EU vector ###
# \textbf{Inputs}: u::table; psi::table; j::integer; PiV::table; PiU::table;
# r::table, size of the decision and sample spaces; Bi::table; J::list
# \textbf{Outputs}: utemp::list, EUDuplicated version of u;
# psitemp::list, EUDuplicated version of psi
\end{alltt}

\begin{alltt}
EUDuplicationPsi := \textbf{proc}(u, psi, j, PiV, PiU, r, Bi, J)
\textbf{local} i, uprime, psip, psit, utemp, x, sx, y, l, z: 
i := max(PiU[j]):
uprime, psip, psit, utemp := [], [], psi[j], u[i+1]: 
\textbf{for} x \textbf{from} max(Bi[i+1], PiU[j]) \textbf{by} -1 \textbf{to} 1 \textbf{do}
 \textbf{if} member(x, (PiU[j] \textbf{union} Bi[i+1]) \textbf{minus} (PiU[j] \textbf{intersect} Bi[i+1])) \textbf{then}
  sx := 1:
  \textbf{for} y \textbf{from} x+1 \textbf{to} max(Bi[i+1], PiU[j]) \textbf{do}
   \textbf{if} member(y, \textbf{union}(Bi[i+1], PiU[j])) \textbf{then} sx := sx*r[y] \textbf{end if}
  \textbf{end do}:
  \textbf{if} member(x, Bi[i+1]) \textbf{then for} l \textbf{to} Size(psit)[2]/sx \textbf{do} \textbf{for} z \textbf{to} r[x] \textbf{do}
   psip := [op(psip),op(convert(convert(psit,list)[(l-1)*sx+1..l*sx],list))]
  \textbf{end do  end do}:
  psit, psip := psip, []:
  \textbf{elif} member(x, PiU[j]) \textbf{then for} l \textbf{to} Size(utemp)[2]/sx \textbf{do for} z \textbf{to} r[x] \textbf{do} 
   uprime:=[op(uprime),op(convert(convert(utemp,list)[(l-1)*sx+1..l*sx],list))]
  \textbf{end do end do:} 
  utemp, uprime := uprime, []:
\textbf{end if end if end do}:
\textbf{return} utemp, psit:
\textbf{end proc}:
\end{alltt}

\begin{alltt}
### EUDuplication of a probability vector and an EU vector ###
# \textbf{Inputs}: u::table; p::table; i::integer; PiV::table; PiU::table; 
  r::table; Bi::table; J::list
# \textbf{Outputs}: utemp::list, EUDuplicated version of u;
# ptemp::list, EUDuplicated version of p
\end{alltt}

\begin{alltt}
EUDuplicationP := \textbf{proc} (u, p, i, PiV, PiU, r, Bi, J)
\textbf{local} uprime, pprime, ptemp, utemp, x, sx, y, l, z, Uni:
uprime, pprime, ptemp, utemp := [], [], p[i], u[i+1]:
\textbf{if} member(i, J) \textbf{then} member(i, J, 'j');
 Uni := (Bi[i+1] union PiV[i]) union PiU[j]:
 \textbf{for} x \textbf{from} max(Uni) \textbf{by} -1 \textbf{to} 1 \textbf{do}
  \textbf{if} member(x, Uni \textbf{minus} ((Bi[i+1] \textbf{union} PiU[j]) \textbf{intersect} (PiV[i] \textbf{union} {i}))) \textbf{then}
   sx := 1;
   \textbf{for} y \textbf{from} x+1 \textbf{to} max(Uni) \textbf{do if} member(y, Uni) \textbf{then}
    sx := sx*r[y] \textbf{end if end do}; 
   \textbf{if} member(x, \textbf{union}(Bi[i+1], PiU[j])) \textbf{then}
    \textbf{for} l \textbf{to} Size(ptemp)[2]/sx \textbf{do for} z \textbf{to} r[x] \textbf{do} 
     pprime:=[op(pprime),op(convert(convert(ptemp,Array)[(l-1)*sx+1..l*sx],list))]
    \textbf{end do end do}:
    ptemp, pprime := pprime, []:
   \textbf{elif} member(x, PiV[i]) \textbf{then for} l \textbf{to} Size(utemp)[2]/sx \textbf{do} 
    \textbf{for} z \textbf{to} r[x] \textbf{do} 
     uprime:=[op(uprime),op(convert(convert(utemp,Array)[(l-1)*sx+1..l*sx],list))] 
    \textbf{end do end do}:
    utemp, uprime := uprime, []:
\textbf{end if end if end do}:
\textbf{else for} x \textbf{from} max(Bi[i+1], PiV[i]) \textbf{by} -1 \textbf{to} 1 \textbf{do}
 \textbf{if} member(x,(Bi[i+1] \textbf{union} PiV[i])\textbf{minus}(Bi[i+1] \textbf{intersect} (PiV[i] \textbf{union} {i}))) \textbf{then}
  sx := 1;
  \textbf{for} y \textbf{from} x+1 \textbf{to} max(Bi[i+1],PiV[i]) \textbf{do if} member(y,Bi[i+1] \textbf{union} PiV[i]) \textbf{then}
   sx := sx*r[y]
  \textbf{end if end do}:
  \textbf{if} member(x, Bi[i+1]) \textbf{then for} l \textbf{to} Size(ptemp)[2]/sx \textbf{do for} z \textbf{to} r[x] \textbf{do} 
   pprime:=[op(pprime),op(convert(convert(ptemp, Array)[(l-1)*sx+1..l*sx],list))] 
  \textbf{end do} \textbf{end do}:
 ptemp, pprime := pprime, []:
\textbf{elif} member(x, PiV[i]) \textbf{then for} l \textbf{to} Size(utemp)[2]/sx \textbf{do for} z \textbf{to} r[x] \textbf{do}
 uprime:=[op(uprime),op(convert(convert(utemp,Array)[(l-1)*sx+1..l*sx],list))]
\textbf{end do end do};
utemp, uprime := uprime, []:
\textbf{end if end if end do end if}:
utemp, ptemp := convert(utemp,Array), convert(ptemp,Array):
\textbf{return} utemp,ptemp:
\textbf{end proc}:
\end{alltt}

\subsection{EU operations}
\label{b3}
\begin{alltt}
### EuMultiSum between an EU vector and a utility vector ###
# \textbf{Inputs}: u::table; psi::table; j::integer; PiV::table; PiU::table;
# r::table; Bi::table; J::list
# \textbf{Outputs}: ut::list, EU vector after an EUMultiSum  
\end{alltt}

\begin{alltt}
EUMultiSum := \textbf{proc}(u, psi, j, PiV, PiU, r, Bi, J) \textbf{local} i, uprime, psip, ut;
i := max(PiU[j]);
\textbf{if} j = Size(convert(PiU, list), 2) \textbf{then} ut := k[j]*~psi[j]:
\textbf{else} uprime, psip := EUDuplicationPsi(u, psi, j, PiV, PiU, r, Bi, J);
ut := h*~k[j]*~psip*~uprime +~ uprime +~ k[j]*~psip \textbf{end if}:
\textbf{return} ut:
\textbf{end proc}:
\end{alltt}

\begin{alltt}
### EUMarginalization over a sample space ###
# \textbf{Inputs}: u::table; p::table; i::integer; PiV::table; PiU::table; 
  r::table; Bi::table; J::list
# \textbf{Outputs}: ut::list, EU vector after EUMarginalization
\end{alltt}

\begin{alltt}
EUMarginalization := \textbf{proc} (u, p, i, PiV, PiU, r, Bi, J)
\textbf{local} uprime, pprime, ut, cols, l, k:
uprime, pprime := EUDuplicationP(u, p, i, PiV, PiU, r, Bi, J):
cols :=  Size(pprime)[2]:
ut := convert(ZeroVector(cols/r[i]), Array):
\textbf{for} l \textbf{to} (cols/r[i]) \textbf{do for} k \textbf{to} r[i] \textbf{do} 
 ut[l] := ut[l]+ pprime[r[i]*(l-1)+k]*uprime[r[i]*(l-1)+k]:
\textbf{end do end do}:
\textbf{return} ut:
\textbf{end proc}:
\end{alltt}

\begin{alltt}
### EUMaximization over a decision space ###
# \textbf{Inputs}: u::table; i::integer; r::table
# \textbf{Outputs}: u[i]::list, EU vector after EUMaximization
\end{alltt}

\begin{alltt}
EUMaximization := \textbf{proc}(u, i, r) \textbf{local} opt,l ;
opt := Maximize(r, i, Size(u[i+1])[2]/r[i]);
u[i] := Array([seq(0,l in 1..Size(opt)[1])]):
\textbf{for} l \textbf{to} Size(opt)[1] \textbf{do}
 u[i][l] := convert(u[i+1],Array)[r[i]*(l-1)+opt[l]] \textbf{end do};
\textbf{return} u[i]:
\textbf{end proc}:
\end{alltt}

\subsection{The symbolic algorithm}
\label{b4}
\begin{alltt}
### Symbolic evaluation algorithm for an MID ###
# \textbf{Inputs}: p::table; psi::table; PiV::table; PiU::table; n::integer; m::integer;
# De::set, index set of the decision variables;
# V::set, index set of the random variables; r::table
# \textbf{Output}: eu::table, EU vectors;
\end{alltt}

\begin{alltt}
SymbolicExpectedUtility := \textbf{proc}(p, psi, PiV, PiU, n, m, De, V, r)
\textbf{local} J, Bi, utemp, i, j, eu;
J, Bi, eu := Initialize(p, psi, PiV, PiU, n, m);
j := m;
\textbf{for} i \textbf{from} n \textbf{by} -1 \textbf{to} 1 \textbf{do if} j = 0 \textbf{then if} member(i, De) \textbf{then}
 eu[i] := EUMaximization(eu, i, r)
\textbf{else} eu[i] := EUMarginalization(eu, p, i, PiV, PiU, r, Bi, J) \textbf{end if};
\textbf{else} \textbf{if} J[j] = i then if member(i, De) \textbf{then}
 utemp[i+1] := EUMultiSum(eu, psi, j, PiV, PiU, r, Bi, J);
 eu[i] := EUMaximization(utemp, i, r)
\textbf{else}
 utemp[i+1] := EUMultiSum(eu, psi, j, PiV, PiU, r, Bi, J);
 eu[i] := EUMarginalization(utemp, p, i, PiV, PiU, r, Bi, J)
\textbf{end if};
j := j-1
\textbf{else if} member(i, De) \textbf{then} eu[i] := EUMaximization(eu, i, r)
\textbf{else} eu[i] := EUMarginalization(eu, p, i, PiV, PiU, r, Bi, J) \textbf{end if}
\textbf{end if end if end do};
\textbf{return} eu:
\textbf{end proc}:
\end{alltt}

\subsection{Implementation of the example}
\label{b5}
\noindent Consider the MID in Fig.~\ref{fig-ex} with $n=6$ variables (decision or random nodes) and $m=3$ utility nodes.
\begin{alltt}
 ### Definition of the MID ###
# number of variables and  utility nodes
    n := 6: m := 3: 
# V contains the indices of random nodes and De  those of the decision nodes
    V := {2, 3, 5, 6}: De := {1, 4}:
# Conditional probabilities
    p[6] := [p6111, p6011, p6101, p6001, p6110, p6010, p6100, p6000]:
    p[5] := [p5111, p5011, p5101, p5001, p5110, p5010, p5100, p5000]:
    p[3] := [p3111, p3011, p3101, p3001, p3110, p3010, p3100, p3000]:
    p[2] := [p211, p201, p210, p200]:
# Utility parameters
    psi[1] := [psi11, psi10]:
    psi[2] := [psi21, psi20]:
    psi[3] := [psi311, psi301, psi310, psi300]:
# Parents of random nodes
    PiV[2] := {1}:  PiV[3] := {1, 2}:  PiV[5] := {3, 4}: PiV[6] := {4, 5}:
# Parents of utility nodes
    PiU[1] := {3}: PiU[2] := {5}: PiU[3] := {4, 6}:
# Number of levels of the variables
    r[1] := 2: r[2] := 2: r[3] := 2: r[4] := 2: r[5] := 2: r[6] := 2: 
### Computation of the EU vectors ###
    eu := SymbolicExpectedUtility(p, psi, PiV, PiU, n, m, De, V, r):
\end{alltt}

\noindent Example of the output of \verb|eu[1]|:
\begin{alltt}
[((k[1]*psi11+h*k[1]*psi11*((k[2]*psi21+h*k[2]*psi21*
(p6010*psi300*k[3]+p6110*psi310*k[3])+k[3]*psi300*p6010
+k[3]*psi310*p6110)*p5101+(k[2]*psi20+h*k[2]*psi20*(p6000*psi300*k[3]
+p6100*psi310*k[3])+k[3]*psi300*p6000+k[3]*psi310*p6100)*p5001)+
(k[2]*psi21+h*k[2]*psi21*(p6010*psi300*k[3]+p6110*psi310*k[3])
+k[3]*psi300*p6010+k[3]*psi310*p6110)*p5101+
(k[2]*psi20+h*k[2]*psi20*(p6000*psi300*k[3]+p6100*psi310*k[3])
+k[3]*psi300*p6000+k[3]*psi310*p6100)*p5001)*p3110+
(k[1]*psi10+h*k[1]*psi10*((k[2]*psi21+h*k[2]*psi21*(p6011*psi301*k[3]
+p6111*psi311*k[3])+k[3]*psi301*p6011+k[3]*psi311*p6111)*p5110+
(k[2]*psi20+h*k[2]*psi20*(p6001*psi301*k[3]+p6101*psi311*k[3])
+k[3]*psi301*p6001+k[3]*psi311*p6101)*p5010)+
(k[2]*psi21+h*k[2]*psi21*(p6011*psi301*k[3]+p6111*psi311*k[3])
+k[3]*psi301*p6011+k[3]*psi311*p6111)*p5110+
(k[2]*psi20+h*k[2]*psi20*(p6001*psi301*k[3]+p6101*psi311*k[3])
+k[3]*psi301*p6001+k[3]*psi311*p6101)*p5010)*p3010)*p210+
((k[1]*psi11+h*k[1]*psi11*((k[2]*psi21+h*k[2]*psi21*(p6010*psi300*k[3]
+p6110*psi310*k[3])+k[3]*psi300*p6010+k[3]*psi310*p6110)*p5101+
(k[2]*psi20+h*k[2]*psi20*(p6000*psi300*k[3]+p6100*psi310*k[3])
+k[3]*psi300*p6000+k[3]*psi310*p6100)*p5001)+
(k[2]*psi21+h*k[2]*psi21*(p6010*psi300*k[3]+p6110*psi310*k[3])
+k[3]*psi300*p6010+k[3]*psi310*p6110)*p5101+
(k[2]*psi20+h*k[2]*psi20*(p6000*psi300*k[3]+p6100*psi310*k[3])
+k[3]*psi300*p6000+k[3]*psi310*p6100)*p5001)*p3100+
(k[1]*psi10+h*k[1]*psi10*((k[2]*psi21+h*k[2]*psi21*
(p6011*psi301*k[3]+p6111*psi311*k[3])+k[3]*psi301*p6011+k[3]*psi311*p6111)*p5110+
(k[2]*psi20+h*k[2]*psi20*(p6001*psi301*k[3]+p6101*psi311*k[3])
+k[3]*psi301*p6001+k[3]*psi311*p6101)*p5010)+
(k[2]*psi21+h*k[2]*psi21*(p6011*psi301*k[3]+p6111*psi311*k[3])
+k[3]*psi301*p6011+k[3]*psi311*p6111)*p5110+
(k[2]*psi20+h*k[2]*psi20*(p6001*psi301*k[3]+p6101*psi311*k[3])
+k[3]*psi301*p6001+k[3]*psi311*p6101)*p5010)*p3000)*p200]
\end{alltt}
%
%

\end{document}